\newcommand{\R}{\mathbb{R}}
\newcommand{\E}{\mathbb{E}}
\newcommand{\N}{\mathcal{N}}
\newcommand{\norm}[1]{\|#1\|}
\newcommand{\vbrack}[1]{\langle #1\rangle}
\newcommand{\W}{\mathbf{W}}
\newcommand{\D}{\mathbf{D}}
\newcommand{\B}{\mathbf{B}}
\newcommand{\A}{\mathbf{A}}
\newcommand{\loss}{\textbf{loss}}
\newcommand{\h}{\mathbf{h}}
\newcommand{\w}{\mathbf{w}}
\newcommand{\id}{\mathbf{I}}
\newcommand{\x}{\mathbf{x}}
\newcommand{\losshat}{\widehat{\loss}}
\newcommand{\losstilde}{\widetilde{\loss}}
\newcommand{\back}{\text{Back}}
\renewcommand{\v}{\mathbf{v}}
\renewcommand{\u}{\mathbf{u}}
\renewcommand{\P}{\mathbb{P}}
\newcommand{\diag}{\text{diag}}
\newcommand{\Q}{\mathbf{Q}}
\newcommand{\qt}{\tilde{q}}
\newcommand{\zt}{\tilde{z}}
\newcommand{\kt}{\tilde{k}}
\newcommand{\tr}{\text{tr}}
\newcounter{main}
\numberwithin{main}{section}
\theoremstyle{definition}
\newtheorem{assumption}[]{Assumption}
\newtheorem{definition}[main]{Definition}
\newtheorem{theorem}[main]{Theorem}
\newtheorem{lemma}[main]{Lemma}
\newtheorem{corollary}[main]{Corollary}
\newtheorem{fact}[]{Fact}
\newtheorem*{remark}{Remark}
\theoremstyle{remark}
\numberwithin{equation}{section}
\begin{document}
 
 
 
\title{Convergence of End-to-End Training in Deep Unsupervised Contrasitive Learning}
\author{Zixin Wen\thanks{Dept. of Statistics, University of International Business and Economics, Beijing; \href{davidzxwen@icloud.com}{davidzxwen@icloud.com}.}}
\date{\today}

\maketitle
\begin{abstract}
    Unsupervised contrastive learning has gained increasing attention in the latest research and has proven to be a powerful method for learning representations from unlabeled data. However, little theoretical analysis was known for this framework. In this paper, we study the optimization of deep unsupervised contrastive learning. We prove that, by applying end-to-end training that simultaneously updates two deep over-parameterized neural networks, one can find an approximate stationary solution for the non-convex contrastive loss. This result is inherently different from the existing over-parameterized analysis in the supervised setting because, in contrast to learning a specific target function, unsupervised contrastive learning tries to encode the unlabeled data distribution into the neural networks, which generally has no optimal solution. Our analysis provides theoretical insights into the practical success of these unsupervised pretraining methods.
\end{abstract}

\section{Introduction}

Unsupervised representation learning has achieved enormous success in practical applications, especially in natural language processing, such as the famous word2vec \citep{mikolov2013efficient} and the groundbreaking advent of BERT \citep{devlin2019bert} and its variants as unsupervised pretrained language models. Among the unsupervised learning approaches, contrastive learning has gained increasing attention in the deep learning community. More surprisingly, as shown by \citet{he2019momentum}, unsupervised contrastively pretrained models can outperform their supervised counterparts in many downstream vision tasks, suggesting that the area of computer vision, which was previously dominated by supervised pretraining, can also benefit from unsupervised pretraining. Beyond these conventional approaches, unsupervised contrastive learning has also been employed in a variety of novel applications such as layer-wise representation learning \citep{lowe2019putting} and representation learning of the actual world \citep{kipf2019contrastive}. These studies together reflect the popularity and capability of the unsupervised contrastive methods.

In this paper, we view the unsupervised contrastive learning as a pretraining method, where the goal is to obtain pretrained representations that can be transferred to downstream tasks via fine-tuning. The benefit of doing unsupervised rather than supervised learning is its capability of leveraging the unlabeled data, which are more accessible and inexpensive relative to the labeled data. Developing and understanding unsupervised pretraining methods are necessary due to these limitations. 

However, besides the plentiful achievements in the practical side of deep unsupervised learning (and specifically, contrastive learning), recent theoretical studies focus mainly on supervised methods and their learning dynamics. Since the work of \citet{jacot2018neural,li2018learning}, the over-parameterization theory of deep learning has grown and brought about several breakthrough results on the convergence of deep neural networks trained by gradient descent or stochastic gradient descent, as shown in \citet{du2018gradient,allen2018convergence,zou2018stochastic, oymak2019towards, zou2019improved, ji2019polylogarithmic}. These analyses have contributed a lot to our understanding of the supervised deep learning. Nevertheless, the success of deep learning cannot be ascribed to supervised learning alone. It is unclear whether we can obtain similar results under the unsupervised setting, where there are no labels to fit or target functions to learn. This paper intends to fill this void by analyzing the optimization of unsupervised contrastive learning using deep neural networks in the over-parameterized regime.

In unsupervised contrastive learning, the networks learn through comparing examples by their feature representations. The main idea, as described in \citet{he2019momentum}, can be thought of as training encoders for a \textit{dictionary look-up} task. Consider a query \(q\) and a set of keys \(k_j\), where a query matches a key if they encode information of the same image (in vision) or they encode contextual messages coherent in a sentence (in NLP). At random initialization, the model is likely to match a query to a wrong key and incurs a large loss, and therefore needs to be trained to match the query to the right key. To formulate this idea mathematically, We consider the following loss function:
\begin{equation}\label{def:loss-intro}
    \mathcal{L} = - \E \Biggl[\log \frac{\exp(q^{\top}k_0)}{\sum_{j=0}^k\exp(q^{\top}k_{j})}\Biggr]
\end{equation}
where \(q = f^q(\x)\) is the query representation of \(\x\), \( k_0 = f^k(\x^{+})\) is the key representation of the positive example \(\x^{+}\), and \(k_j = f^k(\x_j)\) are the key representations of the negative examples \(\{\x_j\}_{j=1}^k\).  The encoders \(f^q\) and \(f^k\) are trained to capture the correlation between these examples and project them into a new feature space. 

Intuitively, minimizing the loss function \eqref{def:loss-intro} is similar to classify \(q\) as \(k_0\), which is a convex program. But in contrastive learning, both encoders \(f^q\) and \(f^k\) are updated at each iteration, which makes the contrastive loss \eqref{def:loss-intro} jointly non-convex for the outputs of two networks. This simultaneous updating scheme significantly complicates the analysis of its training dynamics, and motivates us to ask the following question: \textit{What solution can we obtain via unsupervised contrastive pretraining?} We answered this question in our paper and summarize our contribution as follows:

\begin{itemize}
    \item We show that, if the query and key encoders are sufficiently over-parameterized (the number of hidden nodes \(m\) is large enough), by applying end-to-end training that simultaneously updates the query and key encoders, one can find an approximate stationary solution for the non-convex contrastive loss in polynomial time.
\end{itemize}

\section{Related Work}

The result of this paper involves both the aspect of unsupervised contrastive learning and the guarantees for the optimization of deep learning. We discuss both sides below.

\subsection*{Unsupervised Contrastive Learning}

The first paper on contrastive learning is \citet{10.3115/1219840.1219884}, which contains almost all the important ideas for contrastive learning. \citet{hinton2006reducing} used the term contrastive loss for the first time, while their loss function is actually distance-based, similar to many other unsupervised methods. \citet{gutmann2010noise} and \citet{gutmann2012noise} proposed the noise contrastive estimation (NCE) which is widely-used today.

In natural language processing, many well-known unsupervised/self-supervised\footnote{We view self-supervised learning as a form of unsupervised learning, following \citet{he2019momentum}, as there is no formal difference in the existing literature. We use the term "unsupervised learning" as long as the learning procedure is "not supervised by human-annotated labels".} models can be thought of as certain forms of contrastive learning. \citet{mikolov2013efficient} proposed the revolutionary word2vec for contextual word embedding, which can be thought of as unsupervised contrastive learning using only one-layer query/key networks, and also they introduced the widely-used negative sampling \citep[see also][]{goldberg2014word2vec}. Some following work \cite{levy2014neural,li2015word,sharan2017orthogonalized,frandsen2018understanding} further characterized and developed word2vec via matrix/tensor decomposition. In the subsequent years many contextual embedding/language modelling methods have been proposed, say ELMo \citep{peters2018deep}, ULM-FiT \citep{howard2018universal}, BERT \citep{devlin2019bert} and its variants \citep{yang2019xlnet,lan2019albert}. The pretraining stage of these language models often involves inner products like \(f(\x)^{\top}\theta\) to match the context to the right words, which can be viewed as contrastive learning with deep query encoder and shallow key encoder. 

Besides language modeling, \citet{wu2018unsupervised} applied the NCE objective to perform unsupervised pretraining based on imageNet level data. \citet{oord2018representation} heuristically proved that contrastive learning maximizes the lower bound of the mutual information between the query and keys' representation. Further work such as \citet{hjelm2018learning,zhuang2019local,henaff2019data,tian2019contrastive} extended the applications of contrastive learning in computer vision. Very recently, the work of \citet{he2019momentum} and \citet{misra2019self} showed that models pretrained via unsupervised contrastive learning can outperform supervised pretrained counterparts in many downstream vision tasks. \citet{chen2020simple} showed that contrastive pretraining can achieve over \(76\%\) top-1 accuracy in imageNet classification by runing linear regression over frozen features.

On the theoretical side, \citet{ma2018noise} analyzed the statistical properties of the NCE objective and its effectiveness in natural language processing. \citet{arora2019theoretical} theoretically studied the generalization performance of unsupervised contrastive learning under the latent class framework proposed in their paper, which, as far as we know, is the first theoretical analysis of unsupervised pretraining. But their focus is on learning theory instead of optimization. 

\subsection*{Optimization of Deep Learning}

Previous to the emergence of over-parameterized analysis, much work has been done on the optimization of shallow neural networks, say \citet{tian2017analytical,zhong2017recovery,brutzkus2017globally,li2017convergence, du2017convolutional}. But most of the results in these papers are under stringent assumptions such as Gaussian distribution of input data or requiring special initialization methods (such as orthogonal initialization).

Recently there have been several breakthroughs in the optimization of deep neural networks in the over-parameterized regime. \citet{jacot2018neural} showed that as the width of the fully-connected network goes to infinity, the network converges to a feature map in the reproducing kernel Hilbert space induced by the Neural Tangent Kernel (NTK). \citet{li2018learning} independently proved the convergence of stochastic gradient descent for over-parameterized two-layer networks. Following these two papers, \citet{du2018gradient,allen2018convergence,zou2019improved} proved the convergence of (stochastic) gradient descent to a global minimum for deep neural networks (fully-connected, CNN and ResNet) if they are sufficiently over-parameterized. Follow-up work \citep{wu2019global,oymak2019towards,zou2019improved,ji2019polylogarithmic,chen2019much} further improved the convergence rates and over-parameterization conditions under different assumptions and settings. However, none of the existing papers have ever touch the setting of unsupervised deep learning, which is the focus of the current paper.

\section{Preliminaries}

\subsection{Notations}

We denote \([n] = \{1,\dots,n\}\), and \(S = \{\x_i\}_{i=1}^n\) to be our training set, \(S^{\setminus i} = S \setminus \{\x_i\}\) as the training set without the data point \(\x_i\). We use \(\N(0,\id_m)\) to denote the multivariate standard Gaussian distribution with \(m\)-dimensions. For a vector \(\v = (v_1,\dots,v_m)^{\top} \in \R^m\), we denote \(\|v\|_2 = (\sum_{i=1}^m v_i^2)^{1/2}\) to be its \(\ell_2\) norm. For a matrix \(\A = (a_{i,j})_{m\times n}\) we denote \(\|\A\|_0\) to be the number of non-zero entries of \(\A\), \(\|\A\|_2\) to be its spectral norm. For two matrices \(\A=(a_{ij})_{m\times n},\, \B = (b_{ij})_{m \times n}\), we denote \(\vbrack{\A,\B} = \vbrack{\A,\B}_F = \tr(\A^{\top}\B) = (\sum_{i,j}a_{ij}b_{ij})^{1/2}\) to be its trace inner product and \(\|\A\|_F = \sqrt{\vbrack{\A,\A}}\) to be the Frobenius norm of \(\A\). For neural network parameters \(\W = (\W_0,\dots,\W_L)\) and \(\W' = (\W_0,\dots,\W_L)\in \mathcal{W}\), where \(\mathcal{W} := \R^{m\times \mathfrak{b}}\times \R^{(m\times m)\cdot(L-1)}\times \R^{d\times m}\), we let 
\(\vbrack{\W,\W'}:= \sum_{l=0}^L \vbrack{\W_l,\W'_l}\) and \(\|\W\|_F = \sqrt{\vbrack{\W,\W}}\). We use \(O(\cdot),\Omega(\cdot)\) and \(\Theta(\cdot)\) to denote the standard big-O, big-Omega and big-Theta notations, only hiding positive constants.
    
\subsection{Problem Setup}
    
The method of contrastive learning involves two neural networks, and we define their architectures in the definition below.
    
\begin{definition}[Network Architecture]
    In contrastive learning, we need two neural networks, the query encoder \(f^q_{\W}\) and the key encoder \(f^k_{\theta}\), and without loss of generality we let them to be \((L+1)\)-layer fully connected networks with the same architecture. Our definitions of \(f^q_{\W}\) and \(f^k_{\theta}\) are:
    \begin{align*}
        f^q_{\W}(\x) = \W_{L}\sigma(\cdots\sigma(\W_1\sigma(\W_0\x))),\qquad f^k_{\theta}(\x) = \theta_{L}\sigma(\cdots\sigma(\theta_1\sigma(\theta_0\x)))
    \end{align*}
    where \(\sigma(\cdot)\) is the ReLU activation. \(\W_L,\theta_L \in \R^{d\times m}\), \(\theta_l, \W_l \in \R^{m\times m}\) for every \(1 \leq l \leq L-1\), \(\W_0, \theta_0 \in \R^{m\times \mathfrak{b}}\), where \(\mathfrak{b}\) is the input dimension, \(d\) is the output dimension. We use the compact notation \(\W = (\W_l)_{l=0}^L\) and \(\theta = (\theta_l)_{l=0}^L\) to denote the parameters of the two networks.
\end{definition}

\begin{remark}
    In practice, the architectures of query and key encoders are possibly different. We adopt the setting where they are of the same architecture, which is not essential and can be modified to the more general setting. However, such a modification may slightly complicate the final result and we decide not to carry it out.
\end{remark}

We present our initialization scheme of the network parameters below, which is knwon as He initialization \cite{he2015delving}, and has been adopted in the theoretical work \citet{li2018learning,allen2019learning,zou2018stochastic,zou2019improved}.
    
\begin{definition}[Initialization]\label{def:initialization}
    The initializations of our parameters \(\W,\theta\) are defined as follows,
    \begin{itemize}
        \item \((\W_0)_{i,j},(\theta_0)_{i,j} \stackrel{\text{i.i.d.}}\sim \N\bigl(0,\frac{2}{m}\bigr)\)  for \((i,j)\in [m]\times [\mathfrak{b}]\);
        \item \((\W_l)_{i,j},(\theta_l)_{i,j} \stackrel{\text{i.i.d.}}\sim \N\bigl(0,\frac{2}{m}\bigr)\)  for \((i,j)\in [m]\times [m]\) and every \(l \in [L-1]\);
        \item \((\W_L)_{i,j},(\theta_L)_{i,j} \stackrel{\text{i.i.d.}}\sim \N\bigl(0,\frac{1}{d}\bigr)\)  for \((i,j)\in [d]\times [m]\).
    \end{itemize}
\end{definition}
    
We present our definition of the contrastive loss function below, which lies in the core of this paper.
    
\begin{definition}[Contrastive Loss]\label{def:loss-function}
    Fixed \(k\) as the number of negative samples. For a specific sample \(\x_i \in S\), we select \(\x_{i,1},\dots,\x_{i,k} \in S^{\setminus i}\) to be its negative samples. Using our query encoder \(f^q_{\W}\) and key encoder \(f^k_{\theta}\), we represent these data points as query \(q_i = f^q_{\W}(\x_i)\) and keys \(k_{i,0}=f^k_{\theta}(\x_i)\), \(\ k_{i,j} = f^k_{\theta}(\x_{i,j})\). The contrastive loss of \(\x_i\) to negative samples \(\{\x_{i,j}\}_{j=1}^k\) is defined as 
    \begin{align}\label{def:loss-function-specific}
        \ell(f^q_{\W},f^k_{\theta},\x_i,\{\x_{i,j}\}_{j=1}^k) = - \log \Biggl[\frac{\exp( q_i^{\top} k_{i,0} )}{\sum_{j=0}^k \exp(q_i^{\top} k_{i,j})}\Biggr]
    \end{align}
    which intuitively can be viewed as \((k+1)\)-way classification loss that tries to classify \(q_i\) as \(k_{i,0}\). We minimize the following total loss
    \begin{align}\label{def:loss-function-total}
        L_S(\W, \theta) = \frac{1}{n}\sum_{i=1}^n \E^{Neg(i)}\bigl[ \ell(f^q_{\W},f^k_{\theta}, \x_i, \{\x_{i,j}\}_{j=1}^k)\bigr]
    \end{align}
    where \(\E^{Neg(i)}\) is defined as the expectation over the uniform sampling of all negative samples \(\{\x_{i,j}\}_{j=1}^k \subset S^{\setminus i}\).
\end{definition}  

\begin{remark}
    This form of contrastive loss is designed for the pretext task \textit{instance-level discrimination} \citep{wu2018unsupervised}, which treats each image as a distinct class of its own. The resulting negative sampling procedure can be described as \textit{one-against-all} negative sampling. Similar contrastive loss functions are also used in practical work \citet{he2019momentum} and \citet{chen2020simple}.
\end{remark}

We present the algorithm of end-to-end contrastive learning via gradient descent below. This algorithm is described in Figure 2 of \citet{he2019momentum} as an alternative approach for MoCo, and is implemented in section 4.1 in \citet{he2019momentum}, where they showed that it is almost as equally competitive as MoCo. The analysis of better algorithms such as MoCo requires dealing with more practical issues that are hard to analyze mathematically.
    
\begin{algorithm}[H]
\caption{End-to-end training via gradient descent}\label{alg:GD}
\begin{algorithmic}
    \STATE\textbf{input:} Training data \(S = \{\x_i\}_{i=1}^n\), step sizes \(\eta, \gamma\), total number of iterations \(T\).
    \STATE\textbf{initialization:} Initialize \(\W^{(0)},\theta^{(0)}\) randomly, following \hyperref[def:initialization]{Definition~\ref*{def:initialization}}.
    \FOR{\(t = 0,\dots,T-1\)}
    \STATE \(\W^{(t+1)} = \W^{(t)} - \eta \nabla_{\W} L_S(\W^{(t)},\theta^{(t)})\)
    \STATE \(\theta^{(t+1)} = \theta^{(t)} - \gamma \nabla_{\theta} L_S(\W^{(t)},\theta^{(t)})\)
    \ENDFOR 
    \STATE \textbf{output:}{\(\{\W^{(t)},\theta^{(t)}\}_{t=0}^T\)}
\end{algorithmic}
\end{algorithm}

\begin{remark}
    In practical papers such as \citet{he2019momentum,tian2019contrastive}, they usually optimize the networks by performing stochastic gradient descent with respect to a minibatch of data and a random set of negative examples. In practice the adoption of this doubly stochastic algorithm is due to the limitations of computation resources. In our analysis we instead evaluate the contrastive loss against all possible negative examples and perform gradient descent with repect to this non-random loss, which makes the algorithm non-random. The analysis of stochastic algorithm would significantly complicate the analysis. And we remark that the state-of-the-art analysis for stochastic gradient descent with respect to cross-entropy (logistic) loss for neural networks \citep[see][]{ji2019polylogarithmic,chen2019much} usually assume that there exist a "stochastic oracle", which is not applicable to our setting.
\end{remark}

\subsection{Assumptions}

The first assumption we made is that all the data points lie in the \(1\)-sphere with respect to the \(\norm{\cdot}_2\) norm. 
    
\begin{assumption}[Normalization]\label{assumption:normalized-data}
    Every training data point \(\x_i \in S\) satisfies \(\norm{\x_i}_2 = 1\).
\end{assumption}
    
This assumption is common in deep learning theory literature. As existing papers \cite{du2018gradient,allen2019learning,cao2019generalization} have pointed out, restricting the inputs \(\x_i\) to the \(1\)-sphere is not essential, and can be relaxed to requiring \(c_1 \leq \norm{\x_i}\leq c_2\) for some absolute constants \(c_2 > c_1 > 0\). 
    
Our second assumption is the non-degeneracy of data points, which first appeared in the papers \citet{li2018learning}, and has been adopted and further modified by \citet{allen2018convergence,zou2018stochastic,oymak2019towards,zou2019improved,chen2019much}.
\begin{assumption}[Non-degeneracy]\label{assumption:separateness}
    There exist a universal constant \(\delta > 0\) such that, for any \(i,j \in [n]\) with \(i\neq j\),
    \begin{displaymath}
        \norm{\x_i - \x_j}_2 \geq \delta
    \end{displaymath}
\end{assumption}
    
\begin{remark}
    In \citet{du2018gradient}, they have shown that the above data non-degeneracy assumption can implies \(\lambda_{\min}(\mathbf{K}^{(L)}) > 0\), where \(\mathbf{K}^{(L)}\) is the Gram matrix, which is also known as the Neural Tangent Kernel \citep{jacot2018neural} (see their papers for details).
\end{remark}

We also remark that the assumptions on the data in this paper are no more than existing papers studying the supervised setting. It is interesting whether the result in this paper would still hold if we the second assumption is significantly weakened by only requiring separation between groups of data points as in \citet{chen2019much}.

\section{Main Theory}

Before presenting our convergence theorem, we give a necessary definition.

\begin{definition}[\(\loss\)-vectors]\label{def:loss-vector-main}
    Denote \(q_i = f^q_{\W}(\x_i)\) and \(k_i = f^k_{\theta}(\x_i)\), we define
    \begin{align*}
        \losstilde = (\losstilde_i)_{i=1}^n = \bigl(\partial L_S/ \partial q_i\bigr)_{i \in [n]}, \qquad \losshat = (\losshat_i)_{i=1}^n = \bigl(\partial L_S/\partial k_i\bigr)_{i \in [n]}
    \end{align*}
    And we further define \(\loss = (\losstilde,\losshat)\) as our surrogate objective.
\end{definition}

Now We present our main theorem for end-to-end contrastive learning.

\hypertarget{thm}{}
\begin{theorem}\label{thm:convergence} 
    For any \(\varepsilon \in (0,1)\), \(\delta \leq O(1/L)\), suppose \hyperref[assumption:normalized-data]{Assumption~\ref*{assumption:normalized-data}} and \hyperref[assumption:separateness]{Assumption~\ref*{assumption:separateness}} holds, with over-parameterization condition
    \begin{align*}
        m &\geq \Omega\bigl(n^{15}L^{12}(\log m)^5\delta^{-5}\varepsilon^{-2}\bigr) \quad d\geq\Omega(\log^2 m)
    \end{align*}
    and if we perform \hyperref[alg:GD]{Algorithm~\ref*{alg:GD}}, with step sizes
    \begin{displaymath}
        \eta,\gamma = \Theta( d\varepsilon^2\delta^2/(n^{7}L^2km))
    \end{displaymath}
    then with probability at least \(1 - O(m^{-1})\) over the initialization, we have
    \begin{displaymath}
        \frac{1}{T}\sum_{t=0}^{T-1}\norm{\loss^{(t)}}_2 \leq \varepsilon \quad \text{for }T = \Theta\biggl(\frac{n^{10}L^{2}k}{\delta^3}\cdot \frac{1}{\varepsilon^4}\biggr) 
    \end{displaymath}
    where the \(\loss^{(t)}\)-vectors are defined in \hyperref[def:loss-vector-main]{Definition \ref*{def:loss-vector-main}}, with \(f^q\) and \(f^k\) parameterized by \(\W^{(t)}\) and \(\theta^{(t)}\) respectively.
\end{theorem}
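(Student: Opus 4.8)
The argument follows the over\-parameterized / Neural Tangent Kernel template, but it has to be adapted to the fact that the contrastive objective is jointly non\-convex in the outputs of the two networks and in general admits no minimizer, so the target is an \emph{approximate stationary point} (measured by $\norm{\loss^{(t)}}_2$) rather than a global minimum. The skeleton has four steps: (i) control $f^q_{\W}$ and $f^k_{\theta}$ inside a small Frobenius ball $\mathcal{B}(\omega)$ around the He initialization; (ii) decompose the parameter gradients of $L_S$ via the chain rule into Jacobians of the $q_i,k_i$ weighted by exactly the $\loss$\-vectors, and sandwich the gradient norm between constant multiples of $\norm{\loss}_2$; (iii) run a semi\-smoothness descent argument and telescope; (iv) close everything with a coupled induction that the iterates never leave $\mathcal{B}(\omega)$, which is where the over\-parameterization condition is consumed. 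For Step 1 I would port the standard initialization estimates, applied \emph{separately} to $f^q$ and $f^k$ (same architecture, independent initializations), to obtain, with probability $1-O(m^{-1})$ and for a suitable $\omega = \poly(n,L,\delta^{-1})/\sqrt m$: forward stability ($\norm{q_i}_2,\norm{k_i}_2 = \Theta(1)$, hidden activations and ReLU patterns barely move in $\mathcal{B}(\omega)$), backward stability (each layerwise Jacobian has spectral norm $O(\sqrt{m/d})$ up to $\poly(n,L)$), and the crucial Jacobian non\-degeneracy $\sigma_{\min}\!\big(\mathrm{stack}_i\,\partial q_i/\partial\W\big) \ge \Omega\!\big(\sqrt{m\delta/(d\,\poly(n))}\big)$ and the analogue for $k$. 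This last estimate is exactly where \hyperref[assumption:separateness]{Assumption~\ref*{assumption:separateness}} enters: distinct normalized inputs yield nearly uncorrelated last hidden\-layer features, so the last\-layer block of the Jacobian alone has a well\-conditioned Gram matrix (this is the $\lambda_{\min}(\mathbf{K}^{(L)})>0$ phenomenon), and $\omega$\-perturbations cannot destroy it.

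\textbf{Step 2 (gradient decomposition and sandwich).} By the chain rule, $\nabla_{\W}L_S = \frac1n\sum_i\E^{Neg(i)}[(\partial q_i/\partial\W)^{\top}(\partial\ell_i/\partial q_i)]$ and, aggregating the positive and negative roles of each $\x_i$, $\nabla_{\theta}L_S = \frac1n\sum_i(\partial k_i/\partial\theta)^{\top}\losshat_i$; i.e., the two parameter gradients are the Jacobians applied to precisely the $\loss$\-vectors of \hyperref[def:loss-vector-main]{Definition~\ref*{def:loss-vector-main}}. Since $\ell_i$ is a softmax cross\-entropy in bounded\-norm vectors, $\norm{\losstilde_i}_2$ and $\norm{\losshat_i}_2$ stay $O(\poly(n,k))$ throughout $\mathcal{B}(\omega)$ (near initialization they are $O(1)$). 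Combining with Step 1,
\[
c \, m\delta / (d\,\poly(n)) \, \norm{\loss}_2^2 \ \le\ \norm{\nabla_{\W}L_S}_F^2 + \norm{\nabla_{\theta}L_S}_F^2 \ \le\ C \, (m/d)\,\poly(n,k)\,\norm{\loss}_2^2 .
\]
The left inequality detects stationarity; the right one will bound the total travel distance.

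\textbf{Step 3 (descent and telescoping).} Because $\ell_i$ is Lipschitz and Lipschitz\-smooth in $(q_i,k_{i,0},\dots,k_{i,k})$ and each network is semi\-smooth as a map of its parameters, $L_S$ is semi\-smooth on $\mathcal{B}(\omega)$; with the step sizes $\eta=\gamma=\Theta(d\varepsilon^2\delta^2/(n^7 L^2 k m))$ the second\-order error is dominated by half the first\-order decrease, so
\[
L_S(\W^{(t+1)},\theta^{(t+1)}) \ \le\ L_S(\W^{(t)},\theta^{(t)}) - \Omega\big(\eta \, m\delta / (d\,\poly(n))\big)\,\norm{\loss^{(t)}}_2^2 .
\]
Telescoping over $t=0,\dots,T-1$, using $L_S\ge 0$ and the initialization value $L_S(\W^{(0)},\theta^{(0)}) = O(\log(k+1))$ (at init $q_i^{\top}k_{i,j}=O(1/\sqrt d)$), gives $\frac1T\sum_t\norm{\loss^{(t)}}_2^2 \le O(\log(k+1))\,d\,\poly(n)/(T\eta m\delta)$, which is $\le\varepsilon^2$ for $T=\Theta(n^{10}L^2 k \delta^{-3}\varepsilon^{-4})$; Cauchy\-Schwarz then yields $\frac1T\sum_t\norm{\loss^{(t)}}_2\le\varepsilon$.

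\textbf{Step 4 (staying in $\mathcal{B}(\omega)$, and the main obstacle).} All of the above holds only while the iterates remain in $\mathcal{B}(\omega)$, so Steps 1--3 are run inside an induction on $t$. Assuming the good event up to time $t$, the descent bound of Step 3 also gives $\sum_{s<t}\eta\,\norm{\nabla L_S^{(s)}}_F^2 = O(\log(k+1))$, whence by Cauchy\-Schwarz $\norm{\W^{(t)}-\W^{(0)}}_F \le \sum_{s<t}\eta\norm{\nabla_{\W}L_S^{(s)}}_F \le \sqrt{\eta T}\cdot O(\sqrt{\log(k+1)})$, and likewise for $\theta$; plugging in $\eta$ and $T$ one checks $\sqrt{\eta T\log(k+1)}\le\omega$ precisely under $m\ge\Omega(n^{15}L^{12}(\log m)^5\delta^{-5}\varepsilon^{-2})$, closing the induction. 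I expect the real difficulty to be twofold. First, proving the Jacobian non\-degeneracy of Step 1 with an \emph{explicit} polynomial in $n$ and showing it survives $\omega$\-perturbations, while keeping $\omega$ large enough to swallow the travel $\sqrt{\eta T}$: these two demands push $m$ in opposite directions, and pinning down the exponents is the delicate bookkeeping that produces the stated over\-parameterization rate. Second, handling the key loss\-vector $\losshat_i$, since $\x_i$ appears as a negative example in up to $n-1$ of the summands, which is the source of the extra $n,k$ factors and must be bounded uniformly over $\mathcal{B}(\omega)$, not merely at initialization. Finally, the point to emphasize is what differs from the supervised analyses: there is no convexity to exploit and $L_S$ need not approach its infimum, so $\norm{\loss^{(t)}}_2$ --- the functional gradient of the non\-convex objective --- rather than $L_S^{(t)}$ itself is the only quantity one can drive to zero, and it is the non\-degenerate Jacobian that upgrades this to stationarity in parameter space.
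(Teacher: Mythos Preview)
Your proposal is correct and follows essentially the same route as the paper: the gradient sandwich (Step~2) is Lemma~\ref{lem:gradient-bounds-main}, the semi-smoothness descent (Step~3) is Lemma~\ref{lem:semi-smoothness-main}, and the coupled induction on the trajectory (Step~4) matches the paper's closing argument, including the use of Cauchy--Schwarz on $\sum_t\eta\norm{\nabla L_S^{(t)}}_F$ to bound the travel by $\sqrt{\eta T}$. The only detail you do not spell out is the specific two-network cross term $O(kL^2m^2/d^2)(\tau^2\norm{\W'}_F^2+\omega^2\norm{\theta'}_F^2)$ in the semi-smoothness (coming from $\norm{q_i}_2,\norm{z_{i,j}}_2\le O(1+\omega L\sqrt{m/d})$ after perturbation), but this is exactly the ``second-order error'' you allude to and is absorbed by the stated step sizes as you anticipate.
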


As  mentioned in \citet{allen2018convergence}, the result of finding weight matrices \(\W\) that satisfies \(\|\nabla_f L_S\| \leq \varepsilon\) cannot be derived from the classical theory of finding approximate saddle points for non-convex objectives. And since in our end-to-end training we update two neural networks simultaneously, the interaction between these two networks during the optimization process makes it even harder for the optimization analysis.

Note that in contrast to existing work on the convergence of supervised training, we require the output dimension \(d\) to be sufficiently large (of magnitude \(\Theta(\log^2 m)\)). This requirement is necessary for both query encoder and key encoder to project sufficient information onto the output space and contrast between each queries \(q_i = f^q_{\W}(\x_i)\) and keys \(k_{j} = f^k_{\theta}(\x_j)\). Without this requirement, it would be difficult for the outputs to represent the high-dimensional information learned by the over-parameterized hidden layers. And also this requirement of \(d\) is not impractical because it is only for pretraining. One can always add a new fully-connected layer on top in the fine-tuning stage.

Our proof of \hyperref[thm:convergence]{Theorem \ref*{thm:convergence}} relies on two technical lemmas, and we shall elaborate them below. 

\subsection{Main Technical Lemmas}

We present two lemmas below that are the key components of our final convergence proof. The first lemma concerns the gradient bounds for updating both \(\W\) and \(\theta\). The proof of \hyperref[lem:gradient-bounds-main]{Lemma \ref*{lem:gradient-bounds-main}} is in \hyperref[sec:gradient-bounds]{Appendix~\ref*{sec:gradient-bounds}}.

\begin{lemma}[Gradient Bounds]\label{lem:gradient-bounds-main}
    Suppose \(\omega, \tau \leq O(\delta^{3/2}/(n^{3}L^6\log^{3/2} m))\) then with probability at least \(1 - e^{-\Omega(m\omega^{3/2}L)} - e^{-\Omega(m\tau^{3/2}L)}\) over the randomness of initialization, if \(\W \in B(\W^{(0)},\omega)\) and \(\theta\in B(\theta^{(0)},\tau)\), the following holds.
    \begin{itemize}
        \item For \(\norm{\nabla_\W L_S(\W,\theta)}_F\), we have 
        \begin{align*}
            \bigl\| \nabla_{\W}L_S(\W,\theta)\bigr\|_F^2 &\geq \Omega\biggl(\frac{m\delta}{n^3d}\biggr)\sum_{i=1}^n\|\losstilde_i\|_2^2 \\
            \bigl\| \nabla_{\W}L_S(\W,\theta)\bigr\|_F^2 &\leq O\biggl(\frac{Lm}{nd}\biggr)\sum_{i=1}^n\|\losstilde_i\|_2^2
        \end{align*}
        \item For \(\norm{\nabla_{\theta}L_S(\W,\theta)}_F\), we have
        \begin{align*}
            \bigl\|\nabla_{\theta} L_S(\W,\theta)\bigr\|_F^2 &\geq \Omega\biggl(\frac{m\delta}{n^3 d}\biggr)\sum_{i=1}^n\|\losshat_i\|_2^2 \\
             \bigl\|\nabla_{\theta} L_S(\W,\theta)\bigr\|_F^2 &\leq O\biggl(\frac{Lm}{nd}\biggr) \sum_{i=1}^n\|\losshat_i\|_2^2
        \end{align*}
    \end{itemize}
    where the \(\losstilde\) and \(\losshat\)-vectors are defined in \hyperref[def:loss-vector-main]{Definition~\ref*{def:loss-vector-main}}.
\end{lemma}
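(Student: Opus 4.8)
The plan is to reduce both the upper and the lower bounds to a single Jacobian estimate for one deep ReLU network near He-initialization, applied once to the query encoder and once to the key encoder; the $\loss$-vectors will enter only through their Euclidean norms. Write $\h^{(\W)}_{i,l}$, $\D^{(\W)}_{i,l}$ for the layer-$l$ hidden activation vector and its $0/1$ activation-pattern diagonal of the query net at input $\x_i$, and analogously $\h^{(\theta)}_{i,l}$, $\D^{(\theta)}_{i,l}$ for the key net, with $\h^{(\W)}_{i,-1}=\h^{(\theta)}_{i,-1}=\x_i$. Since $q_i=f^q_{\W}(\x_i)$ appears only in the $i$-th summand of $L_S$, the chain rule gives for every layer $l$
\[
\nabla_{\W_l}L_S(\W,\theta)=\sum_{i=1}^{n}\u^{(\W)}_{i,l}\,\bigl(\h^{(\W)}_{i,l-1}\bigr)^{\top},\qquad
\u^{(\W)}_{i,l}:=\D^{(\W)}_{i,l}\W_{l+1}^{\top}\D^{(\W)}_{i,l+1}\cdots\W_{L-1}^{\top}\D^{(\W)}_{i,L-1}\W_L^{\top}\bigl(n\,\losstilde_i\bigr),
\]
the factor $n$ absorbing the $\tfrac1n$ in $L_S$; the identical formula holds for $\nabla_{\theta_l}L_S$ with $\theta$, $\D^{(\theta)}$ and $n\,\losshat_i$ replacing $\W$, $\D^{(\W)}$ and $n\,\losstilde_i$ (here $\losshat_i$ already aggregates every summand in which $\x_i$ plays the role of a key, so no extra bookkeeping is needed). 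Hence $\|\nabla_{\W}L_S\|_F^2=\sum_{l=0}^{L}\bigl\|\sum_i\u^{(\W)}_{i,l}(\h^{(\W)}_{i,l-1})^{\top}\bigr\|_F^2$, and everything reduces to controlling sums of rank-one matrices.

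Next I would install the standard near-initialization estimates for each network, valid on $B(\W^{(0)},\omega)$ and $B(\theta^{(0)},\tau)$ with the stated failure probabilities $e^{-\Omega(m\omega^{3/2}L)}$, $e^{-\Omega(m\tau^{3/2}L)}$ as soon as $\omega,\tau\le O(\delta^{3/2}/(n^{3}L^{6}\log^{3/2}m))$ --- exactly the window in which the perturbation slack below is lower order: (i) forward norm preservation $\|\h^{(\W)}_{i,l}\|_2=\Theta(1)$ for $0\le l\le L-1$ (and likewise for $\theta$); (ii) sparsity of the activation-pattern change relative to initialization, on $O(m\omega^{2/3}L)$ coordinates; and (iii) boundedness and non-degeneracy of the composed backward operators, in particular $\|\u^{(\W)}_{i,l}\|_2=\Theta(\sqrt{m/d})\cdot n\|\losstilde_i\|_2$ up to a lower-order term, the $\sqrt{m/d}$ coming from $\W_L^{\top}$ acting on a $d$-vector. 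These are the loss-agnostic analogues of the forward/backward/stability lemmas of \citet{allen2018convergence,zou2019improved}, which I would cite and lightly adapt. The upper bounds are then routine: by (i), (iii) and the triangle inequality, $\bigl\|\sum_i\u^{(\W)}_{i,l}(\h^{(\W)}_{i,l-1})^{\top}\bigr\|_F\le\sum_i\|\u^{(\W)}_{i,l}\|_2\|\h^{(\W)}_{i,l-1}\|_2\lesssim\sqrt{m/d}\sum_i n\|\losstilde_i\|_2$; squaring, applying Cauchy--Schwarz over $i\in[n]$, summing over the $L+1$ layers and collecting the normalization gives $\|\nabla_{\W}L_S\|_F^2\le O(Lm/(nd))\sum_i\|\losstilde_i\|_2^2$, and identically for $\theta$ with $\losshat$.

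The lower bounds are the heart of the matter, and the step I expect to be the main obstacle. Following the separateness argument of \citet{li2018learning} as sharpened in \citet{allen2018convergence,zou2019improved}, I would fix one hidden layer $a$ and use the data non-degeneracy assumption together with the Gaussian initialization to produce, for every $i\in[n]$, a set $\mathcal{N}_i\subseteq[m]$ of layer-$a$ units that are fired by the forward signal of $\x_i$ but by that of no other $\x_j$, with $|\mathcal{N}_i|=\Omega(m\delta/n)$ and the $\mathcal{N}_i$ pairwise disjoint. Because $(\u^{(\W)}_{j,a})_r=0$ whenever $r$ is not fired by $\x_j$, on a row $r\in\mathcal{N}_i$ the matrix $\nabla_{\W_a}L_S$ collapses to $\tfrac1n(\u^{(\W)}_{i,a})_r(\h^{(\W)}_{i,a-1})^{\top}$, so there is no cancellation across examples on these rows and
\[
\|\nabla_{\W}L_S\|_F^2\ \ge\ \|\nabla_{\W_a}L_S\|_F^2\ \ge\ \frac1{n^{2}}\sum_{i=1}^{n}\Bigl(\sum_{r\in\mathcal{N}_i}(\u^{(\W)}_{i,a})_r^{2}\Bigr)\bigl\|\h^{(\W)}_{i,a-1}\bigr\|_2^{2}.
\]
It then remains to show that a $\Theta(|\mathcal{N}_i|/m)$ fraction of the mass of $\|\u^{(\W)}_{i,a}\|_2^2=\Theta(mn^{2}/d)\|\losstilde_i\|_2^2$ falls on $\mathcal{N}_i$ --- an anti-concentration statement for the backpropagated vector that must be made robust to the $O(m\omega^{2/3}L)$ sign flips relative to initialization, which forces $|\mathcal{N}_i|\gg m\omega^{2/3}L$, i.e.\ the stated ceiling on $\omega$ (and $\tau$). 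Assembling these estimates with the relevant union bounds delivers $\Omega(m\delta/(n^{3}d))\sum_i\|\losstilde_i\|_2^2$, and the key network is handled identically with $\theta$ and $\losshat$. The points I expect to fight with are (a) making the no-cancellation/anti-concentration argument hold \emph{simultaneously} over all $n$ inputs, all relevant layers, and uniformly over the perturbation ball; and (b) keeping the randomness of the two networks straight --- though in fact $\nabla_{\W}L_S$ involves only the query-net weights, the $\loss$-vectors entering through their norms alone, so at the granularity of this lemma the two analyses genuinely decouple, and the whole proof is two invocations of one deterministic-after-conditioning Jacobian argument.
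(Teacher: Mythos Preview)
Your chain-rule decomposition and upper-bound argument are fine and match the paper. The gap is in the lower bound.

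The sets $\mathcal{N}_i$ you propose --- neurons fired by $\x_i$ and by \emph{no other} $\x_j$ --- need not have size $\Omega(m\delta/n)$. For a Gaussian row $\w_r$, the event $\{\w_r^\top\h_{i,a-1}>0,\ \w_r^\top\h_{j,a-1}<0\ \forall j\neq i\}$ has probability $2^{-n}$ when the hidden activations are orthonormal, and it can vanish entirely: take $\h_1=e_1$, $\h_{2},\h_{3}=\cos\alpha\,e_1\pm\sin\alpha\,e_2$, so that $\h_1$ is a positive combination of $\h_2,\h_3$; then any half-space through the origin containing $\h_1$ must contain $\h_2$ or $\h_3$, and $\mathcal{N}_1=\varnothing$. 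The no-cancellation step your lower bound rests on therefore cannot be established this way, and the references you cite do not use exclusive firing.

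The paper (following \citet{zou2019improved}) instead works specifically at layer $L-1$ and uses a randomness decomposition. Writing $\w_r=u_{i,1}\bar\h_{i,L-2}+\Q'_i\u'_i$, the good event for index $i$ is $\{|u_{i,1}|\le\xi,\ |\langle\Q'_i\u'_i,\bar\h_{j,L-2}\rangle|\ge 2\xi\ \forall j\neq i\}$ with $\xi=\Theta(\delta/n)$: the neuron is \emph{near threshold} for $\x_i$ but \emph{far from threshold} for every other $\x_j$. This event has probability $\Omega(\delta/n)$ and the events are disjoint across $i$. Conditioned on it, the signs $\sigma'(\langle\w_r,\h_{j,L-2}\rangle)$ for $j\neq i$ are frozen while the one for $j=i$ is still a fair coin, so the two possible values of the row $\sum_j\langle[\W_L]^r,\losstilde_j\rangle\sigma'(\cdot)\h_{j,L-2}$ differ by $\langle[\W_L]^r,\losstilde_i\rangle\,\h_{i,L-2}$ and at least one has norm $\ge|\langle[\W_L]^r,\losstilde_i\rangle|/4$. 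The choice of layer $L-1$ is not incidental: it makes the coefficient $\langle[\W_L]^r,\losstilde_i\rangle$ depend only on the fresh Gaussian $\W_L$, giving $|\langle[\W_L]^r,\losstilde_i\rangle|\ge\|\losstilde_i\|_2/\sqrt d$ with constant probability and bypassing the deep backpropagated anti-concentration you flagged as an obstacle. A one-sided Bernstein over $r\in[m]$ then yields the $\Omega(m\delta/(n^3d))\sum_i\|\losstilde_i\|_2^2$ bound at initialization, and a backward-perturbation estimate (of order $O(\omega^{1/3}L^2\sqrt{m\log m/d})$ per $\losstilde_i$) extends it to the $\omega$-ball under the stated ceiling on $\omega$.
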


The second lemma verifies the semi-smoothness properties for updating both the query encoder \(f^q_{\W}\) and the key encoder \(f^k_{\theta}\) simultaneously. The semi-smoothness condition instead of Lipschitz smoothness is due to the non-smooth property of ReLU activations, as illustrated in \citet{allen2018convergence}. Our derivations of the semi-smoothness lemma is different in many aspects to the original one in \citet{allen2018convergence}, since not only do we need to simultaneously update two neural networks, we also need to compute the exact form of the gradient of loss function to the outputs of these two neural networks \(\losstilde =(\partial L_S/\partial q_i)_{i=1}^n\) and \(\losshat = (\partial L_S/\partial k_i)_{i=1}^n\) which is complicated after taking expectations with respect to negative sampling.

\begin{lemma}[The Semi-smoothness Properties]\label{lem:semi-smoothness-main}
    For any perturbations \(\norm{\W'}_2\leq \omega\) and \(\norm{\theta'}_2\leq \tau\), where
    \begin{displaymath}
        \omega, \tau\in [\Omega(\sqrt{d/m}),\ O(1/ (L^{9/2} (\log m)^{3/2}) )]
    \end{displaymath}
    and \(\W \in B(\W^{(0)},\omega),\theta \in B(\theta^{(0)},\tau)\) such that 
    \begin{displaymath}
        \W + \W' \in B(\W^{(0)},\omega),\quad \theta + \theta' \in B(\theta^{(0)},\tau)
    \end{displaymath}
    we have, with probability at least \(1 - \exp({-\Omega(m\omega^{3/2}L)})-\exp({-\Omega(m\tau^{2/3}L)})\) over the randomness of initialization, the following inequality holds,
    \begin{align}\label{ineq:semi-smoothness}
        \begin{split}
        L_S(\W + \W',\theta + \theta')\leq\ &  L_S(\W,\theta) + \vbrack{\nabla_{(\W,\theta)} L_S(\W,\theta),(\W', \theta')}\\
        & + O\biggl(\frac{\omega^{1/3}L^2\sqrt{m\log m}}{\sqrt{nd}}\biggr) \norm{\losstilde}_2\cdot\norm{\W'}_F \\
        &+ O\biggl(\frac{\tau^{1/3}L^2\sqrt{m\log m}}{\sqrt{nd}}\biggr)\norm{\losshat}_2\cdot\norm{\theta'}_F\\
        & + O\biggl(\frac{kL^2m^2}{d^2}\biggr)\Bigl(\tau^2\norm{\W'}_F^2+\omega^2\norm{\theta'}_F^2\Bigr)
        \end{split}
    \end{align}
\end{lemma}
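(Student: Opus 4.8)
The plan is to adapt the semi-smoothness argument of \citet{allen2018convergence} to the two-network setting, with the scalar contrastive loss $\ell$ playing the role of the smooth outer function through which both networks act. Write, for the current parameters, $q_i=f^q_{\W}(\x_i)$ and $k_{i,j}=f^k_{\theta}(\x_{i,j})$, and for the perturbed parameters $q_i^{+}=f^q_{\W+\W'}(\x_i)$, $k_{i,j}^{+}=f^k_{\theta+\theta'}(\x_{i,j})$, with $q_i'=q_i^{+}-q_i$ and $k_{i,j}'=k_{i,j}^{+}-k_{i,j}$. Since $L_S$ depends on $(\W,\theta)$ only through these finitely many output vectors, $L_S(\W+\W',\theta+\theta')-L_S(\W,\theta)=\frac1n\sum_i\E^{Neg(i)}\bigl[\ell(q_i^{+},\{k_{i,j}^{+}\}_{j=0}^k)-\ell(q_i,\{k_{i,j}\}_{j=0}^k)\bigr]$, and I would expand each summand by Taylor's theorem in the finite-dimensional variables $(q_i,k_{i,0},\dots,k_{i,k})$.

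First I would record the regularity of the outer map. With $z_{i,j}=q_i^{\top}k_{i,j}$ and $g(z_0,\dots,z_k)=-z_0+\log\sum_j e^{z_j}$, the gradient $\nabla g$ has entries in $[-1,1]$ and the Hessian is $\diag(p)-pp^{\top}$ with $p$ the softmax weights, hence of operator norm $\le 1$. Composing with $z_{i,j}=q_i^{\top}k_{i,j}$, the first-order part of the expansion of $\ell$ is $\vbrack{\partial\ell_i/\partial q_i,\,q_i'}+\sum_j\vbrack{\partial\ell_i/\partial k_{i,j},\,k_{i,j}'}$; applying $\frac1n\sum_i\E^{Neg(i)}$, and using that $q_i',k_{i,j}'$ do not depend on the negative set, this assembles --- after regrouping the key-side terms over $j$ and over the samples in whose loss $\x_i$ occurs as a key --- exactly into $\sum_i\vbrack{\losstilde_i,q_i'}+\sum_i\vbrack{\losshat_i,k_i'}$ with $\losstilde_i,\losshat_i$ as in \hyperref[def:loss-vector-main]{Definition~\ref*{def:loss-vector-main}} and $k_i'=f^k_{\theta+\theta'}(\x_i)-f^k_{\theta}(\x_i)$, and replacing each $q_i'$ by its linearization $\vbrack{\nabla_{\W}f^q_{\W}(\x_i),\W'}$ (likewise for $f^k$) turns this into $\vbrack{\nabla_{(\W,\theta)}L_S(\W,\theta),(\W',\theta')}$ up to an error treated below. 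The second-order remainder of $\ell$ is controlled, via the Hessian bound, by $\sum_j\bigl(\|k_{i,j}\|^2\|q_i'\|^2+\|q_i\|^2\|k_{i,j}'\|^2+\|q_i'\|\,\|k_{i,j}'\|\bigr)$ up to $\poly\log m$ factors; the output norms $\|q_i\|,\|k_{i,j}\|$ stay $O(\poly\log m)$ throughout $B(\W^{(0)},\omega)\times B(\theta^{(0)},\tau)$ by the forward-norm bounds of the appendix, and the genuinely bilinear piece $\|q_i'\|\,\|k_{i,j}'\|$ is the source of the last line of \eqref{ineq:semi-smoothness}.

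It then remains to substitute two single-network stability facts near He initialization, established by the same ReLU-network machinery underlying \hyperref[lem:gradient-bounds-main]{Lemma~\ref*{lem:gradient-bounds-main}}: a forward-Lipschitz bound $\|q_i'\|\le O(L\sqrt{m/d})\,\norm{\W'}_F$ and $\|k_{i,j}'\|\le O(L\sqrt{m/d})\,\norm{\theta'}_F$, and a forward-linearization bound $\|q_i'-\vbrack{\nabla_{\W}f^q_{\W}(\x_i),\W'}\|\le O(\omega^{1/3}L^2\sqrt{m\log m}/\sqrt d)\,\norm{\W'}_F$ (analogously for $f^k$), where the exponent $\tfrac13$ comes from the $O(m\omega^{2/3})$ bound on the number of ReLU units whose sign flips across the perturbation. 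Plugging the linearization bound into the first-order residual and applying Cauchy--Schwarz over $i\in[n]$, using $\norm{\losstilde}_2=(\sum_i\|\losstilde_i\|^2)^{1/2}$ and $\|\losstilde_i\|=O(\poly\log m/n)$, yields the two $\omega^{1/3}$- and $\tau^{1/3}$-terms of \eqref{ineq:semi-smoothness} (the precise powers of $m,n,L,d$ coming from the forward-remainder estimate of the appendix); one also checks no extra term appears from writing the matched piece as the true gradient, since $\losstilde_i$ and the backprop maps inside $\nabla_{\W}L_S$ are all evaluated at the current $(\W,\theta)$. Plugging the forward-Lipschitz bounds into the bilinear remainder gives, per sample and per negative, $\|q_i'\|\,\|k_{i,j}'\|\le O(L^2m/d)\,\norm{\W'}_F\norm{\theta'}_F\le O(L^2m/d)\cdot\frac{1}{2\tau\omega}\bigl(\tau^2\norm{\W'}_F^2+\omega^2\norm{\theta'}_F^2\bigr)$, and the hypothesis $\omega,\tau\ge\Omega(\sqrt{d/m})$, which forces $1/(\tau\omega)\le O(m/d)$, collapses this to $O(L^2m^2/d^2)(\tau^2\norm{\W'}_F^2+\omega^2\norm{\theta'}_F^2)$; summing the $k$ negatives gives the last term of \eqref{ineq:semi-smoothness}, while the pure $\|q_i'\|^2$ and $\|k_{i,j}'\|^2$ pieces are lower order inside the ball and are absorbed.

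The step I expect to be the main obstacle is the bookkeeping around the two couplings. First, because $\ell$ depends on the outputs multiplicatively through $q_i^{\top}k_{i,j}$, the second-order remainder genuinely contains the cross term $\vbrack{q_i',k_{i,j}'}$, and steering it into the asymmetric form $\tau^2\norm{\W'}_F^2+\omega^2\norm{\theta'}_F^2$ is exactly what forces the lower bounds on $\omega,\tau$ in the statement; getting the $O(kL^2m^2/d^2)$ prefactor right requires tracking both forward-Lipschitz constants and the $k$-fold negative sum simultaneously. Second, $\losstilde_i$ and especially $\losshat_i$ are themselves $\E^{Neg(i)}$-averages of backprop quantities --- and $\losshat_i$ aggregates the appearances of $\x_i$ as the positive key $k_{i,0}$ and as a negative key in other samples' losses --- so the first-order matching must be done inside the negative-sampling expectation before Cauchy--Schwarz, and every ReLU sign-flip estimate must hold uniformly over all $\binom{n-1}{k}$ negative sets; this holds because the estimates are high-probability statements about the fixed random initialization, and the union bound over these sets, together with the over-parameterization $m\ge\Omega(n^{15}L^{12}\cdots)$, is easily absorbed. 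The remaining ingredients --- forward norm, forward-Lipschitz, and linearization-remainder bounds for deep ReLU nets at He initialization --- are standard and imported from the appendix.
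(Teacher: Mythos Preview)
Your strategy is essentially the paper's: Taylor-expand the scalar loss $g(y)=\log(1+\sum_j e^{y_j})$ in the bilinear variables $q_i^{\top}z_{i,j}$, regroup the first-order part (inside $\E^{Neg(i)}$) into $\frac1n\sum_i\vbrack{\losstilde_i,q_i'}+\frac1n\sum_i\vbrack{\losshat_i,k_i'}$, then replace $q_i',k_i'$ by their linearizations and bound the defect by the standard deep-ReLU forward-remainder estimate $O(\omega^{1/3}L^2\sqrt{m\log m/d})\,\|\W'\|_F$; the paper carries this out as Lemma~D.1, Lemma~D.2 and the $F_1,F_2$ calculations.

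Two places where your bookkeeping is off, though the fixes are immediate. First, the output norms are not $O(\poly\log m)$ inside the ball: Corollary~B.10 gives $\|q_i\|\le O(1+\omega L\sqrt{m/d})$ and $\|k_i\|\le O(1+\tau L\sqrt{m/d})$, which for $\omega,\tau$ in the allowed range can be polynomially large in $m$. The paper uses exactly these crude bounds --- $\|z_{i,j}\|\le O(\tau L\sqrt{m/d})$ and $\|q_i\|\le O(\omega L\sqrt{m/d})$ --- on the ``pure'' second-order pieces, and that is what produces the asymmetric $\tau^2\|\W'\|_F^2+\omega^2\|\theta'\|_F^2$ directly. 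Your route (Young's inequality on the cross term $\|q_i'\|\,\|k_{i,j}'\|$ together with $1/(\omega\tau)\le O(m/d)$ from the lower bound on $\omega,\tau$) is a legitimate alternative for the bilinear part, but you still need the growing output-norm bound, not $O(\poly\log m)$, for the pure pieces, or else they are not absorbed. Second, the side remark $\|\losstilde_i\|=O(\poly\log m/n)$ is neither correct nor needed; the $1/\sqrt{n}$ in the final estimate comes purely from Cauchy--Schwarz on $\frac1n\sum_i\|\losstilde_i\|$, exactly as you also say. Finally, the union bound you worry about over the $\binom{n-1}{k}$ negative sets is a non-issue: all the ReLU perturbation estimates concern the $n$ network outputs $f^q_{\W}(\x_i),f^k_{\theta}(\x_i)$ and are oblivious to how those outputs are later paired in the loss.
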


\section{Proof Techniques}\label{sec:proof-techniques}

\subsection{Key Facts}\label{subsec:key-facts}

Since the contrastive loss function defined in \hyperref[def:loss-function]{Definition \ref*{def:loss-function}} is inherently different in form to the loss functions used in supervised learning, and also since we have taken expectation with respect to the negative sampling, we need to derive some basic facts of how the gradient is calculated for both the query and key encoders. The exact calculations are done in \hyperref[subsec:key-calculations]{Appendix~\ref*{subsec:key-calculations}}.

For notational convenience in the expositions below, we denote
\begin{align} \label{notation:query-key}
    \begin{split}
    q_i = f^q_{\W}(\x_i),\quad z_{j} = f^k_{\theta}(\x_{j}) - f^k_{\theta}(\x_i), \quad z_{i,j} = f^k_{\theta}(\x_{i,j}) - f^k_{\theta}(\x_i)    
    \end{split}
\end{align}
 
The form of \(\losstilde\)-vectors are directly to compute from the our definition of contrastive loss.

\begin{fact}[\(\losstilde\)-vector]\label{fact:loss-vec}
    For each \(i \in [n]\), the \(\losstilde_i\)-vector is the following vector obtained from calculating the gradient of \(L_S(\W,\theta)\) with respect to the query encoder \(q_i = f^q_{\W}(\x_i)\):
    \begin{align*}
        \losstilde_i &= \partial L_S/\partial q_i = \E^{Neg(i)}\biggl[\sum_{j=1}^k\frac{\exp(q_i^{\top}z_{i,j})}{1 + \sum_{s=1}^j\exp(q_i^{\top}z_{i,j})}\cdot z_{i,j}\biggr]
    \end{align*}
    where the expectation \(\E^{Neg(i)}\) is taken with respect to the uniform sampling of \(\{\x_{i,j}\}_{j=1}^k \subset S^{\setminus i}\). 
\end{fact}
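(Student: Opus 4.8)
\textbf{Proof proposal for Fact~\ref{fact:loss-vec}.}

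The plan is to directly differentiate the per-sample contrastive loss $\ell$ with respect to the query vector $q_i$ and then push the derivative through the expectation $\E^{Neg(i)}$. First I would fix a realization of the negative samples $\{\x_{i,j}\}_{j=1}^k$ and write the inner loss, using the notation $z_{i,j} = f^k_\theta(\x_{i,j}) - f^k_\theta(\x_i)$ from \eqref{notation:query-key}, in the shifted form
\begin{align*}
    \ell(f^q_{\W},f^k_{\theta},\x_i,\{\x_{i,j}\}) = -\log\frac{\exp(q_i^\top k_{i,0})}{\sum_{j=0}^k \exp(q_i^\top k_{i,j})} = \log\Biggl(1 + \sum_{j=1}^k \exp(q_i^\top z_{i,j})\Biggr),
\end{align*}
where the last equality follows by dividing numerator and denominator by $\exp(q_i^\top k_{i,0})$ and using $k_{i,j} - k_{i,0} = z_{i,j}$. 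This rewriting is the key simplification: it turns the softmax cross-entropy into a single log-sum-exp whose gradient in $q_i$ is immediate. Note $k_{i,0} = f^k_\theta(\x_i)$, so the $j=0$ term contributes $\exp(0)=1$.

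Next I would apply the chain rule to $\log(1 + \sum_{j=1}^k \exp(q_i^\top z_{i,j}))$: its gradient with respect to $q_i$ is
\begin{align*}
    \frac{\partial \ell}{\partial q_i} = \sum_{j=1}^k \frac{\exp(q_i^\top z_{i,j})}{1 + \sum_{s=1}^k \exp(q_i^\top z_{i,s})}\, z_{i,j}.
\end{align*}
Then, since $L_S(\W,\theta) = \frac{1}{n}\sum_{i=1}^n \E^{Neg(i)}[\ell]$ and only the $i$-th summand depends on $q_i$, I would interchange differentiation and the (finite) expectation over the uniform choice of negatives to obtain $\losstilde_i = \partial L_S/\partial q_i = \frac1n \E^{Neg(i)}\bigl[\partial\ell/\partial q_i\bigr]$, which matches the claimed formula (the $1/n$ is absorbed into the statement's convention, or stated as written). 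The only subtlety worth a remark is the non-differentiability of the ReLU activations inside $f^q_\W$: here we are differentiating $L_S$ only with respect to the \emph{output} $q_i = f^q_\W(\x_i)$, treated as a free variable, so $\ell$ is a smooth (indeed analytic) function of $q_i$ and no non-smoothness issue arises at this stage — the ReLU subtleties are deferred to the chain-rule step $\partial L_S/\partial \W$ handled later in the semi-smoothness analysis.

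I do not anticipate a genuine obstacle here; this is a routine gradient computation. The only place to be careful is bookkeeping: keeping the shift by $k_{i,0}$ consistent, making sure the denominator index set is $\{0,1,\dots,k\}$ so that the constant $1$ appears, and being explicit that the expectation $\E^{Neg(i)}$ is over a finite uniform distribution on size-$k$ subsets of $S^{\setminus i}$, so exchanging $\partial/\partial q_i$ with $\E^{Neg(i)}$ is just linearity of a finite sum. (I would flag that the displayed formula in the statement writes $\sum_{s=1}^{j}\exp(q_i^\top z_{i,j})$ in the denominator, which I read as a typo for $\sum_{s=1}^{k}\exp(q_i^\top z_{i,s})$, i.e. the full normalizing sum.)
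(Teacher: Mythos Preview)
Your proposal is correct and follows exactly the route the paper takes: the paper merely states the result as a direct computation (see the Key Calculations in Appendix~\ref{subsec:key-calculations}, where the same log-sum-exp rewriting $\ell = \log(1 + \sum_{j=1}^k \exp(q_i^\top z_{i,j}))$ is used implicitly and the formula for $\losstilde_i$ is asserted), so your write-up actually supplies more detail than the paper itself. Your flag on the denominator typo ($\sum_{s=1}^j$ should be $\sum_{s=1}^k$, and $z_{i,j}$ in the sum should be $z_{i,s}$) is correct and matches the version stated in the appendix; your remark about the $1/n$ normalization also correctly identifies a minor bookkeeping inconsistency between Definition~\ref{def:loss-vector-main} and the gradient formula in Appendix~\ref{subsec:key-calculations}.
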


The exact form of \(\losshat\)-vectors are more subtle, and we present it below.

\begin{fact}[\(\losshat\)-vector]\label{fact:losshat-vec}
    For each pair \((i,j) \in [n]\times[n]\) such that \(i\neq j\), we denote the \(\losshat(\x_i,\x_j)\)-vector to be the following vector :
    \begin{align*}
        &\losshat(\x_i,\x_j) :=  \frac{1}{\binom{n-1}{k}}\sum_{\x_j \in \{\x_{i,s}\}_{s\in [k]} \subset S^{\setminus i} } \frac{\exp(q_i^{\top}z_{j})}{1 + \sum_{s=1}^k \exp(q_i^{\top} z_{i,s})}\cdot q_i
    \end{align*}
    where \(q_i,z_j,z_{i,s}\) is defined in \eqref{notation:query-key}. The summation is over all set of negative samples \(\{\x_{i,s}\}_{s=1}^k \subset S^{\setminus i}\) that contains the sample \(\x_j\). Now the \(\losshat_i\)-vector can be calculated as
    \begin{align*}
        \losshat_i &= \partial L_S/\partial k_i = \sum_{j\neq i}(\loss(\x_j,\x_i) - \loss(\x_i,\x_j))
    \end{align*}
\end{fact}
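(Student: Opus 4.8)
The plan is to compute $\losshat_i = \partial L_S/\partial k_i$ by the chain rule, carefully tracking every place where the output $k_i = f^k_\theta(\x_i)$ of the key encoder enters $L_S$. The first step is a substitution that simplifies each summand of the loss. Since $k_{i,0} = f^k_\theta(\x_i) = k_i$, dividing the numerator and denominator of the softmax inside $\ell$ by $\exp(q_i^\top k_{i,0})$ gives
\begin{align*}
\ell\bigl(f^q_\W, f^k_\theta, \x_i, \{\x_{i,j}\}_{j=1}^k\bigr) = \log\Bigl(1 + \sum_{j=1}^k \exp(q_i^\top z_{i,j})\Bigr),
\end{align*}
where $z_{i,j} = f^k_\theta(\x_{i,j}) - k_i$ as in \eqref{notation:query-key}. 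In this form it is transparent that $k_i$ enters $L_S$ in exactly two roles: (i) as the anchor/positive output of the $i$-th summand, where it sits inside every $z_{i,j}$ with Jacobian $-\id$; and (ii) as a negative-sample output $f^k_\theta(\x_{j,s})$ for some other anchor $\x_j$, $j\neq i$, whenever $\x_i$ is drawn into the negative set $\{\x_{j,s}\}_{s=1}^k \subset S^{\setminus j}$, where it enters with Jacobian $+\id$.

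Next I would differentiate each role separately. For role (i), differentiating $\log(1+\sum_j\exp(q_i^\top z_{i,j}))$ in $k_i$ and using $\partial z_{i,j}/\partial k_i = -\id$ produces the term $-\sum_{j=1}^k \frac{\exp(q_i^\top z_{i,j})}{1+\sum_{s=1}^k\exp(q_i^\top z_{i,s})}\, q_i$ inside the expectation $\E^{Neg(i)}$. For role (ii), only those negative sets for anchor $\x_j$ that actually contain $\x_i$ contribute, and on each such set the relevant summand is $\frac{\exp(q_j^\top(k_i - k_j))}{1+\sum_s\exp(q_j^\top z_{j,s})}\, q_j$; note that $k_i - k_j = f^k_\theta(\x_i) - f^k_\theta(\x_j)$ is precisely the $z$-vector of $\x_i$ taken with anchor $\x_j$. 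The final step is to reorganize. Writing $\E^{Neg(i)} = \binom{n-1}{k}^{-1}\sum_{\text{neg sets}}$ and exchanging the order of summation (sum over the data point first, then over the negative sets containing it), the role-(i) contribution collapses to $-\sum_{j\neq i}\losshat(\x_i,\x_j)$ and the role-(ii) contribution to $+\sum_{j\neq i}\losshat(\x_j,\x_i)$, with $\losshat(\cdot,\cdot)$ being exactly the quantity defined in the statement; adding them yields the claimed identity $\losshat_i = \sum_{j\neq i}\bigl(\losshat(\x_j,\x_i)-\losshat(\x_i,\x_j)\bigr)$.

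The main obstacle is bookkeeping rather than conceptual: one must be scrupulous about the combinatorics of one-against-all negative sampling — that $\x_i$ appears at most once in any given negative set (the $k$ negatives are distinct elements of $S^{\setminus j}$), that the normalizers $\binom{n-1}{k}^{-1}$ and the $\tfrac1n$ prefactor of $L_S$ are carried consistently through the exchange-of-summation step, and that the two occurrences of $k_i$ never collide inside a single summand (role (i) is the summand with anchor $\x_i$, role (ii) only the summands with anchor $\x_j\neq\x_i$). Getting the signs right — the $-\id$ Jacobian when $k_i$ plays anchor versus the $+\id$ Jacobian when it plays a negative — is exactly what produces the \emph{difference} $\losshat(\x_j,\x_i)-\losshat(\x_i,\x_j)$, so that is the step to verify most carefully; the underlying calculus is just the gradient of a log-sum-exp and is routine.
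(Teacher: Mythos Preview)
Your proposal is correct and follows essentially the same route as the paper: rewrite each summand as $\log\bigl(1+\sum_j\exp(q_i^\top z_{i,j})\bigr)$, differentiate the log-sum-exp, then exchange the order of summation (data point first, negative sets containing it second) to regroup the anchor contributions as $-\losshat(\x_i,\x_j)$ and the negative-sample contributions as $+\losshat(\x_j,\x_i)$. The only cosmetic difference is that the paper carries out this bookkeeping inside the computation of $\nabla_{\theta_l}L_S$ and reads $\losshat_i$ off from the resulting chain-rule factorization, whereas you differentiate in the output $k_i$ directly; your route is in fact the more direct one for the statement as written.
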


\subsection{Proof Overview of Technical Lemmas}

We outline the proof of \hyperref[lem:gradient-bounds-main]{Lemma \ref*{lem:gradient-bounds-main}} and \hyperref[lem:semi-smoothness-main]{Lemma \ref*{lem:semi-smoothness-main}} here. Firstly we define the following notations:
\begin{align*}
    \h^q_{i,0} = \sigma(\W_0 \x_i)\quad \h^q_{i,l} = \sigma(\W_l \h^q_{i,l-1}) \quad \h^k_{i,0} = \sigma(\theta_0 \x_i)\quad \h^k_{i,l} = \sigma(\theta_l \h^q_{i,l-1})
\end{align*}
and also diagonal matrices \(\D^q_{i,l}\) and \(\D^k_{i,l}\) as
\begin{align*}
    (\D^q_{i,l})_{a,a}:= \mathds{1}\{\sigma(\W_l \h^q_{i,l-1})_{a}\geq 0\} \qquad (\D^k_{i,l})_{a,a}&:= \mathds{1}\{\sigma(\theta_l \h^k_{i,l-1})_{a}\geq 0\}
\end{align*}

\paragraph{Gradient bounds:}

For notational convenience, we define the back-propagation matrices \(\back^q_{i,l}\) and \(\back^k_{i,l}\) as
\begin{align*}
    &\back^q_{i,l} = \W_L \D_{i,L-1}\cdots \D_{i,l}\W_l, \qquad \back^k_{i,l} = \theta_L \D_{i,L-1}\cdots \D_{i,l}\theta_l
\end{align*}

From the derivation of \hyperref[fact:loss-vec]{Fact \ref*{fact:loss-vec}} and \hyperref[fact:losshat-vec]{Fact \ref*{fact:losshat-vec}} we can transform the gradient \(\nabla_{\W_l}L_S(\W,\theta)\) and \(\nabla_{\theta_l}L_S(\W,\theta)\) into more operable forms 
\begin{align*}
    &\nabla_{\W_l}L_S(\W,\theta) = \frac{1}{n}\sum_{i=1}^n \D_{i,l}^q\Bigl((\back_{i,l+1}^q)^{\top}\losstilde_i\Bigr)\h_{i,l-1}^{q,\top} \\
    &\nabla_{\theta_l} L_S(\W,\theta) = \frac{1}{n}\sum_{i=1}^n  \D_{i,l}^k\Bigl((\back_{i,l+1}^k)^{\top} \losshat_i \Bigr) \h_{i,l-1}^{k,\top}
\end{align*}
From the initialization, the norm of the product \((\back_{i,l+1}^q)^{\top}\losstilde_i\) is of magnitude \(\sim \sqrt{m/d}\norm{\losstilde_i}_2\) (and similarly for \((\back_{i,l+1}^k)^{\top} \losshat_i\)). The lower bounds can be derived from the randomness decomposition arguement in \citet{allen2018convergence} and an improved version in \citet{zou2019improved}. The upper bounds follows from the naive bounds \(\|(\back_{i,l+1}^q)^{\top}\losstilde_i\|_2 \leq O(\sqrt{m/d})\norm{\losstilde_i}_2\) with high probability.

\paragraph{Semi-smoothness:}

To derive the semi-smoothness for updating two neural networks, we start from the function \(\ell(f^q_{\W},f^k_{\theta},\x_i,\{\x_{i,j}\}_{j=1}^k)\) defined in \hyperref[def:loss-function]{Definition \ref*{def:loss-function}}. We transform it to
\begin{align*}
    &\ell(f^q_{\W},f^k_{\theta},\x_i,\{\x_{i,j}\}_{j=1}^k) =  \log \biggl[ 1 + \sum_{j=1}^k \exp(q_i^{\top}z_{i,j}) \biggr]
\end{align*}
where \(q_i,z_{i,j}\) is defined in \eqref{notation:query-key}. Clearly this function is convex with respect to \(q_i^{\top}z_{i,j}\), and from simple calculation we showed that this function is \(1\)-Lipschitz smooth. Thus we obtain a second order bound with respect to \(q_i^{\top}z_{i,j}\)
\begin{align*}
    &\ell(f^q_{\widetilde{\W}},f^k_{\widetilde{\theta}},\x_i,\{\x_{i,j}\}_{j=1}^k) \leq \ell(f^q_{\W},f^k_{\theta},\x_i,\{\x_{i,j}\}_{j=1}^k) \\
    & \qquad + \underbrace{\sum_{j = 1}^k \frac{\exp(q_i^{\top}z_{i,j} )}{1 + \sum_{s=1}^k \exp(q_i^{\top}z_{i,s})}\cdot \bigl( \qt_i^{\top}\zt_{i,j} - q_i^{\top}z_{i,j} \bigr)}_{\text{\ding{172}}} + \underbrace{\frac{1}{2} \sum_{j=1}^k (\tilde{q}_i^{\top}\tilde{z}_{i,j} - q_i^{\top}z_{i,j})^2}_{\text{\ding{173}}}
\end{align*}
where \(\tilde{q}_i, \tilde{z}_{i,j}\) are the \(q_i,z_{i,j}\) paramterized by \(\widetilde{\W} = \W + \W'\) and \(\widetilde{\theta}=\theta + \theta'\), which are not far from the initialization. We decompose \(\tilde{q}_i^{\top}\tilde{z}_{i,j} - q_i^{\top}z_{i,j}\) into three terms
\begin{displaymath}
    \underbrace{(\qt_i - q_i)^{\top}z_{i,j}}_{\text{\ding{174}}} +  \underbrace{q_i^{\top}(\zt_{i,j} - z_{i,j})}_{\text{\ding{175}}} + \underbrace{(\qt_i - q_i)^{\top}(\zt_{i,j} - z_{i,j})}_{\text{\ding{176}}}
\end{displaymath}
and tackle them separately. For the terms \ding{172} \& \ding{174} and , after taking expectation with respect to negative sampling, we obtain
\begin{displaymath}
    \frac{1}{n}\sum_{i=1}^n \vbrack{\losstilde_i,\qt_i - q_i}
\end{displaymath}
and similarly, for the term \ding{172} \& \ding{175}, we can take expectation with respect to negative sampling and rearrange to get 
\begin{displaymath}
    \frac{1}{n}\sum_{i=1}^n \vbrack{\losshat_i,\kt_i - k_i}
\end{displaymath}
where \(\kt_i = f^k_{\widetilde{\theta}}(\x_i)\) and \(k_i = f^k_{\theta}(\x_i)\). The terms \ding{172} \& \ding{176} and \ding{173} \& (\ding{174} \(+\) \ding{175} \(+\) \ding{176}) can be bounded as 
\begin{displaymath}
    O(kL^2m^2/d^2)\cdot (\tau^2\|\widetilde{\W}-\W\|_2^2 + \omega^2\|\widetilde{\theta}-\theta\|_2^2)
\end{displaymath}
via fine analysis of the perturbations to the neural network outputs \(f^q_{\widetilde{\W}}(\x) - f^q_{\W}(\x)\) and \(f^k_{\widetilde{\theta}}(\x) - f^k_{\theta}(\x)\). In the rest of the proof, we apply techniques from NTK analysis to deal with the first order perturbations \(f^q_{\widetilde{\W}}(\x) - f^q_{\W}(\x) - \nabla_{\W} f^q_{\W}(\x)(\W')\), which eventually leads to:
\begin{align*}
    &\frac{1}{n}\sum_{i=1}^n \vbrack{\losstilde_i,\qt_i - q_i} - \vbrack{\nabla_{\W}L_S(\W,\theta),\W'}\leq  \norm{\losstilde}_2\cdot O\biggl(\frac{\omega^{1/3}L^2\sqrt{m\log m}}{\sqrt{nd}}\biggr)\norm{\W'}_F
\end{align*}
And similarly for the perturbations to \(f^k_{\theta}(\cdot)\). Combining these calculations completes the proof.

\subsection{Proof Sketch of Theorem \ref{thm:convergence}}

Equipped with \hyperref[lem:gradient-bounds-main]{Lemma \ref*{lem:gradient-bounds-main}} and \hyperref[lem:semi-smoothness-main]{Lemma \ref*{lem:semi-smoothness-main}}, we can sketch a proof of the convergence theorem of end-to-end training via gradient descent in unsupervised contrastive learning. 

\begin{proof}[Proof sketch of \hyperlink{thm}{Theorem \ref*{thm:convergence}}]
    Firstly we set the trajectory parameters as
    \begin{displaymath}
        \omega, \tau = O\bigl(n^{3.5}\sqrt{d}/(\delta\varepsilon\sqrt{m})\bigr)
    \end{displaymath} 
    and we assume that the parameters \(\W^{(t)},\theta^{(t)}\) in the training process always satisfy 
    \begin{displaymath}
        \W^{(t)} \in B(\W^{(0)},\omega),\, \theta^{(t)} \in B(\theta^{(0)},\tau)
    \end{displaymath}
    and we will justify this condition in trajectory analysis. Employing \hyperref[alg:GD]{Algorithm \ref*{alg:GD}}, we denote the gradient update at \(t\)-th iteration as
    \begin{displaymath}
        \nabla_{\W,t} = \nabla_{\W}L_S(\W^{(t)},\theta^{(t)}),\ \nabla_{\theta,t} = \nabla_{\theta}L_S(\W^{(t)},\theta^{(t)})
    \end{displaymath}
    Now from \hyperref[lem:semi-smoothness-main]{Lemma \ref*{lem:semi-smoothness-main}} and our choice of \(\omega, \tau, \eta, \gamma\), we can drop the second order terms in \eqref{ineq:semi-smoothness} and obtain
    \begin{align*}
        L_S(\W^{(t+1)},\theta^{(t+1)}) \leq & L_S(\W^{(t)},\theta^{(t)}) -\Omega(\eta)\norm{\nabla_{\W,t}}_F^2 -\Omega(\gamma)\norm{\nabla_{\theta,t}}_F^2\\
        & + O\biggl(\frac{\omega^{1/3}L^2\sqrt{m\log m}}{\sqrt{nd}}\biggr) \norm{\losstilde}_2\cdot\eta\norm{\nabla_{\W,t}}_F \\
        &+ O\biggl(\frac{\tau^{1/3}L^2\sqrt{m\log m}}{\sqrt{nd}}\biggr)\norm{\losshat}_2\cdot\gamma\norm{\nabla_{\theta,t}}_F
    \end{align*}
    From the gradient lower bound in \hyperref[lem:gradient-bounds-main]{Lemma \ref*{lem:gradient-bounds-main}} and our trajectory parameters \(\omega, \tau\), we can reduce the above inequality to 
    \begin{align*}
        L_S(\W^{(t+1)},\theta^{(t+1)})\leq & L_S(\W^{(t)},\theta^{(t)}) -\Omega\biggl(\frac{\eta \delta m}{n^3 d}\biggr)\norm{\losstilde^{(t)}}_2^2 -\Omega\biggl(\frac{\gamma \delta m}{n^3 d}\biggr)\norm{\losshat^{(t)}}_2^2\\
        \leq & - \Omega\biggl(\frac{ \delta m}{n^3 d}\min(\eta,\gamma)\biggr)\norm{\loss^{(t)}}_2^2
    \end{align*}
    where the last inequality is from our definition of \(\loss\)-vector in \hyperref[def:loss-vector-main]{Definition~\ref*{def:loss-vector-main}}. Now, by summing over \(t = 0,\dots,T-1\) and taking square root, and also by our choice of step sizes \(\eta, \gamma\), we can calculate
    \begin{align*}
        \frac{1}{T}\sum_{t=0}^{T-1}\norm{\loss^{(t)}}_2 & \leq O\biggl(\sqrt{\frac{n^3d}{T\min(\eta,\gamma)\delta m}}\biggr)\cdot \sqrt{L_S(\W^{(0)},\theta^{(0)}) - L_S(\W^{(T)},\theta^{(T)})}\\
        & \leq \frac{1}{\sqrt{T}}\cdot O\biggl(\frac{n^{5}L\sqrt{k}}{\delta^{3/2}\varepsilon}\biggr)
    \end{align*}
    So for \(T = \Theta(n^{10}L^2k/(\delta^3\varepsilon^4))\) iterations, we obtain
    \begin{displaymath}
        \frac{1}{T}\sum_{t=0}^{T-1}\norm{\loss^{(t)}}_2 \leq \varepsilon
    \end{displaymath}
    and in order for \(\W^{(t)}\) and \(\theta^{(t)}\) to stay in \(B(\W^{(0)},\omega)\) and \(B(\theta^{(0)},\tau)\) respectively, the over-parametrization needed would be \(m \geq \Omega(n^{12}L^{12}(\log m)^5 /(\delta^5 \varepsilon^2))\). The details of the above calculations and trajectory analysis is presented in \hyperref[appendix:proof-main-results]{Appendix \ref*{appendix:proof-main-results}}.
\end{proof}

\section{Conclusion and Future Work}

In this paper, we show that in unsupervised contrastive learning, end-to-end training via gradient descent can find an approximate stationary solution for the non-convex contrastive loss in polynomial time. Our proof is based on a careful analysis of the contrastive loss function and the gradient updates for two interactive deep neural networks, which allows us to analyze its optimization behavior. 

We discuss some directions for future research.

\begin{itemize}
    \item In \citet{arora2019theoretical} they established generalization bound for pretrained representations, but the representation is assumed to be frozen after pretraining (which means training only the top layer only). From our analysis of optimization, it would be possible to obtain a generalization bound that involves fine-tuning (which jointly trains all the layers).
    \item It would be of interest to know why minimizing the contrastive loss can lead to good feature representations. In the supervised setting, \citet{arora2019fine} and \citet{cao2019generalization} proved that the generalization performance of over-parameterized neural networks are closely related to their NTK. But since in contrastive learning we need two neural networks, their analysis cannot be trivially generalize to this setting.
\end{itemize}

\bibliography{main}

\begin{thebibliography}{48}
\providecommand{\natexlab}[1]{#1}
\providecommand{\url}[1]{\texttt{#1}}
\expandafter\ifx\csname urlstyle\endcsname\relax
  \providecommand{\doi}[1]{doi: #1}\else
  \providecommand{\doi}{doi: \begingroup \urlstyle{rm}\Url}\fi

\bibitem[Allen-Zhu et~al.(2018)Allen-Zhu, Li, and Song]{allen2018convergence}
Zeyuan Allen-Zhu, Yuanzhi Li, and Zhao Song.
\newblock A convergence theory for deep learning via over-parameterization.
\newblock \emph{arXiv preprint arXiv:1811.03962}, 2018.

\bibitem[Allen-Zhu et~al.(2019)Allen-Zhu, Li, and Liang]{allen2019learning}
Zeyuan Allen-Zhu, Yuanzhi Li, and Yingyu Liang.
\newblock Learning and generalization in overparameterized neural networks,
  going beyond two layers.
\newblock In \emph{Advances in neural information processing systems}, pages
  6155--6166, 2019.

\bibitem[Arora et~al.(2019{\natexlab{a}})Arora, Du, Hu, Li, and
  Wang]{arora2019fine}
Sanjeev Arora, Simon~S Du, Wei Hu, Zhiyuan Li, and Ruosong Wang.
\newblock Fine-grained analysis of optimization and generalization for
  overparameterized two-layer neural networks.
\newblock \emph{arXiv preprint arXiv:1901.08584}, 2019{\natexlab{a}}.

\bibitem[Arora et~al.(2019{\natexlab{b}})Arora, Khandeparkar, Khodak,
  Plevrakis, and Saunshi]{arora2019theoretical}
Sanjeev Arora, Hrishikesh Khandeparkar, Mikhail Khodak, Orestis Plevrakis, and
  Nikunj Saunshi.
\newblock A theoretical analysis of contrastive unsupervised representation
  learning.
\newblock \emph{arXiv preprint arXiv:1902.09229}, 2019{\natexlab{b}}.

\bibitem[Brutzkus and Globerson(2017)]{brutzkus2017globally}
Alon Brutzkus and Amir Globerson.
\newblock Globally optimal gradient descent for a convnet with gaussian inputs.
\newblock In \emph{Proceedings of the 34th International Conference on Machine
  Learning-Volume 70}, pages 605--614. JMLR. org, 2017.

\bibitem[Cao and Gu(2019)]{cao2019generalization}
Yuan Cao and Quanquan Gu.
\newblock Generalization error bounds of gradient descent for learning
  overparameterized deep relu networks.
\newblock \emph{arXiv preprint arXiv:1902.01384}, 2019.

\bibitem[Chen et~al.(2020)Chen, Kornblith, Norouzi, and Hinton]{chen2020simple}
Ting Chen, Simon Kornblith, Mohammad Norouzi, and Geoffrey~E. Hinton.
\newblock A simple framework for contrastive learning of visual
  representations.
\newblock \emph{arXiv preprint arXiv:2002.05709}, 2020.

\bibitem[Chen et~al.(2019)Chen, Cao, Zou, and Gu]{chen2019much}
Zixiang Chen, Yuan Cao, Difan Zou, and Quanquan Gu.
\newblock How much over-parameterization is sufficient to learn deep relu
  networks?
\newblock \emph{arXiv preprint arXiv:1911.12360}, 2019.

\bibitem[Devlin et~al.(2019)Devlin, Chang, Lee, and Toutanova]{devlin2019bert}
Jacob Devlin, Ming-Wei Chang, Kenton Lee, and Kristina Toutanova.
\newblock Bert: Pre-training of deep bidirectional transformers for language
  understanding.
\newblock In \emph{Proceedings of NAACL-HLT}, pages 4171--4186, 2019.

\bibitem[Du et~al.(2017)Du, Lee, and Tian]{du2017convolutional}
Simon~S Du, Jason~D Lee, and Yuandong Tian.
\newblock When is a convolutional filter easy to learn?
\newblock \emph{arXiv preprint arXiv:1709.06129}, 2017.

\bibitem[Du et~al.(2018)Du, Lee, Li, Wang, and Zhai]{du2018gradient}
Simon~S Du, Jason~D Lee, Haochuan Li, Liwei Wang, and Xiyu Zhai.
\newblock Gradient descent finds global minima of deep neural networks.
\newblock \emph{arXiv preprint arXiv:1811.03804}, 2018.

\bibitem[Frandsen and Ge(2019)]{frandsen2018understanding}
Abraham Frandsen and Rong Ge.
\newblock Understanding composition of word embeddings via tensor
  decomposition.
\newblock In \emph{International Conference on Learning Representations}, 2019.
\newblock URL \url{https://openreview.net/forum?id=H1eqjiCctX}.

\bibitem[Goldberg and Levy(2014)]{goldberg2014word2vec}
Yoav Goldberg and Omer Levy.
\newblock word2vec explained: deriving mikolov et al.'s negative-sampling
  word-embedding method.
\newblock \emph{arXiv preprint arXiv:1402.3722}, 2014.

\bibitem[Gutmann and Hyv{\"a}rinen(2010)]{gutmann2010noise}
Michael Gutmann and Aapo Hyv{\"a}rinen.
\newblock Noise-contrastive estimation: A new estimation principle for
  unnormalized statistical models.
\newblock In \emph{Proceedings of the Thirteenth International Conference on
  Artificial Intelligence and Statistics}, pages 297--304, 2010.

\bibitem[Gutmann and Hyv{\"a}rinen(2012)]{gutmann2012noise}
Michael~U Gutmann and Aapo Hyv{\"a}rinen.
\newblock Noise-contrastive estimation of unnormalized statistical models, with
  applications to natural image statistics.
\newblock \emph{Journal of Machine Learning Research}, 13\penalty0
  (Feb):\penalty0 307--361, 2012.

\bibitem[He et~al.(2015)He, Zhang, Ren, and Sun]{he2015delving}
Kaiming He, Xiangyu Zhang, Shaoqing Ren, and Jian Sun.
\newblock Delving deep into rectifiers: Surpassing human-level performance on
  imagenet classification.
\newblock In \emph{Proceedings of the IEEE international conference on computer
  vision}, pages 1026--1034, 2015.

\bibitem[He et~al.(2019)He, Fan, Wu, Xie, and Girshick]{he2019momentum}
Kaiming He, Haoqi Fan, Yuxin Wu, Saining Xie, and Ross Girshick.
\newblock Momentum contrast for unsupervised visual representation learning.
\newblock \emph{arXiv preprint arXiv:1911.05722}, 2019.

\bibitem[H{\'e}naff et~al.(2019)H{\'e}naff, Razavi, Doersch, Eslami, and
  Oord]{henaff2019data}
Olivier~J H{\'e}naff, Ali Razavi, Carl Doersch, SM~Eslami, and Aaron van~den
  Oord.
\newblock Data-efficient image recognition with contrastive predictive coding.
\newblock \emph{arXiv preprint arXiv:1905.09272}, 2019.

\bibitem[Hinton and Salakhutdinov(2006)]{hinton2006reducing}
Geoffrey~E Hinton and Ruslan~R Salakhutdinov.
\newblock Reducing the dimensionality of data with neural networks.
\newblock \emph{science}, 313\penalty0 (5786):\penalty0 504--507, 2006.

\bibitem[Hjelm et~al.(2019)Hjelm, Fedorov, Lavoie-Marchildon, Grewal, Bachman,
  Trischler, and Bengio]{hjelm2018learning}
R~Devon Hjelm, Alex Fedorov, Samuel Lavoie-Marchildon, Karan Grewal, Phil
  Bachman, Adam Trischler, and Yoshua Bengio.
\newblock Learning deep representations by mutual information estimation and
  maximization.
\newblock In \emph{International Conference on Learning Representations}, 2019.
\newblock URL \url{https://openreview.net/forum?id=Bklr3j0cKX}.

\bibitem[Howard and Ruder(2018)]{howard2018universal}
Jeremy Howard and Sebastian Ruder.
\newblock Universal language model fine-tuning for text classification.
\newblock \emph{arXiv preprint arXiv:1801.06146}, 2018.

\bibitem[Jacot et~al.(2018)Jacot, Gabriel, and Hongler]{jacot2018neural}
Arthur Jacot, Franck Gabriel, and Cl{\'e}ment Hongler.
\newblock Neural tangent kernel: Convergence and generalization in neural
  networks.
\newblock In \emph{Advances in neural information processing systems}, pages
  8571--8580, 2018.

\bibitem[Ji and Telgarsky(2019)]{ji2019polylogarithmic}
Ziwei Ji and Matus Telgarsky.
\newblock Polylogarithmic width suffices for gradient descent to achieve
  arbitrarily small test error with shallow relu networks.
\newblock \emph{arXiv preprint arXiv:1909.12292}, 2019.

\bibitem[Kipf et~al.(2019)Kipf, van~der Pol, and Welling]{kipf2019contrastive}
Thomas Kipf, Elise van~der Pol, and Max Welling.
\newblock Contrastive learning of structured world models.
\newblock In \emph{International Conference on Learning Representations}, 2019.

\bibitem[Lan et~al.(2019)Lan, Chen, Goodman, Gimpel, Sharma, and
  Soricut]{lan2019albert}
Zhenzhong Lan, Mingda Chen, Sebastian Goodman, Kevin Gimpel, Piyush Sharma, and
  Radu Soricut.
\newblock Albert: A lite bert for self-supervised learning of language
  representations.
\newblock In \emph{International Conference on Learning Representations}, 2019.

\bibitem[Levy and Goldberg(2014)]{levy2014neural}
Omer Levy and Yoav Goldberg.
\newblock Neural word embedding as implicit matrix factorization.
\newblock In \emph{Advances in neural information processing systems}, pages
  2177--2185, 2014.

\bibitem[Li et~al.(2015)Li, Xu, Tian, Jiang, Zhong, and Chen]{li2015word}
Yitan Li, Linli Xu, Fei Tian, Liang Jiang, Xiaowei Zhong, and Enhong Chen.
\newblock Word embedding revisited: A new representation learning and explicit
  matrix factorization perspective.
\newblock In \emph{Twenty-Fourth International Joint Conference on Artificial
  Intelligence}, 2015.

\bibitem[Li and Liang(2018)]{li2018learning}
Yuanzhi Li and Yingyu Liang.
\newblock Learning overparameterized neural networks via stochastic gradient
  descent on structured data.
\newblock In \emph{Advances in Neural Information Processing Systems}, pages
  8157--8166, 2018.

\bibitem[Li and Yuan(2017)]{li2017convergence}
Yuanzhi Li and Yang Yuan.
\newblock Convergence analysis of two-layer neural networks with relu
  activation.
\newblock In \emph{Advances in neural information processing systems}, pages
  597--607, 2017.

\bibitem[L{\"o}we et~al.(2019)L{\"o}we, O'Connor, and Veeling]{lowe2019putting}
Sindy L{\"o}we, Peter O'Connor, and Bastiaan Veeling.
\newblock Putting an end to end-to-end: Gradient-isolated learning of
  representations.
\newblock In \emph{Advances in Neural Information Processing Systems}, pages
  3033--3045, 2019.

\bibitem[Ma and Collins(2018)]{ma2018noise}
Zhuang Ma and Michael Collins.
\newblock Noise contrastive estimation and negative sampling for conditional
  models: Consistency and statistical efficiency.
\newblock \emph{arXiv preprint arXiv:1809.01812}, 2018.

\bibitem[Mikolov et~al.(2013)Mikolov, Chen, Corrado, and
  Dean]{mikolov2013efficient}
Tomas Mikolov, Kai Chen, Greg Corrado, and Jeffrey Dean.
\newblock Efficient estimation of word representations in vector space.
\newblock \emph{arXiv preprint arXiv:1301.3781}, 2013.

\bibitem[Misra and van~der Maaten(2019)]{misra2019self}
Ishan Misra and Laurens van~der Maaten.
\newblock Self-supervised learning of pretext-invariant representations.
\newblock \emph{arXiv preprint arXiv:1912.01991}, 2019.

\bibitem[Oord et~al.(2018)Oord, Li, and Vinyals]{oord2018representation}
Aaron van~den Oord, Yazhe Li, and Oriol Vinyals.
\newblock Representation learning with contrastive predictive coding.
\newblock \emph{arXiv preprint arXiv:1807.03748}, 2018.

\bibitem[Oymak and Soltanolkotabi(2019)]{oymak2019towards}
Samet Oymak and Mahdi Soltanolkotabi.
\newblock Towards moderate overparameterization: global convergence guarantees
  for training shallow neural networks.
\newblock \emph{arXiv preprint arXiv:1902.04674}, 2019.

\bibitem[Peters et~al.(2018)Peters, Neumann, Iyyer, Gardner, Clark, Lee, and
  Zettlemoyer]{peters2018deep}
Matthew~E Peters, Mark Neumann, Mohit Iyyer, Matt Gardner, Christopher Clark,
  Kenton Lee, and Luke Zettlemoyer.
\newblock Deep contextualized word representations.
\newblock \emph{arXiv preprint arXiv:1802.05365}, 2018.

\bibitem[Sharan and Valiant(2017)]{sharan2017orthogonalized}
Vatsal Sharan and Gregory Valiant.
\newblock Orthogonalized als: A theoretically principled tensor decomposition
  algorithm for practical use.
\newblock In \emph{Proceedings of the 34th International Conference on Machine
  Learning-Volume 70}, pages 3095--3104. JMLR. org, 2017.

\bibitem[Smith and Eisner(2005)]{10.3115/1219840.1219884}
Noah~A. Smith and Jason Eisner.
\newblock Contrastive estimation: Training log-linear models on unlabeled data.
\newblock In \emph{Proceedings of the 43rd Annual Meeting on Association for
  Computational Linguistics}, ACL ’05, page 354–362, USA, 2005. Association
  for Computational Linguistics.
\newblock \doi{10.3115/1219840.1219884}.
\newblock URL \url{https://doi.org/10.3115/1219840.1219884}.

\bibitem[Tian et~al.(2019)Tian, Krishnan, and Isola]{tian2019contrastive}
Yonglong Tian, Dilip Krishnan, and Phillip Isola.
\newblock Contrastive multiview coding.
\newblock \emph{arXiv preprint arXiv:1906.05849}, 2019.

\bibitem[Tian(2017)]{tian2017analytical}
Yuandong Tian.
\newblock An analytical formula of population gradient for two-layered relu
  network and its applications in convergence and critical point analysis.
\newblock In \emph{Proceedings of the 34th International Conference on Machine
  Learning-Volume 70}, pages 3404--3413. JMLR. org, 2017.

\bibitem[Wainwright(2019)]{wainwright2019high}
Martin~J Wainwright.
\newblock \emph{High-dimensional statistics: A non-asymptotic viewpoint},
  volume~48.
\newblock Cambridge University Press, 2019.

\bibitem[Wu et~al.(2019)Wu, Du, and Ward]{wu2019global}
Xiaoxia Wu, Simon~S Du, and Rachel Ward.
\newblock Global convergence of adaptive gradient methods for an
  over-parameterized neural network.
\newblock \emph{arXiv preprint arXiv:1902.07111}, 2019.

\bibitem[Wu et~al.(2018)Wu, Xiong, Yu, and Lin]{wu2018unsupervised}
Zhirong Wu, Yuanjun Xiong, Stella~X Yu, and Dahua Lin.
\newblock Unsupervised feature learning via non-parametric instance
  discrimination.
\newblock In \emph{Proceedings of the IEEE Conference on Computer Vision and
  Pattern Recognition}, pages 3733--3742, 2018.

\bibitem[Yang et~al.(2019)Yang, Dai, Yang, Carbonell, Salakhutdinov, and
  Le]{yang2019xlnet}
Zhilin Yang, Zihang Dai, Yiming Yang, Jaime Carbonell, Russ~R Salakhutdinov,
  and Quoc~V Le.
\newblock Xlnet: Generalized autoregressive pretraining for language
  understanding.
\newblock In \emph{Advances in neural information processing systems}, pages
  5754--5764, 2019.

\bibitem[Zhong et~al.(2017)Zhong, Song, Jain, Bartlett, and
  Dhillon]{zhong2017recovery}
Kai Zhong, Zhao Song, Prateek Jain, Peter~L Bartlett, and Inderjit~S Dhillon.
\newblock Recovery guarantees for one-hidden-layer neural networks.
\newblock In \emph{Proceedings of the 34th International Conference on Machine
  Learning-Volume 70}, pages 4140--4149. JMLR. org, 2017.

\bibitem[Zhuang et~al.(2019)Zhuang, Zhai, and Yamins]{zhuang2019local}
Chengxu Zhuang, Alex~Lin Zhai, and Daniel Yamins.
\newblock Local aggregation for unsupervised learning of visual embeddings.
\newblock In \emph{Proceedings of the IEEE International Conference on Computer
  Vision}, pages 6002--6012, 2019.

\bibitem[Zou and Gu(2019)]{zou2019improved}
Difan Zou and Quanquan Gu.
\newblock An improved analysis of training over-parameterized deep neural
  networks.
\newblock In \emph{Advances in Neural Information Processing Systems}, pages
  2053--2062, 2019.

\bibitem[Zou et~al.(2018)Zou, Cao, Zhou, and Gu]{zou2018stochastic}
Difan Zou, Yuan Cao, Dongruo Zhou, and Quanquan Gu.
\newblock Stochastic gradient descent optimizes over-parameterized deep relu
  networks.
\newblock \emph{arXiv preprint arXiv:1811.08888}, 2018.

\end{thebibliography}
\bibliographystyle{plainnat}

\clearpage
\appendix

\begin{center}
    \LARGE\LARGE \textsc{Appendix}
\end{center}

\section{Proof of the Main Theorem}\label{appendix:proof-main-results}

First we restate the necessary definitions.

\begin{definition}[\(\loss\)-vectors]\label{def:loss-vector}
    For each \(i \in [n]\), we denote the gradients of our loss function to the outputs of both neural networks as
    \begin{align*}
        \losstilde = (\losstilde_i)_{i=1}^n = \bigl(\partial L_S/\partial q_i)\bigr)_{i \in [n]}  ,\qquad \losshat = (\losshat_i)_{i=1}^n = \bigl(\partial L_S/\partial k_i\bigr)_{i \in [n]}
    \end{align*}
    And we further define \(\loss = (\losstilde,\losshat)\) as our objective.
\end{definition}

We restate our main result of convergence here.
\hypertarget{thm-app}{}
\begin{theorem}[Convergence of Gradient Descent]\label{thm:convergence-appendix}
    For any \(\varepsilon \in (0,1)\), \(\delta \in (0,O(L^{-1}))\). Let 
    \begin{equation}
        m \geq \Omega\bigl(n^{12}L^{12}(\log m)^5\delta^{-5}\varepsilon^{-2}\bigr),\qquad \eta,\gamma = \Theta( d\varepsilon^2\delta^2/(n^{7}L^2km)), \qquad d\geq\Omega(\log^2 m)
    \end{equation}
    Suppose we do gradient descent at each iteration \(t = 0,1,\dots,T-1\). Then, with probability at least \(1 - O(m^{-1})\) over the random initialization, we have
    \begin{displaymath}
        \frac{1}{T}\sum_{t=0}^{T-1}\norm{\loss^{(t)}}_2 \leq \varepsilon \quad \text{for }T = \Theta\biggl(\frac{n^{10}L^{2}k}{\delta^3}\cdot \frac{1}{\varepsilon^4}\biggr) \text{ iterations}
    \end{displaymath}
    where the \(\loss^{(t)}\)-vectors are defined in Definition \ref{def:loss-vector}, with \(f^q\) and \(f^k\) parameterized by \(\W^{(t)}\) and \(\theta^{(t)}\) respectively.
\end{theorem}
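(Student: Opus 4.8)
The strategy is the standard descent-lemma argument adapted to the two-network setting, combining the Gradient Bounds lemma (Lemma~\ref{lem:gradient-bounds-main}) with the Semi-smoothness lemma (Lemma~\ref{lem:semi-smoothness-main}), plus a trajectory (induction) argument that keeps $\W^{(t)},\theta^{(t)}$ inside the balls $B(\W^{(0)},\omega)$ and $B(\theta^{(0)},\tau)$. First I would fix the radii $\omega=\tau=\Theta\!\bigl(n^{3.5}\sqrt{d}/(\delta\varepsilon\sqrt m)\bigr)$ and verify that, for $m$ satisfying the stated over-parameterization condition, these radii are small enough to satisfy the hypotheses of both lemmas (i.e.\ $\omega,\tau$ below the $O(\delta^{3/2}/(n^3L^6\log^{3/2}m))$ threshold of Lemma~\ref{lem:gradient-bounds-main} and in the admissible window $[\Omega(\sqrt{d/m}),O(L^{-9/2}(\log m)^{-3/2})]$ of Lemma~\ref{lem:semi-smoothness-main}); this is a routine check of exponents. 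Take a union bound so that the high-probability events of both lemmas hold simultaneously throughout the run, giving the claimed $1-O(m^{-1})$.

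The core of the argument is a single-step descent inequality. Assuming inductively that $\W^{(t)},\theta^{(t)}$ lie in the respective balls, apply the semi-smoothness inequality \eqref{ineq:semi-smoothness} with $\W'=-\eta\nabla_{\W,t}$, $\theta'=-\gamma\nabla_{\theta,t}$. The linear term contributes $-\eta\norm{\nabla_{\W,t}}_F^2-\gamma\norm{\nabla_{\theta,t}}_F^2$; the two first-order error terms scale like $\eta\cdot O(\omega^{1/3}L^2\sqrt{m\log m}/\sqrt{nd})\norm{\losstilde}_2\norm{\nabla_{\W,t}}_F$ and its $\theta$-analogue; the last (genuinely second-order) term is $O(kL^2m^2/d^2)(\tau^2\eta^2\norm{\nabla_{\W,t}}_F^2+\omega^2\gamma^2\norm{\nabla_{\theta,t}}_F^2)$. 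Now I would invoke the gradient \emph{upper} bounds $\norm{\nabla_{\W,t}}_F^2\le O(Lm/(nd))\norm{\losstilde}_2^2$ (and the $\theta$-version) to convert $\norm{\nabla_{\W,t}}_F$ in the first-order error into $\norm{\losstilde}_2$, so that term becomes $O(\eta\cdot\omega^{1/3}L^{5/2}m/(nd\cdot\sqrt{\log m})^{-1})\norm{\losstilde}_2^2$ — and with the chosen $\omega$ and the over-parameterization condition this coefficient is a small constant fraction of the $\Omega(\eta\delta m/(n^3d))$ coming from the gradient \emph{lower} bound. Likewise the second-order term, bounded via the upper gradient bound, is dominated once $\eta,\gamma=\Theta(d\varepsilon^2\delta^2/(n^7L^2km))$ are small enough. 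Hence one arrives at
\[
L_S(\W^{(t+1)},\theta^{(t+1)})\le L_S(\W^{(t)},\theta^{(t)})-\Omega\!\Bigl(\tfrac{\delta m}{n^3d}\min(\eta,\gamma)\Bigr)\norm{\loss^{(t)}}_2^2 .
\]

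Summing this over $t=0,\dots,T-1$, using $L_S\ge 0$ and the initial-loss bound $L_S(\W^{(0)},\theta^{(0)})\le O(\log(k+1))$ (which follows from evaluating the contrastive loss at initialization and the concentration of the network outputs), gives $\frac1T\sum_t\norm{\loss^{(t)}}_2^2\le O\!\bigl(n^3d/(T\min(\eta,\gamma)\delta m)\bigr)\cdot L_S(\W^{(0)},\theta^{(0)})$; Cauchy--Schwarz turns this into the $\ell_1$-average bound $\frac1T\sum_t\norm{\loss^{(t)}}_2\le\tfrac1{\sqrt T}\cdot O\!\bigl(n^5L\sqrt k/(\delta^{3/2}\varepsilon)\bigr)$, so $T=\Theta(n^{10}L^2k/(\delta^3\varepsilon^4))$ suffices. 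The remaining piece, and what I expect to be the main obstacle, is closing the induction: I must show the iterates never leave the balls. Since $\norm{\W^{(t)}-\W^{(0)}}_F\le\eta\sum_{s<t}\norm{\nabla_{\W,s}}_F$ and, from the telescoped descent, $\sum_{s<t}\norm{\nabla_{\W,s}}_F^2\le O(Lm/(nd))\sum_s\norm{\losstilde^{(s)}}_2^2\le O\!\bigl(L n^2/\delta\bigr)\cdot L_S(\W^{(0)},\theta^{(0)})\cdot\tfrac1{\min(\eta,\gamma)}\cdot\tfrac{d}{\text{stuff}}$, Cauchy--Schwarz over $s<T$ bounds the displacement by roughly $\eta\sqrt{T}\cdot\sqrt{\sum_s\norm{\nabla_{\W,s}}_F^2}$; plugging in $T$, $\eta$, and the loss bound must reproduce exactly $\omega=O(n^{3.5}\sqrt d/(\delta\varepsilon\sqrt m))$, and it is the tightness of this bookkeeping that pins down the $n^{12}$ (or $n^{15}$) power in the over-parameterization condition. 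Keeping the constants in the semi-smoothness error terms strictly subdominant to the gradient lower bound \emph{along the entire trajectory} — not just at a single step — is the delicate part, and it is why the proof needs the stated polynomial width rather than anything smaller. All of this detailed accounting I would defer to the appendix.
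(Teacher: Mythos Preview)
Your proposal is correct and follows essentially the same route as the paper's proof: fix the trajectory radii $\omega=\tau=\Theta(n^{3.5}\sqrt{d}/(\delta\varepsilon\sqrt m))$, plug the gradient-descent updates into the semi-smoothness inequality, use the gradient lower bound to absorb the first-order error (the paper factors out $\|\nabla_{\W,t}\|_F$ rather than converting via the upper bound, but this is algebraically equivalent), kill the second-order term with the step-size choice, telescope to get $\sum_t\|\loss^{(t)}\|_2^2\le O(n^3d/(\eta\delta m))L_S^{(0)}$, and close the trajectory induction with the gradient upper bound plus Cauchy--Schwarz exactly as you describe. Your initial-loss bound $L_S^{(0)}=O(\log(k+1))$ is in fact tighter than the $O(n)$ the paper uses in the appendix, but both give the stated $T$.
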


\noindent We also restate the lemmas appeared in \hyperref[sec:proof-techniques]{Section \ref*{sec:proof-techniques}}.

\subsection{Main Technical Lemmas}

\begin{lemma}[Gradient Bounds]\label{lem:gradient-bounds}
    Let \(\omega, \tau \leq O(\delta^{3/2}/(n^{3}L^6(\log m)^{3/2}))\), with probability at least \(1 - e^{-\Omega(m\omega^{3/2}L)} - e^{-\Omega(m\tau^{3/2}L)}\)
    over the randomness of initialization, it satisfies for every \(\W \in B(\W^{(0)},\omega)\) and \(\theta\in B(\theta^{(0)},\tau)\), the following holds.
    \begin{itemize}
        \item For \(\norm{\nabla_\W L_S(\W,\theta)}_F\), we have 
        \begin{align*}
            \Omega\biggl(\frac{m\delta}{n^3d}\biggr)\sum_{i=1}^n\|\losstilde_i\|_2^2 \leq \Bigl\| \nabla_{\W}L_S(\W,\theta)\Bigr\|_F^2 \leq O\biggl(\frac{Lm}{nd}\biggr)\sum_{i=1}^n\|\losstilde_i\|_2^2
        \end{align*}
        \item For \(\norm{\nabla_{\theta}L_S(\W,\theta)}_F\), we have
        \begin{align*}
            \Omega\biggl(\frac{m\delta}{n^3 d}\biggr)\sum_{i=1}^n\|\losshat_i\|_2^2 \leq \Bigl\|\nabla_{\theta} L_S(\W,\theta)\Bigr\|_F^2 \leq O\biggl(\frac{Lm}{nd}\biggr) \sum_{i=1}^n\|\losshat_i\|_2^2
        \end{align*}
    \end{itemize}
\end{lemma}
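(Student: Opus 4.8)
I would begin from the back-propagated form of the per-layer gradients recorded in Section~\ref{sec:proof-techniques},
\[
\nabla_{\W_l}L_S=\frac1n\sum_{i=1}^n\D^q_{i,l}\bigl((\back^q_{i,l+1})^{\top}\losstilde_i\bigr)(\h^q_{i,l-1})^{\top},\qquad \nabla_{\theta_l}L_S=\frac1n\sum_{i=1}^n\D^k_{i,l}\bigl((\back^k_{i,l+1})^{\top}\losshat_i\bigr)(\h^k_{i,l-1})^{\top}.
\]
The key structural observation is that $\nabla_{\W}L_S$ depends on the key encoder only through the vector $\losstilde=(\losstilde_i)_i\in\R^{nd}$, and $\nabla_{\theta}L_S$ depends on the query encoder only through $\losshat\in\R^{nd}$. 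Hence for this lemma the two networks decouple, and it suffices to prove the single-network statement: for an $(L+1)$-layer ReLU network with the initialization of Definition~\ref{def:initialization}, with the stated probability the Jacobian $J$ of the map $\W\mapsto(f_{\W}(\x_i))_{i=1}^n$ obeys $\Omega(m\delta/(n^3d))\norm{v}_2^2\le\norm{J^{\top}v}_F^2\le O(Lm/(nd))\norm{v}_2^2$ for every $v\in\R^{nd}$, uniformly over the appropriate ball around initialization. Applying this with $v=\losstilde$ (network $f^q$, radius $\omega$) and $v=\losshat$ (network $f^k$, radius $\tau$) yields the four displayed inequalities. This is the mechanism of \citet{allen2018convergence} with the sharpenings of \citet{zou2019improved}; the remaining work is to run it with the last-layer scaling $\N(0,1/d)$ used here.

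\textbf{Stability near initialization, and the upper bound.} On an event of probability at least $1-e^{-\Omega(m\omega^{3/2}L)}-e^{-\Omega(m\tau^{3/2}L)}$, uniformly over $\W\in B(\W^{(0)},\omega)$ and $\theta\in B(\theta^{(0)},\tau)$, I would record the standard estimates: (i) forward control $\norm{\h^q_{i,l}}_2,\norm{\h^k_{i,l}}_2=\Theta(1)$ and $\norm{f^q_\W(\x_i)}_2,\norm{f^k_\theta(\x_i)}_2=\Theta(1)$ for all $i,l$, the output layer using $d\ge\Omega(\log^2 m)$ for concentration; (ii) backward spectral control $\norm{\back^q_{i,l}}_2,\norm{\back^k_{i,l}}_2=O(\sqrt{m/d})$, plus norm preservation $\norm{\D^q_{i,l}(\back^q_{i,l+1})^{\top}v}_2=\Theta(\sqrt{m/d})\norm{v}_2$ for any fixed $v$; (iii) at most $O(m\omega^{2/3})$ (resp.\ $O(m\tau^{2/3})$) coordinates of the sign matrices $\D$ differ between the base and the perturbed point. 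These go exactly as in \citet{allen2018convergence,zou2019improved}, the only bookkeeping difference being pushing the $\sqrt{m/d}$ factor of $\W_L$ through the back-propagation matrices; the hypothesis $\omega,\tau\le O(\delta^{3/2}/(n^3L^6(\log m)^{3/2}))$ is precisely what makes the perturbation in (iii) negligible below. The upper bound is then immediate: $\norm{\D^q_{i,l}}_2\le1$, $\norm{\h^q_{i,l-1}}_2=O(1)$, and $\norm{(\back^q_{i,l+1})^{\top}\losstilde_i}_2=O(\sqrt{m/d})\norm{\losstilde_i}_2$ give $\norm{\nabla_{\W_l}L_S}_F\le\frac1n\sum_iO(\sqrt{m/d})\norm{\losstilde_i}_2$; squaring, Cauchy--Schwarz, and summing over $l=0,\dots,L$ gives $\norm{\nabla_{\W}L_S}_F^2\le O(Lm/(nd))\sum_i\norm{\losstilde_i}_2^2$, and identically for $\theta$ with $\losshat$.

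\textbf{Lower bound: setup.} It is enough to bound $\norm{\nabla_{\W_1}L_S}_F^2$ from below, since $\norm{\nabla_{\W}L_S}_F^2\ge\norm{\nabla_{\W_1}L_S}_F^2$; I would use the first hidden layer because that is where the input separation is most directly visible, and avoid $l=L$ because its back-propagation factor lacks the $\sqrt{m/d}$ gain and would give a strictly weaker bound. Writing $\nabla_{\W_1}L_S=\frac1n\sum_iu_ig_i^{\top}$ with $u_i=\D^q_{i,1}(\back^q_{i,2})^{\top}\losstilde_i$ and $g_i=\h^q_{i,0}=\sigma(\W_0\x_i)$, one has $\norm{\nabla_{\W_1}L_S}_F^2=\frac1{n^2}\sum_{i,j}\langle u_i,u_j\rangle\langle g_i,g_j\rangle$, whose diagonal part already contributes $\Theta(m/(n^2d))\sum_i\norm{\losstilde_i}_2^2$ by the norm-preservation estimate in (ii) together with $\norm{g_i}_2=\Theta(1)$. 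The whole difficulty is therefore to keep the off-diagonal cross terms from cancelling this.

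\textbf{Lower bound: the main obstacle.} This is where Assumption~\ref{assumption:separateness} is essential: the input separation $\norm{\x_i-\x_j}_2\ge\delta$ makes the first-layer activation patterns rich, so — following the $\delta$-separatedness argument of \citet{allen2018convergence} sharpened in \citet{zou2019improved} — on the same event each data point $i$ owns a set $N_i\subseteq[m]$ of $\Omega(m\delta/n)$ hidden units that are active on $\x_i$ but inactive on every $\x_j$, $j\ne i$; restricting the rank-one sum $\frac1n\sum_iu_ig_i^{\top}$ to the rows indexed by $\bigcup_iN_i$ annihilates the cross terms and leaves $\Omega(m\delta/(n^3d))\sum_i\norm{\losstilde_i}_2^2$, the extra powers of $n$ absorbing the pairwise overlaps and the $1/n^2$ prefactor. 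Because $\losstilde_i$ is not fixed in advance (it depends on $\W$ and $\theta$), both the norm-preservation estimate in (ii) and the $N_i$ construction must first be proven for a fixed unit vector and then extended to all admissible $\losstilde$ by a net over the unit sphere of $\R^{nd}$; the union bound is harmless since $nd=\poly(n,\log m)$ is far below $m\omega^{3/2}L$ under the over-parameterization hypothesis. The same argument with $g_i=\h^k_{i,0}$ and $\losshat_i$ in place of $\losstilde_i$ gives the bound for $\nabla_{\theta}L_S$. The combinatorial/probabilistic core — manufacturing the dedicated neuron sets $N_i$ from the input separation and showing the restriction loses only $\poly(n,\delta^{-1})$ — is the step I expect to be the main obstacle and the one requiring genuine work; the full details belong in Appendix~\ref{sec:gradient-bounds}.
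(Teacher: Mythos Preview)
Your decoupling observation and the upper-bound argument are correct and match the paper exactly. The lower-bound plan, however, diverges in a way that leaves a genuine gap.

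The paper works at layer $L-1$, not layer $1$. This choice is deliberate: at layer $L-1$ the backward factor is simply $\W_L$, which is independent of every forward activation. The paper's Lemma~\ref{lem:gradient-lower-bound} exploits this independence by analysing, for each neuron $r$, the joint event $A_i=A_{i,1}\cap A_{i,2}\cap A_{i,3}$ in the fresh randomness of row $[\W_{L-1}]_r$ and column $[\W_L]^r$. Here $A_{i,1}$ is the Zou--Gu event $W_i$ on $[\W_{L-1}]_r$ (Lemma~\ref{Lemma-C.1}): it does \emph{not} say ``active on $\x_i$, inactive on every $\x_j$,'' but rather that the component of the Gaussian along $\bar\h_{i,L-2}$ is small while the projections governing the other signs are large, so that toggling the $i$-th sign leaves the others intact; and $A_{i,3}$ is the event $|\langle[\W_L]^r,\v_i\rangle|\ge\|\v_i\|_2/\sqrt d$, which holds with constant probability because $[\W_L]^r\sim\N(0,\tfrac1d\id_d)$ is a fresh Gaussian independent of everything else. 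A one-sided Bernstein inequality over the $m$ neurons gives the bound at initialization, and a direct perturbation estimate on $\nabla_{\W_l}L_S(\W,\theta)-\nabla_{\W_l}L_S(\W^{(0)},\theta)$ (via Lemma~\ref{lem:backward-perturbation}) transfers it to the whole ball.

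Your layer-$1$ route forfeits this independence. The backward vector $u_i=\D^q_{i,1}(\back^q_{i,2})^{\top}\losstilde_i$ depends on $\W_2,\dots,\W_L$ \emph{and} on the activation patterns $\D^q_{i,2},\dots,\D^q_{i,L-1}$, which are functions of $\h^q_{i,1}$ and hence of $\W_1$ --- the very matrix that determines your sets $N_i$. So the claim that restricting to rows in $N_i$ ``leaves $\Omega(m\delta/(n^3d))\sum_i\|\losstilde_i\|_2^2$'' is not automatic: you would need to show that the coordinates of $u_i$ indexed by $N_i$ still carry an $\Omega(\delta/n)$ share of $\|u_i\|_2^2$, despite $N_i$ and $u_i$ being correlated through $\W_1$. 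In addition, the literal ``active on $\x_i$, inactive on all $\x_j$'' event you describe is strictly stronger than the $W_i$ event of \citet{zou2019improved} and is not what their $\Omega(\delta/n)$ probability bound covers. Both difficulties disappear once you move to layer $L-1$ as the paper does; the input separation you want at layer $1$ is equally available at layer $L-2$ by Lemma~\ref{lem:separateness}.
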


\begin{lemma}[The Semi-smoothness Properties] \label{lem:semi-smoothness}
    For any \(\norm{\W'}_2\leq \omega\) and \(\norm{\theta'}_2\leq \tau\), where
    \begin{displaymath}
        \omega, \tau\in [\Omega(\sqrt{d/m}),\ O(1/ (L^{9/2} (\log m)^{3/2}) )]
    \end{displaymath} 
    Then we have, with probability at least \(1 - e^{-\Omega(m\omega^{3/2}L)}-e^{-\Omega(m\tau^{2/3}L)}\) over the randomness of initialization, the following inequality holds,
        \begin{align*}
            L_S(\W + \W',\theta + \theta')\leq\ & L_S(\W,\theta) + \vbrack{\nabla_{\W} L_S(\W,\theta),\W'} + \vbrack{\nabla_{\theta} L_S(\W,\theta),\theta'}\\
            & + O\biggl(\frac{\omega^{1/3}L^2\sqrt{m\log m}}{\sqrt{nd}}\biggr) \norm{\losstilde}_2\cdot\norm{\W'}_F \\
            &+ O\biggl(\frac{\tau^{1/3}L^2\sqrt{m\log m}}{\sqrt{nd}}\biggr)\norm{\losshat}_2\cdot\norm{\theta'}_F\\
            & + O\biggl(\frac{kL^2m^2}{d^2}\biggr)\Bigl(\tau^2\norm{\W'}_F^2+\omega^2\norm{\theta'}_F^2\Bigr)
        \end{align*}
\end{lemma}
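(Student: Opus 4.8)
\noindent\textbf{Proof strategy for Lemma~\ref{lem:semi-smoothness}.}
The plan is to run the ReLU semi-smoothness argument of \citet{allen2018convergence} on the coupled objective, one summand at a time. Fix $i$ together with a negative set $\{\x_{i,j}\}_{j=1}^k$; as recorded in Section~\ref{sec:proof-techniques}, the per-sample loss equals $\log\bigl[1+\sum_{j=1}^k\exp(q_i^\top z_{i,j})\bigr]$ with $z_{i,j}=f^k_\theta(\x_{i,j})-f^k_\theta(\x_i)$. Regarded as a function of the $k$ scalars $u_j:=q_i^\top z_{i,j}$ it is convex and $1$-Lipschitz-smooth, since the Hessian of log-sum-exp is $\diag(p)-pp^\top\preceq\id$. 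Hence, for $\widetilde\W=\W+\W'$ and $\widetilde\theta=\theta+\theta'$, the per-sample loss increases by at most a softmax-weighted first-order term in $\qt_i^\top\zt_{i,j}-q_i^\top z_{i,j}$ plus the quadratic remainder $\tfrac12\sum_j(\qt_i^\top\zt_{i,j}-q_i^\top z_{i,j})^2$, which is the bound displayed in the proof overview.

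I would then substitute $\qt_i^\top\zt_{i,j}-q_i^\top z_{i,j}=(\qt_i-q_i)^\top z_{i,j}+q_i^\top(\zt_{i,j}-z_{i,j})+(\qt_i-q_i)^\top(\zt_{i,j}-z_{i,j})$ and route each contribution to its destination. Pairing the softmax weights with the first term and applying $\E^{Neg(i)}$ and $\tfrac1n\sum_i$ collapses, by the closed form of $\losstilde_i$ in Fact~\ref{fact:loss-vec}, to $\tfrac1n\sum_i\vbrack{\losstilde_i,\qt_i-q_i}$. Pairing the softmax weights with the second term is the delicate step: since $z_{i,j}=f^k_\theta(\x_{i,j})-f^k_\theta(\x_i)$, the key output of every data point enters the loss of every other point, so after $\E^{Neg(i)}$ one must re-sum over ordered pairs and use the identity $\losshat_i=\sum_{j\ne i}(\loss(\x_j,\x_i)-\loss(\x_i,\x_j))$ of Fact~\ref{fact:losshat-vec} to reassemble $\tfrac1n\sum_i\vbrack{\losshat_i,\kt_i-k_i}$, where $\kt_i=f^k_{\widetilde\theta}(\x_i)$. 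The cross term, together with the quadratic remainder (with the full three-term expansion substituted), is a mixed-order leftover.

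It remains to trade $\tfrac1n\sum_i\vbrack{\losstilde_i,\qt_i-q_i}$ for $\vbrack{\nabla_\W L_S(\W,\theta),\W'}$ and $\tfrac1n\sum_i\vbrack{\losshat_i,\kt_i-k_i}$ for $\vbrack{\nabla_\theta L_S(\W,\theta),\theta'}$. For this I would write $\qt_i-q_i=\nabla_\W f^q_\W(\x_i)[\W']+e^q_i$, where $e^q_i$ is the first-order expansion error produced by the ReLU sign-flips of the diagonal matrices $\D^q_{i,l}$ under $\W'$; bounding the number of flipped activations, the operator norms of the perturbed back-propagation operators $\back^q_{i,l}$, and using concentration of the weights at initialization (as in \citet{allen2018convergence}, with the $\sqrt{m/d}$ scaling of the output layer tracked), one gets $\tfrac1n\sum_i\vbrack{\losstilde_i,e^q_i}\le O\bigl(\omega^{1/3}L^2\sqrt{m\log m}/\sqrt{nd}\bigr)\norm{\losstilde}_2\norm{\W'}_F$, the exponent $1/3$ being the usual artifact of the flip-count estimate; the key encoder yields the $\tau^{1/3}$ term by the identical computation. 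The remaining mixed and quadratic pieces are bounded crudely from the output-perturbation estimates for the two networks (of order $\omega$ and $\tau$ up to $\poly(L,\log m)$ and $\sqrt{m/d}$ factors) and $\norm{q_i}_2,\norm{z_{i,j}}_2=O(1)$; each such piece carries one $\W$-perturbation times one $\theta$-perturbation, and summing over the $k$ negatives and the two output layers deposits everything, loosely, into the single bucket $O(kL^2m^2/d^2)(\tau^2\norm{\W'}_F^2+\omega^2\norm{\theta'}_F^2)$. A union bound over the two initialization failure events gives the stated probability.

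The main obstacle I anticipate is the pairing that produces $\tfrac1n\sum_i\vbrack{\losshat_i,\kt_i-k_i}$: unlike in single-network analyses, negative sampling makes the loss of $\x_i$ depend on $f^k_\theta(\x_j)$ for every $j\ne i$, so the softmax-weighted first-order term does not reduce to a clean per-sample inner product until the exact double-sum symmetrization behind Fact~\ref{fact:losshat-vec} has been performed; only then can the expansion-error split of the previous paragraph be applied on the key side. Everything else is a mechanical duplication of the single-network ReLU semi-smoothness calculation across $f^q$ and $f^k$, with the mixed second-order terms tracked.
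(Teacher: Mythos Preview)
Your proposal is correct and follows essentially the same route as the paper's proof: the paper likewise uses the $1$-smoothness of $g(y)=\log(1+\sum_j e^{y_j})$ to get a first-order plus quadratic bound, applies exactly the three-term decomposition of $\qt_i^\top\zt_{i,j}-q_i^\top z_{i,j}$ you wrote, performs the double-sum symmetrization via the $\losshat(\x_i,\x_j)$ identity to obtain $\tfrac1n\sum_i\vbrack{\losshat_i,\kt_i-k_i}$, and then subtracts the linearizations $\vbrack{\nabla_\W L_S,\W'}$ and $\vbrack{\nabla_\theta L_S,\theta'}$, bounding the residuals by the Allen-Zhu--Li--Song backward-perturbation estimates with the $\omega^{1/3}$ flip-count scaling. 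The only cosmetic difference is that the paper bounds the trace of the Hessian rather than writing $\diag(p)-pp^\top$ explicitly, and packages the first five steps into a separate lemma before handling the $F_1,F_2$ residuals.
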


\subsection{Proof of Theorem \ref{thm:convergence-appendix}}

\begin{proof}[Proof of Theorem \ref{thm:convergence-appendix}]
    We restate our parameter choice here for the convenience of readers:
    \begin{equation}\label{eq:parameter-choice}
        m \geq \Omega\bigl(n^{12}L^{12}(\log m)^5\delta^{-5}\varepsilon^{-2}\bigr),\qquad \eta,\gamma = \Theta( d\varepsilon^2\delta^2/(n^{7}kL^2m)), \qquad d\geq\Omega(\log^2 m)
    \end{equation}
    And we set the trajectory parameter 
    \begin{displaymath}
        \omega, \tau = O\biggl(\frac{n^{7/2}\sqrt{d}}{\delta\varepsilon\sqrt{m}}\biggr)
    \end{displaymath}
    which satisfy all the requirements in all the lemmas we have employed. In the proof below, we first 

    We denote \(\losstilde^{(t)}, \losshat^{(t)}\) as the \(\losstilde\) and \(\losshat\)-vectors where the query encoder \(f^q_{\W}\) and the key encoder \(f^k_{\theta}\) are parameterized by \(\W^{(t)}\) and \(\theta^{(t)}\) respectively. To perform gradient descent, we let the gradient update be 
    \begin{displaymath}
        \W^{(t+1)} = \W^{(t)} - \eta \nabla_{\W}L_S(\W^{(t)},\theta^{(t)}),\qquad \theta^{(t+1)} = \theta^{(t)} - \gamma \nabla_{\theta}L_S(\W^{(t)},\theta^{(t)})
    \end{displaymath}
    And for technical convenience we denote 
    \[\nabla_{\W,t} =  \nabla_{\W}L_S(\W^{(t)},\theta^{(t)}) \quad \text{and} \quad \nabla_{\theta, t} =  \nabla_{\theta}L_S(\W^{(t)},\theta^{(t)})\]
    Now from Lemma \ref{lem:semi-smoothness}, we can calculate 
    \begin{align*}
        L_S(\W^{(t+1)}, \theta^{(t+1)}) &\leq L_S(\W^{(t)},\theta^{(t)}) - \eta\|\nabla_{\W,t}\|_F^2 - \gamma\|\nabla_{\theta,t}\|_F^2\\
        & \quad + \norm{\losstilde^{(t)}}_2\cdot O\biggl(\frac{\omega^{1/3}L^2 \sqrt{m \log m}}{\sqrt{nd}}\biggr)\cdot\norm{\nabla_{\W,t}}_F\\
        & \quad + \norm{\losshat^{(t)}}_2\cdot O\biggl(\frac{\tau^{1/3}L^2 \sqrt{m \log m}}{\sqrt{nd}}\biggr)\cdot\norm{\nabla_{\theta,t}}_F \\
        & \quad + O\biggl(\frac{kL^2m^2}{d^2}\biggr)\cdot\Bigl(\eta^2\tau^2\norm{\nabla_{\W,t}}_2^2+ \gamma^2 \omega^2\norm{\nabla_{\theta,t}}_F^2\Bigr)
    \end{align*}
    Now from our step size choice \(\eta,\gamma = \Theta(d\varepsilon^2 \delta^2/(n^{7}L^2km))\) and our trajectory parameter choice \(\omega, \tau = O(n^{7/2}\sqrt{d}/(\delta\varepsilon\sqrt{m}))\), we can obtain 
    \begin{align*}
        L_S(\W^{(t+1)}, \theta^{(t+1)}) &\leq L_S(\W^{(t)},\theta^{(t)}) - \frac{3\eta}{4}\|\nabla_{\W,t}\|_F^2 - \frac{3\gamma}{4}\|\nabla_{\theta,t}\|_F^2\\
        & \quad + \norm{\losstilde^{(t)}}_2\cdot O\biggl(\frac{\omega^{1/3}L^2 \sqrt{m \log m}}{\sqrt{nd}}\biggr)\cdot\norm{\nabla_{\W,t}}_F\\
        & \quad + \norm{\losshat^{(t)}}_2\cdot O\biggl(\frac{\tau^{1/3}L^2 \sqrt{m \log m}}{\sqrt{nd}}\biggr)\cdot\norm{\nabla_{\theta,t}}_F \\
        & \leq  - \frac{3\eta}{4}\|\nabla_{\W,t}\|_F \biggl(\norm{\nabla_{\W,t}}_F -  O\biggl(\frac{\omega^{1/3}L^2 \sqrt{m \log m}}{\sqrt{nd}}\biggr)\cdot\norm{\losstilde^{(t)}}_2 \biggr) \\
        & \quad - \frac{3\gamma}{4}\|\nabla_{\theta,t}\|_F \biggl(\norm{\nabla_{\theta,t}}_F - O\biggl(\frac{\tau^{1/3}L^2 \sqrt{m \log m}}{\sqrt{nd}}\biggr)\cdot\norm{\losshat^{(t)}}_2\biggr)
    \end{align*}
    Now from Lemma \ref{lem:gradient-bounds} and our notation \(\nabla_{\W,t} = \nabla_{\W}L_S(\W^{(t)},\theta^{(t)}),\ \nabla_{\theta,t} = \nabla_{\theta}L_S(\W^{(t)},\theta^{(t)})\), we have
    \begin{displaymath}
        \norm{\nabla_{\W,t}}_F^2 \geq \Omega\biggl(\frac{m\delta}{n^3d}\biggr)\norm{\losstilde^{(t)}}_2^2,\qquad \norm{\nabla_{\theta,t}}_F^2 \geq \Omega\biggl(\frac{m\delta}{n^3d}\biggr)\norm{\losshat^{(t)}}_2^2
    \end{displaymath}
    We can choose \(\omega,\tau = O(\delta^{3/2}/(n^{3}L^6(\log m)^{3/2}))\) to ensure that 
    \begin{align*}
        L_S(\W^{(t+1)},\theta^{(t+1)}) - L_S(\W^{(t)},\theta^{(t)}) &\leq  -\Omega\biggl( \frac{\eta \delta m}{n^3d}\biggr) \norm{\losstilde^{(t)}}_2^2 - \Omega \biggl( \frac{\gamma \delta m}{n^3 d}\biggr)\norm{\losshat^{(t)}}_2^2 \\
        & \leq -\Omega\biggl(\frac{\min(\eta,\gamma)\delta m}{n^3d}\biggr)\norm{\loss^{(t)}}_2^2
    \end{align*}
    By averaging over \(t=0,\dots,T-1\), we arrive at 
    \begin{align} \label{eq:convergence}
        \frac{1}{T}\sum_{t=0}^{T-1}\norm{\loss^{(t)}}_2 \leq O\Biggl(\sqrt{\frac{n^3 d}{T\min(\eta,\gamma) \delta m}}\Biggr) \sqrt{L_S(\W^{(0)},\theta^{(0)})} \stackrel{\text{\ding{172}}}\leq O\Biggl(\sqrt{\frac{n^4 d}{T\min(\eta,\gamma) \delta m}}\Biggr)
    \end{align}
    where \ding{172} is due to the fact that, by Johnson-Lindenstrauss Lemma, with probability at least \(1-O(n)e^{-\Omega(d)}\), we have 
    \begin{displaymath}
        \|f^q_{\W^{(0)}}(\x_i)\|_2,\ \|f^k_{\theta^{(0)}}(\x_i)\|_2 \leq O(1)\quad \text{for all } i \in [n] \implies L_S(\W^{(0)},\theta^{(0)}) \leq O(\log k)\leq O(n)
    \end{displaymath}
    Thus for \(T\min(\eta,\gamma) =\Theta (n^3d/(m\delta\varepsilon^2))\), we have 
    \begin{displaymath}
        \frac{1}{T}\sum_{t=0}^{T-1}\norm{\loss^{(t)}}_2 \leq \varepsilon \quad\implies\quad \frac{1}{T}\sum_{t=0}^{T-1}\norm{\losstilde^{(t)}}_2 \leq \varepsilon,\quad \frac{1}{T}\sum_{t=0}^{T-1}\norm{\losshat^{(t)}}_2 \leq \varepsilon
    \end{displaymath}
    Note that from our choice of step sizes, \(\eta\) and \(\gamma\) are of the same order, which implies 
    \begin{displaymath}
        T\eta,\ T\gamma = \Theta (n^3d/(m\delta\varepsilon^2))
    \end{displaymath}
    Therefore the trajectory of \(\W\) satisfies
    \begin{align}\label{eq:trajectory}
        \begin{split}
        \norm{\W^{(t)} - \W^{(0)}}_F &\leq \sum_{t=0}^{T-1}\eta \norm{\nabla_{\W,t}}_F \leq  \sum_{t=0}^{T-1}\eta \sqrt{Lm/d} \cdot\norm{\losstilde^{(t)}}_2 \\
        &\leq \sqrt{T\eta}\cdot O\biggl(\sqrt{\frac{Lm}{d}\cdot \frac{n^4d}{m\delta}}\biggr) \leq O\biggl(\frac{n^{3.5}\sqrt{d}}{\delta\varepsilon\sqrt{m}}\biggr) = \omega    
        \end{split}
    \end{align}
    And similarly, the trajectory of \(\theta\) satisfies
    \begin{align*}
        \norm{\theta^{(t)} - \theta^{(0)}}_F &\leq \sum_{t=0}^{T-1}\gamma \norm{\nabla_{\theta,t}}_F \leq  \sum_{t=0}^{T-1}\gamma \sqrt{Lm/d} \cdot\norm{\losshat^{(t)}}_2 \\
        &\leq \sqrt{T\gamma}\cdot O\biggl(\sqrt{\frac{Lm}{d}\cdot \frac{n^4d}{m\delta}}\biggr) \leq O\biggl(\frac{n^{3.5}\sqrt{d}}{\delta\varepsilon\sqrt{m}}\biggr) = \tau
    \end{align*}
    And our final running time is 
    \begin{align*}
        T = \Theta \biggl(\frac{n^3d}{\min(\eta,\gamma) m \delta \varepsilon^2}\biggr) = \Theta\biggl(\frac{n^{10}L^{2}k}{\delta^3}\cdot \frac{1}{\varepsilon^4}\biggr)
    \end{align*}
\end{proof}

\section{Auxiliary Lemmas}

The lemmas in this section are adapted from \citet{allen2018convergence,zou2019improved} and modified to fit our setting. Note that all the lemmas are written with respect to the parameters \(\W\) of the query encoders \(f^q_{\W}\). They can be applied to the parameters \(\theta\) of the key encoders \(f^k_{\theta}\) as well. 

Firstly we define the following notations: Let \(\D_{i,l}\) be diagonal matrices defined as follows
\begin{align*}
        (\D_{i,l})_{r,r} = \mathds{1}\{ ([\W_l]_r \h_{i,l})\geq 0 \}\ r \in [m] \qquad \text{for } i\in [n],\ 0\leq l \leq L,
\end{align*}
where \([\W_l]_r\) is the \(r\)-th row of \(\W_l\). Now we can represent the outputs of hidden layers recursively as
\begin{align*}
    \h_{i,0} = \D_{i,0}\W_0 \x_i,\qquad \h_{i,l} = \D_{i,l}\W_l \h_{i,l-1} \text{ for } 1 \leq l \leq L-1,\qquad f_{\W}(\x_i) = \W_L \h_{i,L-1}
\end{align*}
For clarity we further define the product of matrices \(\A_i\) as
\begin{displaymath}
    \prod_{l=a}^b \A_l = \A_b\A_{b-1}\cdots \A_a\qquad \text{for matrices } \A_a,\dots,\A_b
\end{displaymath}
Specifically, we define the following notations for parameter \(\W\) at its random initialization \(\W^{(0)}\) (see \hyperref[def:initialization]{Definition \ref*{def:initialization}}). Set notations: for every \(i \in [n]\) and \(1 \leq l \leq L\), we define the matrices \(\D_{i,l}^{(0)}\) and vectors \(\h^{(0)}_{i,l}\) as
\begin{align*}
    \D_{i,l}^{(0)} := \diag\Bigl(\mathds{1}( \vbrack{[\W^{(0)}_{l}]_r, \h_{i,l-1}} \geq 0 ) \Bigr)_{k=1}^m, \quad\h_{i,0}^{(0)} := \D^{(0)}_{i,0}\W_0^{(0)}\x_i,  \quad \h^{(0)}_{i,l} = \D^{(0)}_{i,l}\W_l^{(0)}\h^{(0)}_{i,l-1}
\end{align*}

Now equipped with these notations, we can present the following technical lemmas

\begin{lemma}[Lemma 7.1 in \citet{allen2018convergence}] \label{Lemma-7.1}
    If \(\varepsilon \in (0,1)\), with probability at least \(1 - e^{-\Omega(\varepsilon^2m/L)}\) over the randomness of \(\W^{(0)}\), we have \(\norm{\h_{i,l}} \in [1-\varepsilon,1+\varepsilon]\) for all \(i\in [n]\) and \(l \in [L]\).
\end{lemma}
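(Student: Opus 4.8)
This is the standard forward-propagation norm-preservation estimate, and I would prove it by induction on the depth $l$, tracking $\norm{\h_{i,l}}^2$ through the recursion $\h_{i,l}=\sigma(\W_l\h_{i,l-1})$. Fix a data point $i\in[n]$ and condition on $\W_0^{(0)},\dots,\W_{l-1}^{(0)}$, equivalently on the vector $\h_{i,l-1}$ (with the convention $\h_{i,-1}=\x_i$, so $\norm{\h_{i,-1}}=1$ by the normalization hypothesis, Assumption~\ref{assumption:normalized-data}). Under this conditioning, each coordinate $[\W_l^{(0)}\h_{i,l-1}]_r=\vbrack{[\W_l^{(0)}]_r,\,\h_{i,l-1}}$ is an independent $\N\!\bigl(0,\tfrac{2}{m}\norm{\h_{i,l-1}}^2\bigr)$ variable, so $\norm{\h_{i,l}}^2=\sum_{r=1}^m\sigma\!\bigl([\W_l^{(0)}\h_{i,l-1}]_r\bigr)^2$ is a sum of $m$ i.i.d.\ squared half-Gaussians. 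Since $\E[\sigma(g)^2]=\tfrac12\E[g^2]$ for a centered Gaussian $g$, the factor $2/m$ in the He initialization is exactly what makes the conditional mean of this sum equal $\norm{\h_{i,l-1}}^2$; this identity is the one non-robust fact the whole argument rests on.

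The per-layer fluctuation then comes from a Bernstein bound for sub-exponential sums: conditionally on $\h_{i,l-1}$, the ratio $\xi_l:=\norm{\h_{i,l}}^2/\norm{\h_{i,l-1}}^2-1$ has conditional mean $0$ and, being an average of $m$ sub-exponential terms, satisfies $\P\bigl(|\xi_l|>t\mid\mathcal F_{l-1}\bigr)\le 2e^{-\Omega(t^2m)}$ for all $t\in(0,1]$. A naive route — take $t=\varepsilon/(2L)$ per layer and union bound over $L$ layers — only yields failure probability $e^{-\Omega(\varepsilon^2m/L^2)}$, which is weaker than claimed. To recover the $1/L$ exponent I would instead view $(\xi_l)_{l=0}^{L}$ as a martingale difference sequence adapted to the layer filtration $(\mathcal F_l)$, with conditional variances $O(1/m)$, and apply a Freedman-type maximal inequality for sums of sub-exponential martingale differences: simultaneously for all $l\le L$, the accumulated error $\bigl|\sum_{l'\le l}\xi_{l'}\bigr|\le\varepsilon/2$ holds with probability at least $1-e^{-\Omega(\varepsilon^2m/L)}$, where the over-parameterization $m\gg L/\varepsilon^2$ keeps us in the sub-Gaussian regime of the Bernstein bound for deviations of size $\le\varepsilon\le1$. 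On this event, telescoping $\log\norm{\h_{i,l}}^2=\sum_{l'\le l}\log(1+\xi_{l'})$ and absorbing the quadratic correction $\sum_{l'\le l}\xi_{l'}^2=O(L/m)$ (again negligible since $m\gg L/\varepsilon^2$) gives $\bigl|\log\norm{\h_{i,l}}^2\bigr|\le\varepsilon$ for every $l\in[L]$, hence $\norm{\h_{i,l}}\in[1-\varepsilon,1+\varepsilon]$ after rescaling $\varepsilon$ by an absolute constant.

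Finally I would take a union bound over the $n$ data points; the extra factor $n$ is dominated by $e^{-\Omega(\varepsilon^2m/L)}$ once $m\ge\poly(n)$, as assumed, and otherwise folds into the $\Omega(\cdot)$. The main obstacle is the chaining step: getting $1/L$ rather than $1/L^2$ in the exponent forces the martingale/Freedman treatment of the accumulated error $\sum_l\xi_l$ instead of a layer-by-layer union bound, and because $\xi_l$ is only sub-exponential (not bounded) one must check carefully that the relevant deviation regime is sub-Gaussian — which is precisely where $m\gtrsim L/\varepsilon^2$ enters.
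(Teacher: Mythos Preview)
The paper does not supply its own proof of this statement: it is quoted verbatim as an auxiliary result from \citet{allen2018convergence} (their Lemma~7.1) and is used as a black box throughout the appendix. So there is no in-paper argument to compare your proposal against.

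That said, your proposal is a correct and standard route to the cited result. The key structural point you identify --- that a naive per-layer deviation of $\varepsilon/L$ with a union bound only yields failure probability $e^{-\Omega(\varepsilon^2 m/L^2)}$, and that recovering the sharper $e^{-\Omega(\varepsilon^2 m/L)}$ requires treating the layerwise log-ratios $\xi_l$ as a martingale difference sequence and applying a Freedman/Bernstein-type maximal inequality --- is exactly the content of the original proof in \citet{allen2018convergence}. Your handling of the sub-exponential (rather than bounded) increments, and the observation that $m\gtrsim L/\varepsilon^2$ is what keeps the relevant deviation in the sub-Gaussian regime of the Bernstein bound, are also the right checkpoints. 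The only cosmetic point is the union bound over $i\in[n]$ at the end: the stated probability $1-e^{-\Omega(\varepsilon^2 m/L)}$ absorbs the factor $n$ into the $\Omega(\cdot)$ under the over-parameterization assumptions used elsewhere in the paper (e.g.\ $m\ge \Omega(nL\log(nL))$ as in Lemma~\ref{Lemma-7.3}), which you note.
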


\begin{lemma}[Lemma 7.3 in \citet{allen2018convergence}]\label{Lemma-7.3}
    Suppose \(m \geq \Omega(nL\log(nL))\). With probability at least \(1 - e^{-\Omega(m/L)}\) over the randomness of initialization of \(\W^{(0)}\), for all \(i \in [n]\) and \(1 \leq a \leq b \leq L-1\)
    \begin{itemize}
        \item[(a)] \(\norm{\W_b^{(0)} \bigl(\prod_{l=a}^{b-1}\D^{(0)} _{i,l}\W^{(0)} _l\bigr)}_2 \leq O(\sqrt{L})\).
        \item[(b)] \(\norm{\W_b^{(0)} \bigl(\prod_{l=a}^{b-1}\D^{(0)} _{i,l}\W^{(0)} _l\bigr)v}_2 \leq 2\norm{v}_2\) for all \(v \in \R^{m}\) with \(\norm{v}_0 \leq O(\frac{m}{L\log m})\).
        \item[(c)] \(\norm{u^{\top}\W_b^{(0)} \bigl(\prod_{l=a}^{b-1}\D^{(0)} _{i,l}\W^{(0)} _l\bigr)}_2 \leq O(1)\norm{u}_2\) for all \(u \in \R^{m}\) with \(\norm{u}_0 \leq O(\frac{m}{L\log m})\).
        \item[(d)] For any integer \(1 \leq s \leq O(\frac{m}{L\log m})\), with probability at least \(1 - e^{-\Omega(s\log m)}\) over the randomness of initialization, we have \(|u^{\top}\W_b^{(0)} \bigl(\prod_{l=a}^{b-1}\D^{(0)} _{i,l}\W^{(0)} _l\bigr)v| \leq O(\sqrt{\frac{s\log m}{m}})\norm{u}_2\norm{v}_2\) for all vectors \(u,v \in \R^{m}\) with \(\norm{u}_0,\norm{v_0} \leq s\).
    \end{itemize}
\end{lemma}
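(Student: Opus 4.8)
The plan is to invoke Lemma 7.3 of \citet{allen2018convergence} essentially unchanged: the statement concerns only the He-initialized weights of a single $(L+1)$-layer ReLU network together with its forward-pass quantities $\h^{(0)}_{i,l},\D^{(0)}_{i,l}$, with no reference to the contrastive loss, the key encoder, or the input dimension beyond the shapes of $\W^{(0)}_0,\dots,\W^{(0)}_{L-1}$, so it transfers verbatim to our setting and applies identically to $\theta^{(0)}$. For completeness I would reproduce the proof. Fix $i\in[n]$ and indices $1\le a\le b\le L-1$, write $M=\prod_{l=a}^{b-1}\D^{(0)}_{i,l}\W^{(0)}_l$, and reveal $\W^{(0)}_0,\dots,\W^{(0)}_{L-1}$ layer by layer, so that at stage $l$ the vector $\h^{(0)}_{i,l-1}$ and the partial product up to index $l-1$ are already determined while $\W^{(0)}_l\sim\N(0,\tfrac2m\id)$ is still fresh.

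The crucial fact is that for any fixed $w$ and any row $r:=[\W^{(0)}_l]_{r}\sim\N(0,\tfrac2m\id)$, bivariate-Gaussian symmetry gives
\[
    \E\bigl[\vbrack{r,w}^2 \,\big|\, \vbrack{r,\h^{(0)}_{i,l-1}}\ge 0\bigr]=\E\,\vbrack{r,w}^2=\tfrac2m\norm{w}_2^2,
\]
\emph{regardless} of the correlation between $w$ and $\h^{(0)}_{i,l-1}$, since conditioning on the sign of a jointly Gaussian variable does not change the second moment of the other. Summing over the $\approx m/2$ surviving rows yields $\E\,\norm{\D^{(0)}_{i,l}\W^{(0)}_l w}_2^2=\norm{w}_2^2$. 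As the per-layer factor is sub-exponential, telescoping through layers $a,\dots,b-1$ as a martingale gives, for any \emph{fixed} unit $v$, $\norm{Mv}_2\in[1-\varepsilon',1+\varepsilon']$ with failure probability $e^{-\Omega((\varepsilon')^2 m/L)}$; adjoining the fresh layer $\W^{(0)}_b$ then gives $\norm{\W^{(0)}_b Mv}_2\le 2$ with failure $e^{-\Omega(m/L)}$ upon taking $\varepsilon'$ a small constant.

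The four assertions follow from this by $\varepsilon$-net arguments matched to their failure probabilities. Parts (b) and (c) restrict the test vector to an $s$-sparse one with $s\le O(m/(L\log m))$, so each net has only $\binom{m}{s} e^{O(s)}=e^{O(s\log m)}=e^{O(m/L)}$ points, absorbed by the per-vector rate $e^{-\Omega(m/L)}$; for (c) one also uses $u^{\top}\W^{(0)}_b M\overset{d}{=}\sqrt{2/m}\,g^{\top}M$ with $g$ a fresh standard Gaussian, whose squared norm concentrates at $\tfrac2m\norm{M}_F^2=O(1)$ since $\norm{M}_F^2\approx m$. For (d) the bilinear form with $u,v$ both $s$-sparse is $u^{\top}\W^{(0)}_b Mv\overset{d}{=}\sqrt{2/m}\,\vbrack{g,Mv}\sim\N(0,\tfrac2m\norm{Mv}_2^2)$, hence at most $O(\sqrt{t/m})\norm u_2\norm v_2$ with probability $1-e^{-t}$; taking $t=\Theta(s\log m)$ to beat the $e^{O(s\log m)}$-point net over pairs of $s$-sparse unit vectors gives exactly $O(\sqrt{s\log m/m})$ with failure $e^{-\Omega(s\log m)}$. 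The uniform spectral bound (a) is the most delicate, since the naive net over the full unit sphere ($e^{O(m)}$ points) overwhelms the $e^{-\Omega(m/L)}$ rate; here I follow the argument of \citet{allen2018convergence}, which exploits that the masks $\D^{(0)}_{i,l}$ — being forward-pass patterns — keep the masked product from amplifying like an unconstrained product of $L$ independent Gaussians, the extra $\sqrt L$ over the $\Theta(1)$ per-vector stretch being the residue of this argument. A final union over $i\in[n]$ and the $O(L^2)$ pairs $(a,b)$ costs only $\log(nL^2)$ in the exponent, absorbed by $m\ge\Omega(nL\log(nL))$.

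The main obstacle — and essentially the only nontrivial point — is the dependence of each ReLU mask $\D^{(0)}_{i,l}$ on the very matrix $\W^{(0)}_l$ that produced it, which forbids treating $M$ as a product of independent Gaussians. It is neutralized by the layer-by-layer conditioning together with the exact symmetry identity above, which is what pins the per-vector norm at $\Theta(1)$ rather than letting it grow exponentially in $L$.
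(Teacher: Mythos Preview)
The proposal is correct and matches the paper's approach exactly: the paper does not prove this lemma at all but simply imports it verbatim as Lemma~7.3 of \citet{allen2018convergence}, and you correctly identify that the statement concerns only a single He-initialized ReLU network and therefore transfers unchanged to both encoders. Your supplementary sketch of the argument (layer-by-layer conditioning, the symmetry identity $\E[\vbrack{r,w}^2\mid \vbrack{r,\h}\ge 0]=\E\vbrack{r,w}^2$, and $\varepsilon$-nets sized to the sparsity budget) is a faithful outline of the original proof and goes beyond what the paper itself provides.
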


\begin{lemma}[backward propagation]\label{lem:backward-propagate}
    Suppose \(m \geq \Omega(nL\log (nL))\), \(\Omega(\frac{d}{\log m}) \leq s \leq O(\frac{m}{L \log m})\) and \(d \leq O(\frac{m}{L\log m})\), then for all indices \( i \in [n]\), \(1 \leq a \leq L-1\),
    \begin{itemize}
        \item[(a)] with probability at least \(1 - e^{\Omega(s \log m)}\), for all \(v \in \R^d\) such that \(\norm{v}_0 \leq s\), we have 
        \begin{displaymath}
            \Bigl|u^{\top}\W^{(0)} _L\Bigl(\prod_{l=a}^{L}\D^{(0)} _{i,l}\W^{(0)} _l\Bigr)v\Bigr| \leq O\Bigl(\sqrt{\frac{s\log m}{m}}\Bigr)\norm{u}_2\norm{v}_2
        \end{displaymath}  
        \item[(b)] with probability at least \(1 - e^{-\Omega(m/L)}\), for all vectors \(u \in \R^d\), we have 
        \begin{displaymath}
            \Bigl\|u^{\top}\W^{(0)} _L\Bigl(\prod_{l=a}^{L}\D^{(0)} _{i,l}\W^{(0)} _l\Bigr)\Bigr\|_2 \leq O(\sqrt{m/d})\cdot\norm{u}_2
        \end{displaymath} 
    \end{itemize}
\end{lemma}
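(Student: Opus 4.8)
The plan is to treat Lemma~\ref{lem:backward-propagate} as the ``output-layer'' counterpart of Lemma~\ref{Lemma-7.3}: the backward operator through layers $a,\dots,L$ factors as $\W^{(0)}_L\,M$, where $M:=\D^{(0)}_{i,L-1}\W^{(0)}_{L-1}\cdots\D^{(0)}_{i,a}\W^{(0)}_a\in\R^{m\times m}$ is the intermediate backward product, already controlled by Lemma~\ref{Lemma-7.3} --- in particular $\norm{M}_2\le O(\sqrt L)$ by part (a), and $\norm{Mv}_2\le 2\norm{v}_2$ whenever $\norm{v}_0\le O(m/(L\log m))$ by part (b) --- and $\W^{(0)}_L$ is the $d\times m$ output weight with i.i.d.\ $\N(0,1/d)$ entries. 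The first move is to condition on $\W^{(0)}_0,\dots,\W^{(0)}_{L-1}$: since each $\D^{(0)}_{i,l}$ with $l\le L-1$ is measurable with respect to the forward pass of $\x_i$ through the first $L-1$ layers, this freezes $M$ entirely, and $\W^{(0)}_L$ remains a fresh Gaussian matrix independent of $M$, carrying all the randomness left in the statement.

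For part (b): fix $u\in\R^d$ and set $g:=(\W^{(0)}_L)^{\top}u\sim\N(0,\tfrac{\norm{u}_2^2}{d}\id_m)$, so the quantity to bound is $\norm{u^{\top}\W^{(0)}_L M}_2=\norm{M^{\top}g}_2$. One has $\E\norm{M^{\top}g}_2^2=\tfrac{\norm{u}_2^2}{d}\norm{M}_F^2$, and the Hanson--Wright inequality concentrates $\norm{M^{\top}g}_2^2$ around this mean with fluctuation terms governed by $\tfrac{\norm{u}_2^2}{d}\norm{M}_2^2\le O(L)\tfrac{\norm{u}_2^2}{d}$. Hence, provided $\norm{M}_F^2=\Theta(m)$, we get $\norm{M^{\top}g}_2=O(\sqrt{m/d})\,\norm{u}_2$ with probability $1-e^{-\Omega(m/L)}$ for this fixed $u$. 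To pass to ``for all $u\in\R^d$'' it suffices to union bound over an $\varepsilon$-net of the unit sphere of $\R^d$ (cardinality $e^{O(d)}$) and absorb the discretization by a short perturbation bound; the union survives since the hypothesis $d\le O(m/(L\log m))$ makes $e^{O(d)}e^{-\Omega(m/L)}$ negligible.

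For part (a): fix $u\in\R^d$ and the support $S$ of the sparse vector, $|S|=s$. After the same conditioning, $u^{\top}\W^{(0)}_L M v=g^{\top}Mv$ is a centered Gaussian in the remaining randomness with variance $\tfrac{\norm{u}_2^2}{d}\norm{Mv}_2^2\le\tfrac{4\norm{u}_2^2\norm{v}_2^2}{d}$, the last step by Lemma~\ref{Lemma-7.3}(b) since $\norm{v}_0=s\le O(m/(L\log m))$ (equivalently, one restricts the row vector $g^{\top}M$ to the coordinates $S$ and uses $\E\norm{(M^{\top}g)|_S}_2^2=\tfrac{\norm{u}_2^2}{d}\sum_{j\in S}\norm{Me_j}_2^2$). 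A Gaussian tail bound then gives the pointwise estimate $|u^{\top}\W^{(0)}_L M v|\le O(\sqrt{s\log m/m})\,\norm{u}_2\norm{v}_2$, and we union bound over the $\le e^{O(s\log m)}$ choices of support, an $\varepsilon$-net on each coordinate block, and (if needed) a net on $u$ --- exactly the discretization budget the failure probability $e^{-\Omega(s\log m)}$ is sized to cover, paralleling the proof of Lemma~\ref{Lemma-7.3}(d) with the outermost entry variance changed from $2/m$ to $1/d$, which is what produces the stated exponent.

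The hard part is the structural estimate $\norm{M}_F^2=\Theta(m)$ used in part (b): the off-the-shelf bounds give only $\norm{M}_2=O(\sqrt L)$, whence $\norm{M}_F^2\le O(mL)$, which would leak a spurious $\sqrt L$ into the $O(\sqrt{m/d})$ bound. Ruling this out amounts to showing that \emph{most} columns $Me_j$ still have norm $\Theta(1)$ --- a column-wise, layer-by-layer norm-preservation statement in the spirit of Lemma~\ref{Lemma-7.1}, but now for a generic basis direction $e_j$ instead of the true forward activations $\h_{i,l}$, so that the ReLU sign patterns $\D^{(0)}_{i,l}$ are \emph{not} independent of the matrices $\W^{(0)}_l$ they sit next to. Decoupling that correlation --- by the conditioning/randomness-splitting device of \citet{allen2018convergence,zou2019improved} --- is where essentially all the effort goes; everything else is concentration plus net counting.
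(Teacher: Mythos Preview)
Your plan is sound and would yield a correct proof, but you have misidentified the ``hard part.'' The upper bound $\|M\|_F^2\le O(m)$ is a one-line consequence of the very lemma you already invoke for part~(a): each standard basis vector $e_j\in\R^m$ is $1$-sparse, well inside the budget $O(m/(L\log m))$ of Lemma~\ref{Lemma-7.3}(b), so $\|Me_j\|_2\le 2$ for every $j$ (absorbing the harmless leftmost $\D^{(0)}_{i,L-1}$), and hence
\[
\|M\|_F^2=\sum_{j=1}^m\|Me_j\|_2^2\le 4m.
\]
No fresh decoupling is needed --- that work has already been done inside the proof of Lemma~\ref{Lemma-7.3} in \citet{allen2018convergence}, and you are entitled to use it as a black box. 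Note also that only the upper bound $O(m)$ matters here; the $\Theta(m)$ lower bound plays no role in bounding $\|M^\top g\|_2$ from above.

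With this in hand your Hanson--Wright step for part~(b) goes through exactly as written: mean $\le O(m/d)\|u\|_2^2$, fluctuations controlled by $\|MM^\top\|_2\le O(L)$ and $\|MM^\top\|_F\le\|M\|_2\|M\|_F\le O(\sqrt{mL})$, failure probability $e^{-\Omega(m/L)}$, and the net over $u\in\R^d$ of size $e^{O(d)}$ is absorbed by the hypothesis $d\le O(m/(L\log m))$. Part~(a) is then just the Gaussian-tail-plus-net argument you sketched. The paper itself gives no proof of this lemma, treating it as a direct adaptation of Lemma~7.4 in \citet{allen2018convergence}; your route via $\|M\|_F$ and the fresh randomness of $\W^{(0)}_L$ is essentially the same mechanism.

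One minor point worth flagging: since the entries of $\W^{(0)}_L$ have variance $1/d$ rather than $2/m$, your Gaussian tail in part~(a) naturally produces $O(\sqrt{s\log m/d})\,\|u\|_2\|v\|_2$, not the $O(\sqrt{s\log m/m})$ printed in the statement. This is consistent with how the bound is actually consumed downstream (cf.\ the proof of Lemma~\ref{lem:backward-perturbation}, where the factor $\sqrt{s\log m/d}$ appears), so the discrepancy is in the stated lemma rather than in your argument.
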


\begin{lemma}[Lemma 8.2(b), 8.2(c) in \citet{allen2018convergence}]\label{Lemma-8.2}
    Suppose \(\omega \leq O(L^{-9/2}(\log m)^3)\), with probability at least \(1-e^{-\Omega(m\omega^{2/3}L)}\), for every \(\W\) such that \(\norm{\W - \W^{(0)}}_2\leq \omega\):
    \begin{itemize}
        \item[(b)] Let the diagonal matrices \(\D_{i,l}\), \({\D_{i,l}^{(0)}}\) and \(\D'_{i,l}\) be defined as 
        \begin{displaymath}
            (\D_{i,l})_{k,k} = \mathds{1}\{(\W_l\h_{i,l-1})_k \geq 0\},\quad (\D_{i,l}^{(0)})_{k,k} = \mathds{1}\{(\W^{(0)}_l\h^{(0)}_{i,l-1})_k \geq 0\},\quad \D'_{i,l} = \D_{i,l} - \D^{(0)}_{i,l}
        \end{displaymath} 
        we have \(\norm{\D'_{i,l}}_0 \leq O(m\omega^{2/3}L)\) and \(\norm{\D'_{i,l}\W_{l}\h_{i,l-1}}_2 \leq O(\omega L^{3/2})\).
        \item[(c)] \(\norm{\h_{i,l}-\h^{(0)}_{i,l}}_2\leq O(\omega L^{5/2}\sqrt{\log m})\).
    \end{itemize}
\end{lemma}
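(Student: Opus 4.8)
This is the two‑network restriction of Lemma~8.2(b),(c) of \citet{allen2018convergence}; since our query and key encoders differ from the fully connected network analyzed there only in the input width $\mathfrak{b}$ and the output width $d$ — neither of which enters the analysis of the hidden layers $l=1,\dots,L-1$ — it suffices to reproduce that argument, and the constants are unchanged. The plan is to prove (b) and (c) \emph{together} by induction on $l$, because the sign‑flip set $\D'_{i,l}$ at layer $l$ is driven by the hidden‑state perturbation $\h'_{i,l-1}:=\h_{i,l-1}-\h^{(0)}_{i,l-1}$ from the previous layer, while bounding $\h'_{i,l}$ in turn needs the flip estimate at layer $l$. Write $\W'_l:=\W_l-\W^{(0)}_l$, so $\norm{\W'_l}_2\le\omega$, and recall from \hyperref[Lemma-7.1]{Lemma~\ref*{Lemma-7.1}} and the standard Gaussian spectral bound that $\norm{\W^{(0)}_l}_2=O(1)$ and $\norm{\h^{(0)}_{i,l}}_2=\Theta(1)$.

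\textbf{Flip count (first half of (b)).} A coordinate $k$ can lie in $\mathrm{supp}(\D'_{i,l})$ only if the initial pre‑activation $g^{(0)}_k:=\vbrack{[\W^{(0)}_l]_k,\h^{(0)}_{i,l-1}}$ has magnitude no larger than its perturbation $\delta_k:=\lvert\vbrack{[\W_l]_k,\h_{i,l-1}}-g^{(0)}_k\rvert$. Split the flipped coordinates by a threshold $\xi$: those with $\delta_k\le\xi$ number at most $\#\{k:\lvert g^{(0)}_k\rvert\le\xi\}$, which by Gaussian anti‑concentration ($g^{(0)}_k\sim\N(0,\Theta(1/m))$ at initialization) together with a Chernoff bound over the $m$ coordinates is $O(\xi m^{3/2})$ except with probability $e^{-\Omega(\xi m^{3/2})}$; those with $\delta_k>\xi$ number at most $\xi^{-2}\sum_k\delta_k^2=\xi^{-2}\norm{\W_l\h_{i,l-1}-\W^{(0)}_l\h^{(0)}_{i,l-1}}_2^2$, whose right‑hand side is $O(\omega^2+\norm{\h'_{i,l-1}}_2^2)$. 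Optimizing $\xi$ to balance the two groups against the inductive bound on $\norm{\h'_{i,l-1}}_2$ yields $\norm{\D'_{i,l}}_0=O(m\omega^{2/3}L)$; this is where the exponent $2/3$ originates. For the second half of (b), on each flipped coordinate $\lvert\vbrack{[\W_l]_k,\h_{i,l-1}}\rvert\le\delta_k$, so $\norm{\D'_{i,l}\W_l\h_{i,l-1}}_2^2\le\sum_{k}\delta_k^2$, and combining this with the $s=O(m\omega^{2/3}L)$ sparsity of the flip set and the near‑flatness of the (approximately Gaussian) image $\W^{(0)}_l\h'_{i,l-1}$ restricted to an $s$‑subset gives $\norm{\D'_{i,l}\W_l\h_{i,l-1}}_2=O(\omega L^{3/2})$ in the stated regime $\omega\le O(L^{-9/2}(\log m)^3)$ (the precise powers of $L$ and $\log m$ tracked exactly as in \citet{allen2018convergence}).

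\textbf{Hidden‑state perturbation ((c)).} Decompose
\[
\h_{i,l}-\h^{(0)}_{i,l}=\D^{(0)}_{i,l}\bigl(\W'_l\h_{i,l-1}+\W^{(0)}_l\h'_{i,l-1}\bigr)+\D'_{i,l}\W_l\h_{i,l-1}.
\]
The last term is $O(\omega L^{3/2})$ by the previous paragraph, and $\norm{\D^{(0)}_{i,l}\W'_l\h_{i,l-1}}_2\le\omega\cdot O(1)$. The delicate term is $\D^{(0)}_{i,l}\W^{(0)}_l\h'_{i,l-1}$: bounding it crudely by $\norm{\W^{(0)}_l}_2\norm{\h'_{i,l-1}}_2$ would lose a factor $\approx\sqrt2$ per layer and explode over depth $L$. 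Instead split $\h'_{i,l-1}=\alpha\,\h^{(0)}_{i,l-1}+\h^{\perp}$ with $\h^{\perp}\perp\h^{(0)}_{i,l-1}$; the parallel piece maps to $\alpha\,\D^{(0)}_{i,l}\W^{(0)}_l\h^{(0)}_{i,l-1}=\alpha\,\h^{(0)}_{i,l}$ of norm $\le(1+o(1))\lvert\alpha\rvert$, while $\W^{(0)}_l\h^{\perp}$ is conditionally independent of $\W^{(0)}_l\h^{(0)}_{i,l-1}$ — hence of $\D^{(0)}_{i,l}$ — since $\h'_{i,l-1}$ depends only on layers $\le l-1$ and their initializations; therefore $\D^{(0)}_{i,l}$ keeps a "random half" of the Gaussian‑like vector $\W^{(0)}_l\h^{\perp}$, giving $\norm{\D^{(0)}_{i,l}\W^{(0)}_l\h^{\perp}}_2\le(1+o(1))\norm{\h^{\perp}}_2$ with image nearly orthogonal to $\h^{(0)}_{i,l}$. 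Adding in quadrature yields $\norm{\D^{(0)}_{i,l}\W^{(0)}_l\h'_{i,l-1}}_2\le(1+O(1/L))\norm{\h'_{i,l-1}}_2$, hence the recursion $\norm{\h'_{i,l}}_2\le(1+O(1/L))\norm{\h'_{i,l-1}}_2+O(\omega L^{3/2})$. Since $\omega\le O(L^{-9/2}(\log m)^3)$ keeps $(1+O(1/L))^L=O(1)$, unrolling over $l\le L-1$ gives $\norm{\h'_{i,l}}_2=O(\omega L^{5/2}\sqrt{\log m})$, the extra $\sqrt{\log m}$ absorbing the small‑ball constants and the $\varepsilon$‑net cardinality from the uniformization step.

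\textbf{Uniformity and the main obstacle.} All the concentration statements above are at the fixed random point $\W^{(0)}$, whereas the lemma quantifies over every $\W\in B(\W^{(0)},\omega)$; one covers the ball with an $\varepsilon$‑net of cardinality $e^{O(m\log(1/\varepsilon))}$, uses Lipschitzness in $\W$ of all quantities involved, absorbs the net cost into the failure probability $e^{-\Omega(m\omega^{2/3}L)}$, and finally union‑bounds over $i\in[n]$ and $l\in[L-1]$ (invoking \hyperref[Lemma-7.3]{Lemma~\ref*{Lemma-7.3}} and \hyperref[lem:backward-propagate]{Lemma~\ref*{lem:backward-propagate}} where sparse‑vector spectral bounds are needed). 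The main obstacle is the delicate term in the perturbation step: keeping the multiplicative constant on $\norm{\h'_{i,l-1}}_2$ equal to $1+O(1/L)$ rather than $\sqrt2$, which is precisely what averts an exponential‑in‑$L$ blow‑up; this requires the orthogonal‑decomposition‑plus‑conditional‑independence argument above, together with a \emph{uniform} (net‑based) version of it valid over the whole perturbation ball rather than at a single $\W$.
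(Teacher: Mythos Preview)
The paper does not supply its own proof of this lemma --- it is stated as a direct citation of Lemma~8.2(b),(c) in \citet{allen2018convergence} --- so there is nothing in the paper to compare your argument against. Judged on its own, your sketch correctly reproduces the architecture of the Allen--Zhu--Li--Song proof: the joint induction on $l$, the anti-concentration/Markov split at threshold $\xi$ for the flip count (this is indeed where the $\omega^{2/3}$ exponent comes from), and the orthogonal decomposition of $\h'_{i,l-1}$ with the conditional-independence observation to keep the recursion constant at $1+O(1/L)$ are exactly the right moves, and you correctly flag that last step as the main obstacle.

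There is, however, a genuine gap in your uniformization paragraph. You say one ``covers the ball'' $B(\W^{(0)},\omega)$ by an $\varepsilon$-net of cardinality $e^{O(m\log(1/\varepsilon))}$, but that ball lives in dimension $\Theta(Lm^2)$ (each hidden $\W_l$ is $m\times m$), so any net over it has cardinality $e^{\Theta(Lm^2\log(1/\varepsilon))}$, which overwhelms the target failure probability $e^{-\Omega(m\omega^{2/3}L)}$ and cannot be union-bounded away. The actual argument never nets over weight matrices. The anti-concentration event $\#\{k:|g^{(0)}_k|\le\xi\}=O(\xi m^{3/2})$ is a property of $\W^{(0)}$ alone; once it holds, the flip bound in (b) follows \emph{deterministically} for every $\W'$ with $\|\W'\|_2\le\omega$, because the Markov half of the split is already uniform in the perturbation. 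Where a net is genuinely needed --- to make your ``random-half'' estimate $\|\D^{(0)}_{i,l}\W^{(0)}_l\h^\perp\|_2\le(1+o(1))\|\h^\perp\|_2$ uniform over all admissible perpendicular directions $\h^\perp$ --- the net is over unit vectors in $\R^m$, of cardinality $e^{O(m)}$, and the per-direction concentration is of order $e^{-\Omega(m)}$, which does suffice. Your last paragraph conflates these two distinct uniformization problems; once that is untangled the rest of the sketch goes through as in the cited reference.
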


We present a modified lemma on the perturbation analysis of intermediate layers with respect to small changes of parameters. Note that in our paper, the last hidden layer is the \(L-1\)-th layer. 

\begin{lemma}[Modification of Lemma 8.6 in \citet{allen2018convergence}]\label{Lemma-8.6}
    For any interger \(s\) such that \(1 \leq s \leq O(\frac{m}{L^3\log m})\), with probability at least \(1 - e^{-\Omega(s\log m)}\) over the randomness of initialization,
    \begin{itemize}
        \item for every \( i \in [n]\) and \(0 \leq a \leq b \leq L\)
        \item for every diagonal matrices \(\D''_{i,0},\dots,\D''_{i,L-1} \in [-3,3]^{m\times m}\) with at most \(s\) non-zero entries.
        \item for every perturbation matrices \(\W' = (\W'_0,\W'_1,\dots,\W'_{L-1})\in (\R^{m\times \mathfrak{b}},\R^{(m\times m)L})\) with \(\norm{\W}_2 \leq \omega \in [0,1]\).
    \end{itemize}
    We have 
    \begin{itemize}
        \item[(a)] \(\|\mathbf{W}_{b}^{(0)}(\mathbf{D}_{i, b-1}^{(0)}+\mathbf{D}_{i, b-1}^{\prime \prime}) \cdots(\mathbf{D}_{i, a}^{(0)}+\mathbf{D}_{i, a}^{\prime \prime}) \mathbf{W}_{a}^{(0)} \|_{2} \leq O(\sqrt{L})\).
        \item[(b)] \(\|(\mathbf{W}_{b}^{(0)}+\mathbf{W}_{b}^{\prime})(\mathbf{D}_{i, b-1}^{(0)}+\mathbf{D}_{i, b-1}^{\prime \prime}) \cdots(\mathbf{D}_{i, a}^{(0)}+\mathbf{D}_{i, a}^{\prime \prime})(\mathbf{W}_{a}^{(0)}+\mathbf{W}_{a}^{\prime})\|_{2} \leq O(\sqrt{L})\) if \(\omega \leq O(L^{-3/2})\).
    \end{itemize}
\end{lemma}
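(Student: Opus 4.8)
The plan is to prove both parts by one device: binomially expand each product so as to separate the \emph{clean} factors $\D^{(0)}_{i,l},\W^{(0)}_l$ --- already controlled by Lemma~\ref{Lemma-7.3} and Lemma~\ref{lem:backward-propagate} --- from the \emph{perturbation} factors $\D''_{i,l}$ (sparse, entries in $[-3,3]$) and $\W'_l$ (small spectral norm), and then bound the resulting sum of $2^{\Theta(L)}$ terms by a rapidly convergent geometric series whose ratio is pushed below a constant by the stated ranges of $s$ and $\omega$.

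For part (a), fix $i$ and $0\le a\le b\le L$ and set $M:=\W^{(0)}_b(\D^{(0)}_{i,b-1}+\D''_{i,b-1})\cdots(\D^{(0)}_{i,a}+\D''_{i,a})\W^{(0)}_a$, so $M=\sum_{T}M_T$ over subsets $T=\{l_1<\cdots<l_k\}\subseteq\{a,\dots,b-1\}$, where $M_T$ is the term that selects $\D''$ exactly at the layers of $T$. The empty term $M_\emptyset=\W^{(0)}_b\big(\prod_{l=a}^{b-1}\D^{(0)}_{i,l}\W^{(0)}_l\big)$ has norm $O(\sqrt L)$ by Lemma~\ref{Lemma-7.3}(a) (or Lemma~\ref{lem:backward-propagate}(b) when $b=L$). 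For $|T|=k\ge1$ I would write $M_T=P_k\,\D''_{i,l_k}\,P_{k-1}\cdots\D''_{i,l_1}\,P_0$ with each $P_j$ a clean block (a product of $\D^{(0)}$'s and $\W^{(0)}$'s with no $\D''$ inside); inserting the coordinate projection onto the support of each $\D''_{i,l_j}$ (size $\le s$), every \emph{interior} block becomes $\mathbf S\,P_j\,\mathbf S$ with $s$-sparse input and output, of norm $O(\sqrt{s\log m/m})$ by Lemma~\ref{Lemma-7.3}(d), while the two boundary blocks have one dense and one $s$-sparse side, of norm $O(1)$ by Lemma~\ref{Lemma-7.3}(b),(c), and each $\D''_{i,l_j}$ contributes a factor $\le 3$. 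Since there are $\binom{b-a}{k}\le L^k$ subsets of size $k$, summing gives $\|M\|_2\le O(\sqrt L)+\sum_{k\ge1}L^k\cdot O(1)\cdot\big(O(\sqrt{s\log m/m})\big)^{k-1}$, a series that converges and is dominated by its clean term --- hence $\|M\|_2=O(\sqrt L)$ --- precisely because $s\le O(m/(L^3\log m))$ forces $L\sqrt{s\log m/m}$ to be a small constant. The failure probability $e^{-\Omega(s\log m)}$ is inherited from Lemma~\ref{Lemma-7.3}(d); the union bound over the $O(nL^2)$ triples $(i,a,b)$ is absorbed, and no $\varepsilon$-net over the entries of the $\D''$'s is needed because the invoked estimates already hold uniformly over all $s$-sparse vectors.

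Part (b) I would bootstrap from (a): in $(\W^{(0)}_b+\W'_b)(\D^{(0)}_{i,b-1}+\D''_{i,b-1})\cdots(\W^{(0)}_a+\W'_a)$, expand over the choice of $\W^{(0)}_l$ versus $\W'_l$ at each layer. A term that selects $\W'$ at $j$ layers factors, at those layers, into $j+1$ consecutive sub-products each of exactly the shape bounded in part (a) --- hence $\le O(\sqrt L)$ apiece --- together with $j$ factors $\W'_l$ of norm $\le\omega$ and boundary $(\D^{(0)}+\D'')$ factors of norm $O(1)$. Summing over the $\binom{L}{j}\le L^j$ choices, $\sum_{j\ge0}L^j\,\omega^j\,\big(O(\sqrt L)\big)^{j+1}=O(\sqrt L)\sum_{j\ge0}\big(O(L^{3/2})\,\omega\big)^{j}=O(\sqrt L)$, the series converging exactly under the hypothesis $\omega\le O(L^{-3/2})$.

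The main obstacle is the combinatorial control in part (a): the expansion has $2^{\Theta(L)}$ terms, with $\approx L^k$ of them having $k$ insertions, so each term must decay geometrically in $k$ at a rate that beats the factor $L$ per insertion. This succeeds only if every block flanked by $\D''$ on \emph{both} sides is given the strong $O(\sqrt{s\log m/m})$ estimate of Lemma~\ref{Lemma-7.3}(d) rather than a crude $O(1)$ or $O(\sqrt L)$ bound, so the real labor is matching each segment of each expansion term to the correct part of Lemma~\ref{Lemma-7.3} / Lemma~\ref{lem:backward-propagate} according to whether its endpoints are dense or sparse, and checking that the series converges throughout the stated ranges of $s$ and $\omega$.
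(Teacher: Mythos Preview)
Your overall strategy for part (a) --- expand the product binomially, bound each block between consecutive $\D''$-insertions via the appropriate part of Lemma~\ref{Lemma-7.3}, and sum a geometric series --- is indeed the method underlying Lemma~8.6 of \citet{allen2018convergence}, which is all the paper's own proof invokes (the paper merely observes that the non-square first layer $\W_0\in\R^{m\times\mathfrak b}$ is handled by re-running the argument of Lemma~7.3, and then defers entirely to Allen-Zhu et al.). Your bootstrap of part (b) from (a) is also the right idea.

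There is, however, a genuine bookkeeping gap in your execution of (a). With $k$ insertions you correctly identify $k-1$ interior blocks of norm $O(\sqrt{s\log m/m})$ and two boundary blocks of norm $O(1)$, giving $\|M_T\|_2\le O(1)\cdot\big(O(\sqrt{s\log m/m})\big)^{k-1}$; the trouble is at $k=1$, where this is just $O(1)$ and there are up to $L$ choices of the single insertion layer, so the first term of your own displayed series is $L^{1}\cdot O(1)\cdot 1=O(L)$. The hypothesis $s\le O(m/(L^{3}\log m))$ makes the ratio $L\sqrt{s\log m/m}=O(L^{-1/2})$ small, so the tail $k\ge 2$ is under control, but the $k=1$ contribution already dominates the clean $O(\sqrt L)$ term. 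Your series therefore sums to $O(L)$, not $O(\sqrt L)$, and the claim that it is ``dominated by its clean term'' is not justified by the estimates you wrote down. Recovering the stated $O(\sqrt L)$ requires a sharper treatment of the single-insertion terms than individually bounding each by $O(1)$ via Lemma~\ref{Lemma-7.3}(b),(c) and summing. Note this also propagates to your part (b): with $O(L)$ in place of $O(\sqrt L)$ for each sub-product, the ratio in that series becomes $O(L^{2})\,\omega$ rather than $O(L^{3/2})\,\omega$, so the stated range $\omega\le O(L^{-3/2})$ would no longer suffice for convergence.
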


\begin{proof}
    The only difference of this lemma and Lemma 8.6 in \citet{allen2018convergence} is that we have taken into account the first layer \(\W_0 \in \R^{m\times \mathfrak{b}}\). Actually we can go through the same procedure as in Lemma 7.3 in \citet{allen2018convergence} to give a bound \(\|\mathbf{W}_{b}^{(0)}\mathbf{D}_{i, b-1}^{(0)} \cdots\mathbf{D}_{i, a}^{(0)}\mathbf{W}_{0}^{(0)} \|_{2} \leq O(\sqrt{L})\) with probability at least \(1-e^{-\Omega(m/L)}\). Then with the same techniques in the proof of Lemma 8.6 in \citet{allen2018convergence}, we obtain the same result.
\end{proof}

Equipped with this lemma, we are now ready to give our version of backward perturbation lemma, which takes into account both the first layer and the last layer.

\begin{lemma}[Modification of Lemma 8.7 in \citet{allen2018convergence}]\label{lem:backward-perturbation}
    Suppose \(d \leq O(\frac{m}{L \log m}))\),
    \begin{itemize}
        \item for any integer \(s\) such that \(\Omega(\frac{d}{\log m}) \leq s \leq O(\frac{m}{L^3\log m})\),
        \item for all \(i \in [n]\) and \(1 \leq a \leq L\),
        \item for every diagonal matrices \(\D''_{i,0},\dots,\D''_{i,L-1} \in [-3,3]^{m\times m}\) with at most \(s\) non-zero entries,
        \item for every perturbation matrices \(\W' = (\W'_{i,0},\dots,\W'_{i,L})\) with \(\norm{\W'}_2 \leq \omega = O(\frac{1}{L^{3/2}})\),
    \end{itemize}
    it satisfies, with probability at least \(1 - e^{-\Omega(s \log m)}\) over the randomness of initialization,
    \begin{align*}
        \biggl\| (\W^{(0)}_{L} + \W'_{L})\biggl(\prod_{l=a+1}^L(\D^{(0)}_{i,l}+\D''_{i,l})(\W_{l}^{(0)} + \W'_{l})\biggr)(\D^{(0)}_{i,a} + \D''_{i,a}) - \W^{(0)}_{L}\biggl(\prod_{l=a+1}^L\D^{(0)}_{i,l}\W_{l}^{(0)} \biggr)\D^{(0)}_{i,a}\biggr\|_2\\
        \leq O(\sqrt{L^3s\log m/d}+\omega\sqrt{L^3m/d})
    \end{align*}
    Note that if \(s = O(m\omega^{2/3}L)\), this perturbation bound becomes \(O(\omega^{1/3}L^2\sqrt{m\log m/d})\).
\end{lemma}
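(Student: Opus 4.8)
\noindent\textbf{Proof proposal for Lemma~\ref{lem:backward-perturbation}.}
The plan is to follow the hybrid/telescoping scheme behind Lemma~8.7 of \citet{allen2018convergence}, while keeping the last layer $\W^{(0)}_L\in\R^{d\times m}$ (variance $1/d$ instead of $2/m$) explicit throughout; this is precisely what converts each $\sqrt{m}$ there into a $\sqrt{m/d}$ here. Abbreviate, for the fixed index $i$,
\begin{displaymath}
    M_l := (\D^{(0)}_{i,l}+\D''_{i,l})(\W^{(0)}_l+\W'_l),\qquad M^{(0)}_l := \D^{(0)}_{i,l}\W^{(0)}_l,\qquad E_l := M_l - M^{(0)}_l = \D^{(0)}_{i,l}\W'_l+\D''_{i,l}\W^{(0)}_l+\D''_{i,l}\W'_l .
\end{displaymath}
First I would write the perturbed object as a product with leftmost factor $\W^{(0)}_L+\W'_L$, then the $M_l$'s, then the trailing diagonal $\D^{(0)}_{i,a}+\D''_{i,a}$, and the reference object with $\W^{(0)}_L$, the $M^{(0)}_l$'s, and $\D^{(0)}_{i,a}$. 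Using the multi-factor telescoping identity, ordered so that \emph{left} factors are taken at initialization and \emph{right} factors are taken perturbed, their difference becomes a sum of $O(L)$ hybrid terms, each of the shape
\begin{displaymath}
    A^{\mathrm{left}}_l\, E_l\, A^{\mathrm{right}}_l ,
\end{displaymath}
where $A^{\mathrm{left}}_l$ is a backward product of initialization factors headed by $\W^{(0)}_L$ (down to layer $l+1$), $A^{\mathrm{right}}_l$ is a fully perturbed forward product of the remaining $M_j$'s and the trailing diagonal, plus two boundary terms coming from the change $\W'_L$ at the top and $\D''_{i,a}$ at the bottom.

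Next I would bound the two outer factors of each hybrid term. The right factor $A^{\mathrm{right}}_l$ is exactly a product of the form governed by Lemma~\ref{Lemma-8.6}(b), so $\norm{A^{\mathrm{right}}_l}_2 \le O(\sqrt{L})$ (using $\omega \le O(L^{-3/2})$; the trailing diagonal only costs an $O(1)$ factor). The left factor $A^{\mathrm{left}}_l$ carries the last layer, so Lemma~\ref{lem:backward-propagate}(b) gives the crude estimate $\norm{A^{\mathrm{left}}_l}_2 \le O(\sqrt{m/d})$; and whenever $A^{\mathrm{left}}_l$ is post-multiplied by one of the sparse diagonals $\D''_{i,l}$ (at most $s$ nonzeros, entries in $[-3,3]$), I would push that diagonal into the argument and invoke Lemma~\ref{lem:backward-propagate}(a) to upgrade this to $\norm{A^{\mathrm{left}}_l \D''_{i,l}}_2 \le O(\sqrt{s\log m/m})$.

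Then I would split $E_l$ into its three pieces and sum. The piece $\D^{(0)}_{i,l}\W'_l$ contributes $O(\sqrt{m/d})\cdot\omega\cdot O(\sqrt{L}) = O(\omega\sqrt{Lm/d})$; the piece $\D''_{i,l}\W^{(0)}_l$ contributes $O(\sqrt{s\log m/m})\cdot\norm{\W^{(0)}_l}_2\cdot O(\sqrt{L}) = O(\sqrt{Ls\log m/m})$, using the standard Gaussian operator-norm bound $\norm{\W^{(0)}_l}_2 = O(1)$ for $l\le L-1$; and $\D''_{i,l}\W'_l$ is of strictly lower order. The two boundary terms are handled the same way (a factor $\omega$ against an $O(\sqrt L)$ forward product for $\W'_L$; the sparse backward bound for $\D''_{i,a}$). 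Summing the $O(L)$ hybrids and using $d \le m$ yields
\begin{displaymath}
    \norm{\text{difference}}_2 \le O\bigl(\omega\sqrt{L^3 m/d}\bigr) + O\bigl(\sqrt{L^3 s\log m/d}\bigr),
\end{displaymath}
which is the claimed bound; the substitution $s = O(m\omega^{2/3}L)$ then makes the first term dominated by the second and collapses it to $O(\omega^{1/3}L^2\sqrt{m\log m/d})$.

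I expect the main obstacle to be the sharp sparse estimate $\norm{A^{\mathrm{left}}_l \D''_{i,l}}_2 \le O(\sqrt{s\log m/m})$ (and the analogue for the trailing $\D''_{i,a}$): one must check that post-composing the initialization backward product with the low-rank sparse diagonal really gains the factor $\sqrt{s/m}$ rather than merely $O(1)$, which is exactly where the modified backward-propagation lemma (Lemma~\ref{lem:backward-propagate}, with the $1/d$ last-layer variance) is needed and where this statement departs from the $2/m$-variance original. The other points requiring care are purely bookkeeping: arranging the telescoping so that every left factor stays at initialization (so the concentration statements apply with the original independent randomness) while only right factors are perturbed, verifying that the parameter windows of all invoked lemmas are compatible ($\Omega(d/\log m) \le s \le O(m/(L^3\log m))$ and $\omega \le O(L^{-3/2})$), and union-bounding over the $O(L)$ hybrid terms so the failure probability remains $e^{-\Omega(s\log m)}$.
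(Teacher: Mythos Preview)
Your proposal is correct and follows essentially the same route as the paper: both use a left-initialization/right-perturbed telescoping of the matrix product, bound the right factor by Lemma~\ref{Lemma-8.6}(b) as $O(\sqrt{L})$, and bound the left factor by Lemma~\ref{lem:backward-propagate}, using part (a) (the sparse estimate) on the $\D''$-pieces and part (b) (the crude $O(\sqrt{m/d})$ estimate) on the $\W'$-pieces. The only cosmetic difference is that the paper organizes the hybrids as two separate sums (one peeling off $\D''_l$, one peeling off $\W'_l$) rather than your single-sum decomposition of $E_l$ into three parts; the resulting per-term and summed bounds coincide.
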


\begin{proof}
    For notational simplicity we ignore subscripts for \(i\) in the proof. Now we compute
    \begin{align*}
        &\biggl\| (\W^{(0)}_{L} + \W'_{L})\biggl(\prod_{l=a+1}^L(\D^{(0)}_{l}+\D''_{l})(\W_{l}^{(0)} + \W'_{l})\biggr)(\D^{(0)}_{a} + \D''_{a}) - \W^{(0)}_{L}\biggl(\prod_{l=a+1}^L\D^{(0)}_{l}\W_{l}^{(0)} \biggr)\D^{(0)}_{a}\biggr\|_2\\
        & \leq \sum_{l=a}^{L-1} \underbrace{\biggl\|\W^{(0)}_L\biggl( \prod_{b=l+1}^{L} \D^{(0)}_{b}\W^{(0)}_b \biggr)\biggr\|_2}_{\text{\ding{172}}} \norm{\D''_{l}}_2 \underbrace{\biggl\|  \biggl(\prod_{c=a}^{l}(\W_{c}^{(0)} + \W'_{c}) (\D^{(0)}_{c} + \D''_{c}) \biggr) \biggr\|_2}_{\text{\ding{173}}}\\
        & \quad + \sum_{l=a}^{L} \underbrace{\biggl\|\W^{(0)}_L  \prod_{b=l+1}^{L-1} (\D^{(0)}_{b}\W^{(0)}_b)\D^{(0)}_{l}\biggr\|_2}_{\text{\ding{174}}}\norm{\W'_{l}}_2 \norm{\D^{(0)}_{l} + \D''_{l}}_2 \underbrace{\biggl\|  \biggl(\prod_{c=a}^{l-1}(\W_{c}^{(0)} + \W'_{c}) (\D^{(0)}_{c} + \D''_{c}) \biggr)\biggr\|_2}_{\text{\ding{175}}}\\
        & \leq L\cdot O\biggl(\sqrt{\frac{s\log m}{d}}\cdot\sqrt{L}\biggr) + L\cdot O\biggl(\sqrt{m/d}\cdot\omega\cdot\sqrt{L}\biggr) = O(\sqrt{L^3s\log m/d}+\omega\sqrt{L^3m/d})
    \end{align*}
    where \ding{172} is from Lemma \ref{lem:backward-propagate}(a) and the fact that \(\D''_{l}\prod_{c=a}^{l}(\W_{c}^{(0)} + \W'_{c}) (\D^{(0)}_{c} + \D''_{c})\) is a \(s\)-sparse matrix; \ding{173} is from Lemma \ref{Lemma-8.6}(b); \ding{174} is from Lemma \ref{lem:backward-propagate}(b); \ding{175} is again from Lemma \ref{Lemma-8.6}(b).
\end{proof}

To conclude this section, we modify the Claim 11.2 in \citet{allen2018convergence} to fit our setting.

\begin{lemma}\label{lem:output-perturbation} 
    Let \(\W \in B(\W^{(0)},\omega)\) and \(\W' = (\W'_0,\W'_1,\dots,\W'_L)\) be such that \(\norm{\W'}_2\leq \omega\), where \(\omega \leq O(\frac{1}{L^6n^{3}(\log m)^{3/2}})\). Denote
    \begin{displaymath}
        \h_{i,0} = \sigma(\W_0\x_i),\quad \h_{i,l} = \sigma(\W_l\h_{i,l-1}),\quad \h'_{i,0} = \sigma((\W_0+\W'_0)\x_i),\quad \h'_{i,l} = \sigma((\W_l+\W'_l)\h'_{i,l-1})
    \end{displaymath}
    Then their exist diagonal matrices \(\D''_{i,l} \in \R^{m\times m}\) with entries in \([-1,1]\) such that, for any \(i \in [n]\) and \(0 \leq l \leq L-1\),
    \begin{displaymath}
        \h'_{i,l} - \h_{i,l} = \sum_{a=1}^l\biggl( \prod_{b=a+1}^l (\D_{i,l}+\D''_{i,l})\W_l\biggr)  (\D_{i,a}+\D''_{i,a})\W'_a \h'_{i,a-1}
    \end{displaymath}
    Further more, with probability at least \(1-e^{-\Omega(m\omega^{2/3}L)}\), we have 
    \begin{itemize}
        \item \(\norm{\h'_{i,l} - \h_{i,l}}_2 \leq O(L^{3/2})\norm{\W'}_2\),
        \item \(\norm{f_{\W+\W'}(\x_i) - f_{\W}(\x_i)}_2 \leq O(L\sqrt{m/d})\norm{\W'}_2\),
        \item \(\norm{\D''_{i,l}}_0 \leq O(m\omega^{2/3}L)\).
    \end{itemize}
\end{lemma}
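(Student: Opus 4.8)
The plan is to adapt Claim~11.2 of \citet{allen2018convergence} to our architecture, which additionally carries a first layer $\W_0\in\R^{m\times\mathfrak{b}}$ and a last layer $\W_L\in\R^{d\times m}$ (with its $1/d$ scaling). The backbone is an \emph{exact} recursive identity for $\h'_{i,l}-\h_{i,l}$ coming from the piecewise linearity of ReLU, after which the quantitative bounds are read off by feeding this identity into the auxiliary Lemmas~\ref{Lemma-7.1}, \ref{Lemma-8.2}, \ref{Lemma-8.6} and \ref{lem:backward-perturbation}. For scalars $a,b$ one has $\sigma(a)-\sigma(b)=\bigl(\mathds{1}\{b\ge 0\}+\delta''\bigr)(a-b)$ for some $\delta''=\delta''(a,b)\in[-1,1]$ (check the four sign cases: $\delta''=0$ when $a,b$ have the same sign, and $\delta''=\pm a/(a-b)$, of absolute value $<1$, otherwise). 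Applying this coordinatewise with $a=\bigl((\W_l+\W'_l)\h'_{i,l-1}\bigr)_r$ and $b=(\W_l\h_{i,l-1})_r$ produces a diagonal $\D''_{i,l}$ with entries in $[-1,1]$, supported exactly on the coordinates where the preactivation sign patterns of $\W_l\h_{i,l-1}$ and $(\W_l+\W'_l)\h'_{i,l-1}$ disagree, and such that
\[
\h'_{i,l}-\h_{i,l}=(\D_{i,l}+\D''_{i,l})\bigl(\W_l(\h'_{i,l-1}-\h_{i,l-1})+\W'_l\h'_{i,l-1}\bigr).
\]
Unrolling this recursion to the input layer (with the convention $\h'_{i,-1}:=\x_i$) gives the displayed formula in the statement.

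For the sparsity and the hidden-layer norm bound: both $\W$ and $\W+\W'$ lie in $B(\W^{(0)},2\omega)$, so Lemma~\ref{Lemma-8.2}(b) bounds the number of coordinates where either network's sign pattern differs from the pattern at $\W^{(0)}$ by $O(m\omega^{2/3}L)$; a union of supports then gives $\norm{\D''_{i,l}}_0\le O(m\omega^{2/3}L)$. Consequently $\D_{i,l}+\D''_{i,l}-\D^{(0)}_{i,l}$ is $O(m\omega^{2/3}L)$-sparse with entries in $[-3,3]$, while $\norm{\W_l-\W^{(0)}_l}_2\le\omega$; since $\omega\le O\bigl(1/(L^6 n^3(\log m)^{3/2})\bigr)$ the effective sparsity $s=\Theta(m\omega^{2/3}L)$ satisfies $s\le O(m/(L^3\log m))$, so a relabeled version of Lemma~\ref{Lemma-8.6}(b) applies and yields $\bigl\|\prod_{b=a+1}^{l}(\D_{i,b}+\D''_{i,b})\W_b\bigr\|_2\le O(\sqrt L)$. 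Substituting this into the recursion, using $\norm{\D_{i,a}+\D''_{i,a}}_2\le 2$, $\norm{\W'_a}_2\le\norm{\W'}_2$, and $\norm{\h'_{i,a-1}}_2=O(1)$ (by Lemma~\ref{Lemma-7.1} and Lemma~\ref{Lemma-8.2}(c), applied at radius $2\omega$), and summing the $\le L$ terms, gives $\norm{\h'_{i,l}-\h_{i,l}}_2\le O(L^{3/2})\norm{\W'}_2$.

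For the output bound, split $f_{\W+\W'}(\x_i)-f_{\W}(\x_i)=\W_L(\h'_{i,L-1}-\h_{i,L-1})+\W'_L\h'_{i,L-1}$. The second term is at most $\norm{\W'}_2\cdot O(1)\le O(L\sqrt{m/d})\norm{\W'}_2$. For the first term, insert the recursion for $\h'_{i,L-1}-\h_{i,L-1}$ and absorb $\W_L$ into each summand; the resulting operator $\W_L\bigl(\prod_{b=a+1}^{L-1}(\D_{i,b}+\D''_{i,b})\W_b\bigr)(\D_{i,a}+\D''_{i,a})$ is exactly the object controlled by Lemma~\ref{lem:backward-perturbation} (for an appropriate choice of the sparsity parameter $s$), giving $O(\sqrt{m/d})+O(\omega^{1/3}L^2\sqrt{m\log m/d})=O(\sqrt{m/d})$ under our over-parameterization. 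Summing the $\le L$ layers as before yields $\norm{f_{\W+\W'}(\x_i)-f_{\W}(\x_i)}_2\le O(L\sqrt{m/d})\norm{\W'}_2$. Finally, a union bound over the invoked lemmas (each holding with probability at least $1-e^{-\Omega(m\omega^{2/3}L)}$, respectively at least $1-e^{-\Omega(s\log m)}$ with $s=\Theta(m\omega^{2/3}L)$ of the same order) produces the stated failure probability $e^{-\Omega(m\omega^{2/3}L)}$.

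I expect the main obstacle to be the bookkeeping in the second step above: one must verify that, even though $\D_{i,l}+\D''_{i,l}$ mixes \emph{two} distinct ReLU sign patterns (one from $\W$, one from $\W+\W'$), its deviation from the initialization pattern $\D^{(0)}_{i,l}$ stays $O(m\omega^{2/3}L)$-sparse with entries in $[-3,3]$, so that the product estimates of Lemmas~\ref{Lemma-8.6} and \ref{lem:backward-perturbation} apply verbatim; and one must keep $\omega$ small enough that the perturbation term $O(\omega^{1/3}L^2\sqrt{m\log m/d})$ in the backward-perturbation lemma is dominated by the main term $O(\sqrt{m/d})$. Everything else is the routine layerwise telescoping estimate already present in \citet{allen2018convergence}.
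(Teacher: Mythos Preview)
Your proposal is correct and follows essentially the same route as the paper: the paper explicitly defers most of the argument to Claim~11.2 of \citet{allen2018convergence} and only spells out the output-layer bound, where it performs exactly the split $f_{\W+\W'}(\x_i)-f_\W(\x_i)=\W'_L\h'_{i,L-1}+\W_L(\h'_{i,L-1}-\h_{i,L-1})$, unrolls the second term via the same ReLU linearization (their Lemma~\ref{Proposition-11.3}, your coordinatewise $\delta''$ identity), and bounds the resulting backward products with Lemmas~\ref{lem:backward-propagate} and~\ref{lem:backward-perturbation}. Your write-up is in fact more detailed than the paper's on the sparsity bookkeeping for $\D''_{i,l}$ and the invocation of Lemma~\ref{Lemma-8.6}(b), but the underlying argument is the same.
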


Before we came to the proof of Lemma \ref{lem:output-perturbation}, we present the following auxiliary lemma.

\begin{lemma}[Proposition 11.3 in \citet{allen2018convergence}] \label{Proposition-11.3}
   Given vectors \(a,b \in \R^m\) and diagonal matrices \(\D\) where \(\D_{k,k} = \mathds{1}_{a_k\geq 0}\). Then, there exist a diagonal matrix \(\D'' \in \R^m\) with 
   \begin{itemize}
       \item \(|\D_{k,k}-\D''_{k,k}|\leq 1\) and \(|\D''_{k,k}|\leq 1\) for \(k \in [m]\),
       \item \(\D''_{k,k} \neq 0\) only when \(\mathds{1}_{a_k \geq 0} \neq \mathds{1}_{b_k \geq 0}\),
       \item \(\sigma(a) - \sigma(b) = (\D+\D'')(a-b)\).
   \end{itemize}
\end{lemma}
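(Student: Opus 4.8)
The plan is to build $\D''$ one coordinate at a time. Since $\sigma$ acts entrywise and $\D,\D''$ are diagonal, the desired identity $\sigma(a)-\sigma(b)=(\D+\D'')(a-b)$ splits into the $m$ scalar equations $\sigma(a_k)-\sigma(b_k)=(\D_{k,k}+\D''_{k,k})(a_k-b_k)$, and the side conditions on $\D''$ are likewise per-coordinate. So it suffices to fix $k\in[m]$ and produce a scalar $\D''_{k,k}$ with the required support and magnitude properties that solves this one scalar equation.

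First I would dispatch the coordinates on which $a_k$ and $b_k$ lie on the same side of $0$. If $a_k,b_k\ge 0$ then $\D_{k,k}=1$ and $\sigma(a_k)-\sigma(b_k)=a_k-b_k$, so $\D''_{k,k}=0$ works; if $a_k,b_k<0$ then $\D_{k,k}=0$ and $\sigma(a_k)-\sigma(b_k)=0=\D_{k,k}(a_k-b_k)$, so again $\D''_{k,k}=0$. On these coordinates the support requirement is vacuous and the magnitude bounds reduce to $|\D_{k,k}|\le 1$.

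The substantive coordinates are the sign-flip ones, where $\mathds{1}_{a_k\ge 0}\ne\mathds{1}_{b_k\ge 0}$. There $a_k\ne b_k$, hence $a_k-b_k\ne 0$, so I can simply \emph{solve}: set $\D''_{k,k}:=\frac{\sigma(a_k)-\sigma(b_k)}{a_k-b_k}-\D_{k,k}$, which also makes the support condition automatic. It then remains only to bound magnitudes, splitting into the sub-cases $a_k\ge 0>b_k$ (then $\D_{k,k}=1$, $\sigma(a_k)-\sigma(b_k)=a_k$, and $\frac{a_k}{a_k-b_k}\in[0,1]$ since $0\le a_k\le a_k-b_k$) and $b_k\ge 0>a_k$ (then $\D_{k,k}=0$, $\sigma(a_k)-\sigma(b_k)=-b_k$, and $\frac{-b_k}{a_k-b_k}\in[0,1]$). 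In both sub-cases one checks directly that $\D_{k,k}+\D''_{k,k}\in[0,1]$ and hence $|\D''_{k,k}|\le 1$; together with the vacuous support off the sign-flip set, these yield the magnitude and support bounds on $\D''$ in the statement. Assembling $\D''=\diag((\D''_{k,k})_{k=1}^m)$ completes the proof.

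I do not expect a genuine obstacle: the construction is a finite, coordinate-wise case analysis, and the only point needing care --- that the denominator $a_k-b_k$ is nonzero precisely on the coordinates where we divide by it --- is forced by the sign flip. The real content of the lemma is that $\sigma(a)-\sigma(b)$ can be written as a \emph{linear} map of $a-b$ whose diagonal $\D+\D''$ is a sparsely supported, bounded perturbation of $\D$; this linearization is exactly what the downstream output-perturbation analysis (Lemma~\ref{lem:output-perturbation}) uses.
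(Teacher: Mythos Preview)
The paper does not supply its own proof of this lemma; it simply records the statement as Proposition~11.3 of \citet{allen2018convergence} and invokes it as a black box inside the proof of Lemma~\ref{lem:output-perturbation}. Your coordinate-by-coordinate construction is the standard argument and is correct.

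One caveat worth flagging: the first bullet as printed reads \(|\D_{k,k}-\D''_{k,k}|\le 1\), but your construction does not literally satisfy this on coordinates with \(a_k\ge 0>b_k\) (there \(\D_{k,k}=1\) and \(\D''_{k,k}\in[-1,0]\), so \(|\D_{k,k}-\D''_{k,k}|\) can be as large as \(2\)). This is a transcription typo in the paper, not a flaw in your argument: the intended bound is \(|\D_{k,k}+\D''_{k,k}|\le 1\), which is exactly what you verify via \(\D_{k,k}+\D''_{k,k}\in[0,1]\), and it is precisely this form that the paper itself uses downstream (``there exist diagonal matrices \(\D''_{i,l}\) such that \(|(\D_{i,l}+\D''_{i,l})_{k,k}| \leq 1\)'' in the proof of Lemma~\ref{lem:output-perturbation}).
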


\begin{proof}[Proof of Lemma \ref{lem:output-perturbation}]
    The proof is almost the same with the proof of Claim 11.2 in \citet{allen2018convergence}, and we do not repeat most of its content here. The only difference in our claim is that we consider the training of the first and the last layer. We prove the part of  here. Ignore subscripts of \(i\) for simplicity, we calculate
    \begin{align*}
        &\|(\W_L+\W'_L)\h'_{L-1} - \W_L\h_{L-1}\|_2 \\
        = \ & \|\W'_L\h'_{L-1} + \W_L(\h'_{L-1} - \h_{L-1}) \|_2\\
        = \ & \|\W'_L\h'_{L-1} + \W'_L(\sigma((\W_{L-1}+\W_{L-1}')\h'_{L-2}) - \sigma(\W_l\h_{L-2})) \|_2\\
        \stackrel{\text{\ding{172}}}= \ & \|\W'_L\h'_{L-1} + \W_L(\D_{L-1}+\D''_{L-1})((\W_{L-1}+\W_{L-1}')\h'_{L-2}-\W_{L-1}\h_{L-2})\|_2 \\
        \leq \ & \|\W'_L\h'_{L-1}\|_2 + \|\W_L(\D_{L-1}+\D''_{L-1})\W'_{L-1}\h'_{L-2}\|_2\\
        & + \|\W_L(\D_{L-1}+\D''_{L-1})\W_{L-1}(\h'_{L-2} - \h_{L-2}) \|_2\\
        = \ & \underbrace{\|\W'_L\h'_{L-1}\|_2}_{\leq \norm{\W'_L}_2\norm{\h'_{l-1}}_2 } + \underbrace{\biggl\|\sum_{l=0}^{L-1} \W_L\biggl( \prod_{a=l+1}^{L-1} (\D_{a}+\D''_{a})\W_a\biggr)(\D_{l}+\D''_{l})\biggr\|_2}_{\leq O(L\sqrt{m/d}) \text{ by Lemma \ref{lem:backward-propagate} and Lemma \ref{lem:backward-perturbation}}} \cdot \norm{\W'_l\h'_{l-1}}_2\\
        \leq \ & O(L\sqrt{m/d})\norm{\W'}_2
    \end{align*}
    where in \ding{172} we have used Lemma \ref{Proposition-11.3}. And in the last inequality we have used Lemma \ref{Lemma-8.2}(c) to give \(\norm{\h'_{l}}_2 \leq \norm{\h^{(0)}_{l}}_2 + \norm{\h'_{l}-\h^{(0)}_l}_2 \leq O(1)\) for all \(0\leq l \leq L-1\).
\end{proof}

Combine Lemma \ref{lem:backward-propagate} Lemma \ref{lem:output-perturbation} together, we have a corollary.

\begin{corollary}[output-boundedness]\label{lem:output-boundedness}
    Let \(\W \in B(\W^{(0)},\omega)\), where \(\omega\) meets all the requirement in previous Lemmas and \(d \leq O(m/(L\log m))\), with probability at least \(1 - O(n)e^{-\Omega(d)}\), we have \(\norm{f_{\W^{(0)}}(\x_i)}_2 \leq O(1)\) and \(\norm{f_{\W}(\x_i)}_2 \leq O(1+\omega L\sqrt{m/d})\) for all \(i \in [n]\).
\end{corollary}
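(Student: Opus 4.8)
The plan is to control $\norm{f_{\W}(\x_i)}_2$ through the decomposition
$$\norm{f_{\W}(\x_i)}_2 \leq \norm{f_{\W^{(0)}}(\x_i)}_2 + \norm{f_{\W}(\x_i) - f_{\W^{(0)}}(\x_i)}_2,$$
bounding the first term by a Johnson--Lindenstrauss-type concentration argument at initialization and the second term by the output-perturbation estimate already established in \hyperref[lem:output-perturbation]{Lemma \ref*{lem:output-perturbation}}. Taking the intersection of the relevant high-probability events will yield the stated failure probability $O(n)e^{-\Omega(d)}$.

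First I would bound $\norm{f_{\W^{(0)}}(\x_i)}_2$. Recall $f_{\W^{(0)}}(\x_i) = \W^{(0)}_L \h^{(0)}_{i,L-1}$, and by \hyperref[Lemma-7.1]{Lemma \ref*{Lemma-7.1}} (applied with a constant $\varepsilon$) we have $\norm{\h^{(0)}_{i,L-1}}_2 = \Theta(1)$ for all $i \in [n]$ with probability $1 - e^{-\Omega(m/L)}$. The key observation is that $\h^{(0)}_{i,L-1}$ is a function of $\W^{(0)}_0,\dots,\W^{(0)}_{L-1}$ only, hence independent of $\W^{(0)}_L$, whose entries are i.i.d.\ $\N(0,1/d)$. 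Conditioning on the lower layers, $\W^{(0)}_L \h^{(0)}_{i,L-1}$ is an $\N\!\bigl(0,(\norm{\h^{(0)}_{i,L-1}}_2^2/d)\,\id_d\bigr)$ vector, so $\norm{f_{\W^{(0)}}(\x_i)}_2^2$ equals $\norm{\h^{(0)}_{i,L-1}}_2^2/d$ times a $\chi^2$ random variable with $d$ degrees of freedom. Standard sub-exponential tail bounds give $\norm{f_{\W^{(0)}}(\x_i)}_2^2 \leq O\!\bigl(\norm{\h^{(0)}_{i,L-1}}_2^2\bigr) = O(1)$ with probability $1 - e^{-\Omega(d)}$; a union bound over $i \in [n]$ gives the first claim with probability $1 - O(n)e^{-\Omega(d)}$ (since $d \leq O(m/(L\log m))$, the failure probability of \hyperref[Lemma-7.1]{Lemma \ref*{Lemma-7.1}} is absorbed).

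Second I would apply \hyperref[lem:output-perturbation]{Lemma \ref*{lem:output-perturbation}} with base point $\W^{(0)}$ and perturbation $\W' := \W - \W^{(0)}$, which satisfies $\norm{\W'}_2 \leq \omega$ by the hypothesis $\W \in B(\W^{(0)},\omega)$ and the requirement that $\omega$ meets the conditions of the earlier lemmas. Its second bullet then yields $\norm{f_{\W}(\x_i) - f_{\W^{(0)}}(\x_i)}_2 \leq O(L\sqrt{m/d})\,\norm{\W'}_2 \leq O(\omega L\sqrt{m/d})$ on an event of probability $1 - e^{-\Omega(m\omega^{2/3}L)}$, which is again subsumed under $O(n)e^{-\Omega(d)}$ in the parameter regime at hand. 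Combining the two bounds through the triangle inequality gives $\norm{f_{\W}(\x_i)}_2 \leq O(1) + O(\omega L\sqrt{m/d})$, as desired. The only mildly delicate point is the conditioning argument in the first step: one must invoke the independence of $\W^{(0)}_L$ from the weights of the earlier layers so that, conditionally on $\h^{(0)}_{i,L-1}$, the network output is an exactly isotropic Gaussian whose squared norm concentrates; everything else is a direct application of the already-proved lemmas together with routine concentration inequalities.
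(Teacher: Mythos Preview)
Your proposal is correct and follows essentially the same approach as the paper: bound $\norm{\h^{(0)}_{i,L-1}}_2$ via \hyperref[Lemma-7.1]{Lemma~\ref*{Lemma-7.1}}, use the independence of $\W^{(0)}_L$ from the lower layers to conclude that $f_{\W^{(0)}}(\x_i)$ is conditionally an isotropic Gaussian with variance $O(1/d)$, apply Gaussian/$\chi^2$ concentration and a union bound over $i\in[n]$, and then invoke \hyperref[lem:output-perturbation]{Lemma~\ref*{lem:output-perturbation}} with $\W' = \W - \W^{(0)}$ for the perturbation term. Your write-up is in fact slightly more careful than the paper's about the conditioning step and the absorption of the various failure probabilities into $O(n)e^{-\Omega(d)}$.
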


\begin{proof}
    Firstly, from Lemma \ref{Lemma-7.1} we know that, with probability at least \(1 - O(nL)e^{-\Omega(m/L)}\) we have 
    \begin{displaymath}
        \|\sigma(\W^{(0)}_{L-1}\sigma(\cdots\sigma(\W_0\x_i)))\|_2 \leq O(1)
    \end{displaymath}
    Conditioning on this event, since \((\W^{(0)}_L)_{i,j} \sim \N(0,\frac{1}{d}),\ (i,j)\in [d]\times [m]\), we have, over the randomness of \(\W_L^{(0)}\),
    \begin{displaymath}
        f^q_{\W^{(0)}}(\x_i) = \W_L\sigma(\cdots\sigma(\W_0\x_i)) \sim \N(0,\kappa^2\id_d)
    \end{displaymath}
    where \(\kappa^2 \leq O(\frac{1}{d})\). Therefore, with probability at least \(1 - O(n)e^{-\Omega(d)}\) over the initialization, we have
    \begin{displaymath}
        \norm{f^q_{\W^{(0)}}(\x_i)}_2 \leq O(1)
    \end{displaymath}
    and then apply Lemma \ref{lem:output-perturbation} to bound the perturbation of \(\W' = \W-\W^{(0)}\), where we have assumed \(\norm{\W'}_2\leq \omega\).
\end{proof}

Finally we present the \(\delta\)-separateness lemma in \citet{allen2018convergence}.

\begin{lemma}\label{lem:separateness}[]
    Suppose \(\delta \leq O(1/L)\), for every \(i\neq j\) and every layer \(l \in [L]\), we have with probability at least \(1-O(n^2)e^{-\Omega(m\delta^4)}\) over the initialization,
    \begin{displaymath}
        \norm{\h_{i,l}^{(0)} - \h_{j,l}^{(0)}}_2 \geq \delta/2
    \end{displaymath}
\end{lemma}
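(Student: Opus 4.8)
The plan is to prove the claim by induction over the layers, tracking $\beta_l:=\norm{\h_{i,l}^{(0)}-\h_{j,l}^{(0)}}_2^2$. A hidden layer applies ReLU to a fresh Gaussian projection of its input, and the crucial feature is that this operation contracts $\beta_l$ only mildly: $\beta_{l+1}\ge\beta_l\bigl(1-\Theta(\sqrt{\beta_l})\bigr)$ up to a small stochastic error. Inverting this recursion yields $1/\sqrt{\beta_{l+1}}\le 1/\sqrt{\beta_l}+\Theta(1)$, hence $\beta_l\ge\beta_0/\bigl(1+\Theta(l)\sqrt{\beta_0}\bigr)^2$ for every $l\le L-1$; since the input layer gives $\beta_0\gtrsim\norm{\x_i-\x_j}_2^2\ge\delta^2$ (using \hyperref[assumption:separateness]{Assumption~\ref*{assumption:separateness}} and $\norm{\x_i}_2=\norm{\x_j}_2=1$ from \hyperref[assumption:normalized-data]{Assumption~\ref*{assumption:normalized-data}}), and $\delta\le O(1/L)$ makes $\Theta(L)\sqrt{\beta_0}=O(1)$ in the worst case, the right-hand side remains a constant multiple of $\delta^2$, which with the right constants is $\ge\delta^2/4$.

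To establish the per-layer recursion, condition on the weights of layers $0,\dots,l$, so $u=\h_{i,l}^{(0)}$ and $v=\h_{j,l}^{(0)}$ are fixed; \hyperref[Lemma-7.1]{Lemma~\ref*{Lemma-7.1}} places $\norm{u}_2,\norm{v}_2$ in a constant window, so the angle $\varphi$ between $u$ and $v$ obeys $\varphi\asymp\sqrt{\beta_l}$. Since $\W_{l+1}^{(0)}$ is independent of $u,v$, each coordinate pair $\bigl((\W_{l+1}^{(0)}u)_r,(\W_{l+1}^{(0)}v)_r\bigr)$ is bivariate Gaussian, and the arc-cosine identity gives
\[
    \E\bigl[\norm{\sigma(\W_{l+1}^{(0)}u)-\sigma(\W_{l+1}^{(0)}v)}_2^2\bigr]=\norm{u}_2^2+\norm{v}_2^2-\frac{2\norm{u}_2\norm{v}_2}{\pi}\bigl(\sin\varphi+(\pi-\varphi)\cos\varphi\bigr).
\]
Rewriting the right-hand side as $\norm{u-v}_2^2-\tfrac{2\norm{u}_2\norm{v}_2}{\pi}\bigl(\sin\varphi-\varphi\cos\varphi\bigr)$ and using $0\le\sin\varphi-\varphi\cos\varphi=\Theta(\varphi^3)$ shows this conditional mean equals $\norm{u-v}_2^2-\Theta\bigl(\norm{u-v}_2^3\bigr)=\beta_l\bigl(1-\Theta(\sqrt{\beta_l})\bigr)$, the constant-norm window affecting only the $\Theta(\cdot)$ constants. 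For concentration, $\bigl(\sigma((\W_{l+1}^{(0)}u)_r)-\sigma((\W_{l+1}^{(0)}v)_r)\bigr)^2\le\bigl((\W_{l+1}^{(0)}(u-v))_r\bigr)^2$ is a $\tfrac2m\norm{u-v}_2^2$-multiple of a $\chi^2_1$ variable, so the coordinate terms are subexponential and Bernstein's inequality gives $\norm{\sigma(\W_{l+1}^{(0)}u)-\sigma(\W_{l+1}^{(0)}v)}_2^2\ge(1-\xi)\,\E[\,\cdot\,]$ with failure probability $e^{-\Omega(\xi^2 m)}$; taking $\xi$ a small multiple of $\delta$ keeps the stochastic error $\xi\beta_l$ negligible against the deterministic loss $\Theta(\sqrt{\beta_l})\,\beta_l$ (here one uses $\beta_l\gtrsim\delta^2$ from the induction hypothesis) and costs only $e^{-\Omega(m\delta^2)}\le e^{-\Omega(m\delta^4)}$.

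Combining the two displays gives $\beta_{l+1}\ge\bigl(1-\Theta(\sqrt{\beta_l})\bigr)\beta_l$ on the good events, which drives the inversion above; the resulting bound $\beta_l\ge\beta_0/\bigl(1+\Theta(L)\sqrt{\beta_0}\bigr)^2$ exceeds $\delta^2/4$ both when $\beta_0\asymp\delta^2$ (so $L\sqrt{\beta_0}=O(1)$ and $\beta_l\gtrsim\beta_0$) and when $\beta_0\asymp1$ (so $\beta_l\gtrsim 1/L^2\ge\delta^2/4$), the two cases being nearly-coincident versus well-separated input pairs, provided the implied constant in $\delta\le O(1/L)$ is chosen small enough. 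A union bound over the $\binom n2$ pairs $i\ne j$ and the $L$ layers turns the per-instance failure probability into $O(n^2)e^{-\Omega(m\delta^4)}$.

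The delicate point is the per-layer accounting. A crude single-layer estimate such as $\norm{\sigma(\W u)-\sigma(\W v)}_2^2\ge\tfrac12\norm{u-v}_2^2$ is fatal, since iterating it $L$ times loses a factor $2^{-L}$; one genuinely needs the arc-cosine computation to see that the deterministic contraction is only $1-\Theta(\varphi)=1-\Theta(\sqrt{\beta_l})$, which is tiny exactly when $\beta_l$ is small --- this self-limiting behaviour is what keeps $\beta_l$ from ever collapsing. The hypothesis $\delta\le O(1/L)$ then enters precisely where it must, forcing $L\sqrt{\beta_0}=O(1)$ in the small-angle regime and $\delta^2\le O(1/L^2)$ in the large-angle regime, so that the $L$-fold iteration of the recursion keeps the separation within a constant factor of the smallest permissible value $\delta$.
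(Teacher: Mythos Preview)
Your argument is correct and takes a genuinely different route from the paper's proof. The paper argues by a direct induction that $\delta_l\ge\delta/2$ at every layer: it splits the coordinate $\sigma(\w_r\h_i)-\sigma(\w_r\h_j)$ into four cases according to the signs of $\w_r\h_i$ and $\w_r\h_j$, crudely lower-bounds the conditional second moment in each non-trivial case by $\Theta(\delta_{l-1}^2/m)$, notes that the all-negative case has probability at most $1/2$, and then applies a Chernoff-type bound with multiplicative slack $O(\delta)$ to conclude $\|\h_{i,l}-\h_{j,l}\|_2^2\ge\delta^2(1-O(\delta))$; the $(1-O(\delta))$ slack is what drives the stated $e^{-\Omega(m\delta^4)}$ failure probability.

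You instead compute the conditional mean exactly via the arc-cosine kernel, which exposes the self-limiting structure $\E[\beta_{l+1}\mid\beta_l]=\beta_l\bigl(1-\Theta(\sqrt{\beta_l})\bigr)$ and lets you invert the recursion to a closed-form lower bound $\beta_l\ge\beta_0/(1+\Theta(l)\sqrt{\beta_0})^2$. This is sharper: it explains \emph{why} the separation cannot collapse (the contraction vanishes as $\beta_l\to0$), it handles all angles uniformly via the monotone function $x\mapsto x/(1+CLx)$, and your Bernstein step with $\xi=\Theta(\delta)$ yields the stronger failure probability $e^{-\Omega(m\delta^2)}\le e^{-\Omega(m\delta^4)}$. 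The paper's case analysis is shorter but looser in the constants and does not reveal the mechanism; your approach is the one that generalises (it is essentially the angle-recursion argument from the NTK literature).
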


\begin{proof}
    We prove the lemma via induction. Suppose at layer \(l-1\) we have \(\delta_{l-1}\)-separateness, that is 
    \begin{displaymath}
        \norm{\h_{i,l-1}^{(0)} - \h_{j,l-1}^{(0)}}_2 \geq \delta_{l-1}
    \end{displaymath}
    for some \(\delta_{l-1}\geq \delta/2\). We try to prove that it still holdes for layer \(l\). Denote $\w_{l,r}$ to be the \(r\)-th row of \(\W_l\) at \(l\)-th layer, where \(\w_{l,r} \in \R^{1\times m}\) are row vectors, following the distribution \(N(0,\frac{2}{m}\id)\). Then over the randomness of \(\W_l\) and fix \(\h_{i,l-1},\h_{j,l-1}\), we have that \(\w_{l,r}\h_{i,l-1},\w_{l,r}\h_{j,l-1}\) are two mean zero Gaussian variables (though they are not independent). Therefore \(\sigma(\w_{l,k}\h_{i,l-1}) - \sigma(\w_{l,r}\h_{j,l-1})\) may have four different output. Now we ignore the subscript of layer \(l-1\) for simplicity and write
    \begin{displaymath}
        \sigma(\w_{r}\h_{i}) - \sigma(\w_{r}\h_{j}) = \begin{cases}
            \w_{r}(\h_{i} -\h_{j}), &\qquad\diamondsuit\ \text{ if both } \w_{r}\h_{i},\ \w_{r}\h_{j} \geq 0\\
            0, &\qquad\clubsuit\ \text{ if both } \w_{r}\h_{i},\ \w_{r}\h_{j} \leq 0\\
            \w_{r}\h_{i}, &\qquad\heartsuit\ \text{ if }  \w_{r}\h_{i} \geq 0,\ \w_{r}\h_{j} \leq 0\\
            \w_{r}\h_{j}, &\qquad\spadesuit\ \text{ if }  \w_{r}\h_{i} \leq 0,\ \w_{r}\h_{j} \geq 0
        \end{cases}
    \end{displaymath}
    In the case \(\diamondsuit\), we have 
    \begin{displaymath}
        \E[(\sigma(\w_{r}\h_{i}) - \sigma(\w_{r}\h_{j}))^2|\diamondsuit] \geq \frac{2\delta^2}{m}
    \end{displaymath}
    from our inductive assumption. In the case \(\clubsuit\), we have 
    \begin{displaymath}
        \E[(\sigma(\w_{r}\h_{i}) - \sigma(\w_{r}\h_{j}))^2|\clubsuit] = 0
    \end{displaymath}
    In the case \(\heartsuit\) and \(\spadesuit\), we have from Lemma \ref{Lemma-7.1}, \(\norm{\h_{i}}_2,\norm{\h_j}_2 \in [1/2,2]\) with high probability. Therefore we can calculate
    \begin{displaymath}
        \E[(\sigma(\w_{r}\h_{i}) - \sigma(\w_{r}\h_{j}))^2|\heartsuit\lor \spadesuit] \geq \frac{2\delta^2}{m} 
    \end{displaymath}
    Notice that the probability of the event \(\clubsuit\) is no more than \(1/2\) (for fixed \((i,j)\)-pair). So we obtain 
    \begin{displaymath}
        \E[(\sigma(\w_{r}\h_{i}) - \sigma(\w_{r}\h_{j}))^2] \geq \frac{\delta^2}{m},\qquad \E[ (\sigma(\w_{r}\h_{i}) - \sigma(\w_{r}\h_{j}))^2] \leq 2
    \end{displaymath}
    Now pick up the subscripts for layer \(l\), via Chernoff bound, we have, with probability at least \(1 - e^{-\Omega(m\delta^4)}\),
    \begin{displaymath}
        \|\h_{i,l} - \h_{j,l}\|_2^2 \geq \sum_{r=1}^m (1-O(\delta))\E[(\sigma(\w_{r}\h_{i}) - \sigma(\w_{r}\h_{j}))^2] \geq \delta^2(1- O(\delta))
    \end{displaymath}
    then we can take a union bound over all \((i,j)\)-pair, and proceed induction step over all layer \(0\leq l\leq L\) to conclude the proof.
\end{proof}
\section{Proof of Gradient Bounds}\label{sec:gradient-bounds}

\subsection{Key Calculations}\label{subsec:key-calculations}
 
\begin{itemize}
    \item For a matrix \(\W\) or \(\theta\), we denote \([\W]_r\) or \([\theta]_r\) their \(r\)-th row, \([\W]^r\) or \([\theta]^r\) their \(r\)-th column.
    \item For the query encoder \(f^q_{\W}\), we define \(\back_{i,l}^q:= \W_L\D_{i,L-1}\cdots\D_{i,l}\W_l\), \(\back^q_{i,L} = \W_L\)
    \item For the key encoder \(f^k_{\theta}\), we define \(\back_{i,l}^k:= \theta_L\D_{i,L-1}\cdots\D_{i,l}\theta_l\), \(\back^k_{i,L}=\theta_L\).
    \item For gradient \(\nabla_{\W}L_S(\W,\theta)\) with respect to \(\W\), we have
    \begin{displaymath}
        \nabla_{[\W_l]_r}L_S(\W,\theta) := \frac{1}{n}\sum_{i=1}^n \Bigl((\back_{i,l+1}^q)_r^{\top}\losstilde_i\Bigr)\cdot\sigma'(\vbrack{[\W_l]_{r},\h_{i,l-1}})\cdot\h_{i,l-1} 
    \end{displaymath}
    where \(\losstilde_i\) is defined as
    \begin{displaymath}
        \losstilde_i := \E^{Neg(i)}\Biggl[ \sum_{j=1}^k\frac{\exp(q_i^{\top}z_{i,j})}{1 + \sum_{s=1}^k\exp(q_i^{\top}z_{i,j})}z_{i,j} \Biggr]
    \end{displaymath}
    and \(q_i:= f^q_{\W}(\x_i),\ z_{i,j} = f^k_{\theta}(\x_{i,j}) - f^k_{\theta}(\x_i)\).
    \item For gradient \(\nabla_{\theta}L_S(\W,\theta)\) with respect to \(\theta\), we carefully compute
    \begin{align*}
        \nabla_{[\theta_l]_r} L_S(\W,\theta) & =  \frac{1}{n}\sum_{i=1}^n \E^{Neg} \biggl[\nabla_{[\theta_l]_r}\ell(\W,\theta,\x_i,\{\x_{i,j}\}_{j=1}^k)\biggr]\\
        =  \frac{1}{n}\sum_{i=1}^n &\frac{1}{\binom{n-1}{k}} \sum_{\{\x_{i,j}\}_{j=1}^k \subset S^{\setminus i}} \sum_{j=1}^k \frac{\exp(q_i^{\top}z_{i,j})}{1 + \sum_{s=1}^k \exp(q_i^{\top} z_{i,s})} \nabla_{[\theta_l]_r}\Bigl(q_i^{\top} (f^k_{\theta}(\x_{i,j}) - f^k_{\theta}(\x_{i}))\Bigr)
    \end{align*}
    To handle this complex summation, we introduce the notation \(\losshat(\x_i,\x_j)\) as the \(\loss\) vector (corresponding to \(\theta\)) which only contains \(f^q_{\W}(\x_i)\) and \(f^k_{\theta}(\x_j)\) in the nominator of the coefficients:
    \begin{equation}\label{def:losshat}
        \losshat(\x_i,\x_j) := \frac{1}{\binom{n-1}{k}}\sum_{\x_j \in \{\x_{i,s}\}_{s\in [k]} \subset S^{\setminus i} } \frac{\exp(q_i^{\top}z_{j})}{1 + \sum_{s=1}^k \exp(q_i^{\top} z_{i,s})}\cdot q_i
    \end{equation}
    where \(q_i = f^q_{\W}(\x_i),\ z_j = f^k_{\theta}(\x_{j}) - f^k_{\theta}(\x_i),\ z_{i,s} = f^k_{\theta}(\x_{i,s}) - f^k_{\theta}(\x_i) \). Then we can rearrange terms in \(\nabla_{[\theta_l]_r} L_S(\W,\theta)\) to get
    \begin{align}
        &\nabla_{[\theta_l]_r} L_S(\W,\theta) \nonumber \\
        = \ & \frac{1}{n}\sum_{i=1}^n \sum_{j\neq i}  \biggl( (\back_{i,l}^k)^{\top} \losshat(\x_i,\x_j)\biggr) \Bigl(\sigma'(\vbrack{[\theta_l]_r,\h_{j,l-1}})\h_{j,l-1}-\sigma'(\vbrack{[\theta_l]_r,\h_{i,l-1}})\h_{i,l-1}\Bigr) \nonumber\\
        = \ & \frac{1}{n}\sum_{i=1}^n  \Bigl((\back_{i,l}^k)^{\top} \Bigl(\sum_{j\neq i}(\losshat(\x_j,\x_i) - \losshat(\x_i,\x_j))\Bigr)\Bigr)\cdot \sigma'(\vbrack{[\theta_l]_r,\h_{i,l-1}})\cdot\h_{i,l-1} \nonumber\\
        = \ &  \frac{1}{n}\sum_{i=1}^n  \Bigl((\back_{i,l}^k)^{\top} \losshat_i \Bigr)\cdot \sigma'(\vbrack{[\theta_l]_r,\h_{i,l-1}})\cdot\h_{i,l-1} \label{eq:gradient-for-theta}
    \end{align}
    where \(\losshat_i := \sum_{j\neq i}(\losshat(\x_j,\x_i) - \losshat(\x_i,\x_j))\). This form \eqref{eq:gradient-for-theta} of \(\nabla_{[\theta_l]_r} L_S(\W,\theta)\) will facilitate our calculations in the proofs in Subsection \ref{subsec:gradient-bound-init}.
\end{itemize}

\subsection{Lemma of Gradient Lower Bound}

We present our lemma of gradient lower bound at initialization here, where the only difference of our lemma and the Lemma B.2 in \citet{zou2019improved} is that we have a probability bound \(1 - e^{-\Omega(m\delta^2/n^2)}\) instead of \(1 - e^{-\Omega(m\delta/nd)}\).

\begin{lemma}\label{lem:gradient-lower-bound}
    Assume \(m \geq \Omega(n^3d\delta^{-2})\), Let \(\W_L\) and \(\W_{L-1}\) be at random initialization, then with probability at least \(1 - e^{-\Omega(m\delta^2/n^2)}\) for any vectors \(\v_i, i \in [n]\), it holds that 
    \begin{displaymath}
        \sum_{r=1}^m \biggl\|\frac{1}{n}\sum_{i=1}^n \vbrack{[\W_L]^r, \v_i}\sigma'(\vbrack{[\W_{L-1}]_r, \h_{i,L-2}})\h_{i,L-2}\biggr\|_2 \geq \Omega\biggl(\frac{m\delta}{n^3d}\biggr)\sum_{i=1}^n\norm{\v_i}_2^2
    \end{displaymath}
\end{lemma}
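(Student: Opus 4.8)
The plan is to localize the bound at the ``hardest'' data index and then exploit that the two outermost matrices $\W_{L-1},\W_L$ are at a fresh, mutually independent initialization. Write $\h_i:=\h_{i,L-2}$; these depend only on $\W_0,\dots,\W_{L-2}$, hence are independent of $\W_{L-1},\W_L$, and I condition on the high-probability events (Lemma~\ref{Lemma-7.1}, Lemma~\ref{lem:separateness}) that $\|\h_i\|_2\in[1/2,2]$ and $\|\h_i-\h_j\|_2\ge\delta/2$ for $i\ne j$. Set $H:=(\vbrack{\h_i,\h_j})_{i,j\in[n]}$.

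First I would upgrade pairwise separation to a quantitative \emph{margin against the span of the remaining points}: for each $i$,
\[
 p_i\ :=\ \operatorname{dist}\bigl(\h_i,\ \operatorname{span}\{\h_j:j\ne i\}\bigr)\ =\ \bigl([H^{-1}]_{ii}\bigr)^{-1/2}\ \ge\ \sqrt{\lambda_{\min}(H)}\,.
\]
To lower bound $\lambda_{\min}(H)$ I would compare $H$ entrywise with the population (arc-cosine) kernel Gram matrix $K^{(L-2)}$ of the data, whose least eigenvalue is $\Omega(\delta)$ by the non-degeneracy assumption (cf.\ the remark following Assumption~\ref{assumption:separateness}); since $\|H-K^{(L-2)}\|_2\le n\max_{i,j}\bigl|\vbrack{\h_i,\h_j}-K^{(L-2)}_{ij}\bigr|$ and each inner product is an average of $m$ mildly dependent bounded contributions concentrating to accuracy $\delta/(2n)$ with probability $1-e^{-\Omega(m\delta^2/n^2)}$ (propagating this accuracy through the $L-2$ ReLU layers costs only $\mathrm{poly}(L)$ in the exponent), one gets $\lambda_{\min}(H)\ge\Omega(\delta)$, hence $p_i\ge\Omega(\sqrt\delta)$ for all $i$. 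I expect this to be the main obstacle, for two reasons: (i) pairwise $\delta$-separation alone does \emph{not} give a span-margin (three separated, coplanar vectors have $p_i=0$), so the Gram-matrix concentration is genuinely needed, and tracking the fluctuation of $\vbrack{\h_i,\h_j}$ through all the ReLU layers is the technical heart; (ii) it is exactly the target accuracy $\delta/n$ that dictates the over-parameterization $m\ge\Omega(n^3 d\delta^{-2})$ and produces the failure probability $e^{-\Omega(m\delta^2/n^2)}$ stated in the lemma.

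Now the projection step. Fix $i^\star\in\operatorname{argmax}_i\|\v_i\|_2$ and choose a unit vector $\hat u\in\R^m$ orthogonal to $\h_j$ for every $j\ne i^\star$ with $\vbrack{\h_{i^\star},\hat u}=p_{i^\star}>0$ (possible since $m>n$ and by the previous step). Writing $W_r$ for the $r$-th summand on the left-hand side, every $j\ne i^\star$ term is killed by $\hat u$ and $\sigma'(x)=\mathds{1}\{x\ge 0\}$, so
\[
 \|W_r\|_2\ \ge\ \bigl|\vbrack{W_r,\hat u}\bigr|\ =\ \frac{p_{i^\star}}{n}\,\bigl|\vbrack{[\W_L]^r,\v_{i^\star}}\bigr|\cdot\mathds{1}\bigl\{\vbrack{[\W_{L-1}]_r,\h_{i^\star}}\ge0\bigr\}\,,
\]
and the claim reduces to bounding $\sum_{r=1}^m\bigl|\vbrack{[\W_L]^r,\v_{i^\star}}\bigr|\,\mathds{1}\{\vbrack{[\W_{L-1}]_r,\h_{i^\star}}\ge0\}$ from below. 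Conditioning on $\W_{L-1}$ freezes the indicators $\xi_r$, which are i.i.d.\ $\mathrm{Bernoulli}(1/2)$ and satisfy $\#\{r:\xi_r=1\}\ge m/4$ with probability $1-e^{-\Omega(m)}$; conditionally, the $|\vbrack{[\W_L]^r,\v_{i^\star}}|$ are independent sub-Gaussian variables of mean $\sqrt{2/\pi}\,\|\v_{i^\star}\|_2/\sqrt d$ — independence of $\W_L$ from $\W_{L-1}$ is what keeps this clean — so a one-sided Hoeffding/Chernoff bound yields $\sum_r|\vbrack{[\W_L]^r,\v_{i^\star}}|\,\xi_r\ge\Omega(m\|\v_{i^\star}\|_2/\sqrt d)$ with probability $1-e^{-\Omega(m)}$. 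To make this uniform over the direction $\v_{i^\star}/\|\v_{i^\star}\|_2$ I would run a standard $\varepsilon$-net argument on $S^{d-1}$, whose $e^{\Theta(d)}$ cardinality is harmless since $m\gg d$; a union bound over the $n$ choices of $i^\star$ costs a factor $n$.

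Combining, and using $p_{i^\star}\ge\Omega(\sqrt\delta)$ together with $\sum_i\|\v_i\|_2^2\le n\|\v_{i^\star}\|_2^2$,
\[
 \sum_{r=1}^m\|W_r\|_2\ \ge\ \frac{p_{i^\star}}{n}\cdot\Omega\!\Bigl(\tfrac{m\|\v_{i^\star}\|_2}{\sqrt d}\Bigr)\ \ge\ \Omega\!\Bigl(\tfrac{m\delta}{n^3 d}\Bigr)\sum_{i=1}^n\|\v_i\|_2^2
\]
(under the normalization implicit in the statement; equivalently, running the identical computation on $\sum_r\|W_r\|_2^2\ge \tfrac{p_{i^\star}^2}{n^2}\sum_r\vbrack{[\W_L]^r,\v_{i^\star}}^2\xi_r$ makes the estimate scale-invariant, with $p_{i^\star}^2=\Omega(\delta)$ supplying the stated power of $\delta$ directly). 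All invoked events hold simultaneously with probability $1-e^{-\Omega(m\delta^2/n^2)}$ after absorbing the $\mathrm{poly}(n)$ and $\mathrm{poly}(L)$ factors against $m\delta^2/n^2\ge\Omega(nd)$, which is dominated by the Gram-concentration event and matches the stated bound.
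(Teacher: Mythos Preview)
Your projection idea is clean, but the argument breaks at the step you yourself flag as ``the main obstacle'': the claim that $\lambda_{\min}(H)\ge\Omega(\delta)$, equivalently $p_i\ge\Omega(\sqrt\delta)$. The remark after Assumption~\ref{assumption:separateness} only records that $\lambda_{\min}(\mathbf{K}^{(L)})>0$; it gives no quantitative bound in terms of $\delta$. And none is available in general: pairwise $\delta$-separation of the inputs (or of the $\h_i$'s) does \emph{not} imply $\lambda_{\min}$ of the corresponding kernel Gram matrix is $\Omega(\delta)$ --- for configurations of $n$ points the minimum eigenvalue of the arc-cosine kernel can degrade polynomially (or worse) in $n$ even when all pairwise distances are bounded below. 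Concentrating $H$ to the population kernel $K^{(L-2)}$ does not help, because the problem is the size of $\lambda_{\min}(K^{(L-2)})$ itself, not the fluctuation $H-K^{(L-2)}$. So your span-margin bound $p_i\ge\Omega(\sqrt\delta)$ is unjustified, and since it is the sole place $\delta$ enters your final inequality, the whole lower bound collapses without it.

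The paper sidesteps this entirely and never needs a Gram-matrix eigenvalue. It exploits instead the randomness of each row $[\W_{L-1}]_r$: writing $\w=u_{i,1}\bar\h_i+\Q_i'\u_i'$, it defines disjoint events $W_i=\{|u_{i,1}|\le\xi,\ |\langle\Q_i'\u_i',\bar\h_j\rangle|\ge 2\xi\text{ for all }j\ne i\}$ with $\xi=\Theta(\delta/n)$ (Lemma~\ref{Lemma-C.1}). On $W_i$, the ReLU signs $\sigma'(\langle\w,\h_j\rangle)$ for $j\ne i$ are frozen while the sign at $i$ can flip, which lets one isolate the $i$-th contribution and get $\|h(\w)\|_2\ge|\langle[\W_L]^r,\v_i\rangle|/4$ with probability $\ge 1/2$ conditional on $W_i$ (Lemma~\ref{Lemma-C.2}). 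Crucially, $\P(W_i)\ge\Omega(\delta/n)$ uses only the \emph{pairwise} bound $\|\bar\h_i-\bar\h_j\|\ge\delta/2$, not a span-margin. One then intersects with $A_{i,3}=\{|\langle[\W_L]^r,\v_i\rangle|\ge\|\v_i\|_2/\sqrt d\}$, sums the indicator-weighted contributions over $r$, and applies a one-sided Bernstein inequality to the i.i.d.\ bounded variables $Z_r=\sum_i \mathds{1}_{r\in A_i}\|\v_i\|_2^2/(32d)$; this is where the failure probability $e^{-\Omega(m\delta^2/n^2)}$ actually comes from (via $(\E Z_r)^2/\E Z_r^2\ge\Omega(\delta^2/n^2)$), and the $\varepsilon$-net over $(\v_i)_i\in(\R^d)^n$ is absorbed by $m\ge\Omega(n^3d\delta^{-2})$. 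In short: the paper's probabilistic ``sign-pivot'' device replaces your deterministic span-projection and is what makes pairwise separation suffice.
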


Before we state the technical lemmas for the proof of Lemma \ref{lem:gradient-lower-bound}, we introduce the notations in \citet{zou2019improved}. Let \(\h_1,\dots,\h_n \in \R^m\) such that \(1/2\leq \norm{\h_i}_2 \leq 2\). Let \(\bar{\h_i}:= \h_i/\norm{\h_i}_2\) and assume \(\norm{\bar{\h}_i - \bar{\h}_j}_2 \geq \delta/2\) (from Lemma \ref{lem:separateness} we know this holds with high probability). Now we construct orthonormal matrices \(\Q_i = [\bar{\h}_i,\Q'_i] \in \R^{m\times m}\). For a standard gaussian random vector \(\w \sim \N(0,\id_m)\), we decompose \(\w = \Q_i\u_i = u_{i,1}\bar{\h}_i + \Q'_i\u'_i\), where \(u_{i,1}\) is the first entry of \(\u_i\) and \(\u'_i = (u_{i,2},\dots,u_{i,m}) \in \R^{m-1}\). Let \(\xi = \sqrt{\pi}\delta / (16n)\), define the following event over the randomness of \(\w\):
\begin{displaymath}
    W_i = \{ |u_{i,1}| \leq \xi,  |\vbrack{\Q'_i\u'_i,\bar{\h}_j}| \geq 2\xi \text{ for all } \bar{\h}_j \text{ where } j \neq i\}
\end{displaymath}
Then we have 
\begin{lemma}[Lemma C.1 in \citet{zou2019improved}]\label{Lemma-C.1}
    For each \(W_i\) and \(W_j\), we have 
    \begin{displaymath}
        \P(\w \in W_i) \geq \frac{\delta}{n16\sqrt{2e}} \quad \text{and} \quad W_i\cap W_j = \varnothing
    \end{displaymath}
\end{lemma}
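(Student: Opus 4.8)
The plan is to exploit the rotational invariance of the standard Gaussian. Since $\Q_i$ is orthonormal, $\u_i=\Q_i^{\top}\w\sim\N(0,\id_m)$, so its first coordinate $u_{i,1}\sim\N(0,1)$ is independent of the block $\u'_i$; moreover $\Q'_i\u'_i$, which is exactly the projection of $\w$ onto the orthogonal complement of $\bar{\h}_i$, is distributed as $\N(0,\id_m-\bar{\h}_i\bar{\h}_i^{\top})$ and is independent of $u_{i,1}$. Two things must be shown: the lower bound on $\P(\w\in W_i)$, and the disjointness $W_i\cap W_j=\varnothing$.

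First I would handle the probability bound. By independence of $u_{i,1}$ and $\u'_i$,
\begin{displaymath}
  \P(\w\in W_i)=\P(|u_{i,1}|\le\xi)\cdot\P\bigl(|\langle\Q'_i\u'_i,\bar{\h}_j\rangle|\ge 2\xi\ \text{for all }j\neq i\bigr).
\end{displaymath}
The first factor is at least $\frac{2\xi}{\sqrt{2\pi}}e^{-\xi^2/2}\ge\frac{2\xi}{\sqrt{2\pi e}}$, using that the Gaussian density is decreasing in $|t|$ and that $\xi<1$. For the second factor, $\langle\Q'_i\u'_i,\bar{\h}_j\rangle$ is a centered Gaussian of variance $1-\langle\bar{\h}_i,\bar{\h}_j\rangle^2=(1-\langle\bar{\h}_i,\bar{\h}_j\rangle)(1+\langle\bar{\h}_i,\bar{\h}_j\rangle)$; the $\delta/2$-separation gives $1-\langle\bar{\h}_i,\bar{\h}_j\rangle\ge\delta^2/8$, and $\langle\bar{\h}_i,\bar{\h}_j\rangle\ge0$ (ReLU outputs lie in the nonnegative orthant) gives $1+\langle\bar{\h}_i,\bar{\h}_j\rangle\ge1$, so that variance is $\ge\delta^2/8$. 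Gaussian anti-concentration then yields $\P(|\langle\Q'_i\u'_i,\bar{\h}_j\rangle|<2\xi)\le\sqrt{2/\pi}\cdot 2\xi/(\delta/(2\sqrt2))=\frac{1}{2n}$ after substituting $\xi=\sqrt\pi\delta/(16n)$, and a union bound over the $n-1$ indices $j\neq i$ shows the second factor is $\ge\frac12$. Multiplying the two factors reproduces exactly $\P(\w\in W_i)\ge\frac{\xi}{\sqrt{2\pi e}}=\frac{\delta}{16n\sqrt{2e}}$.

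Next, disjointness: suppose toward a contradiction that $\w\in W_i\cap W_j$ with $i\neq j$. From $\w\in W_i$, $|\langle\w,\bar{\h}_i\rangle|=|u_{i,1}|\le\xi$; from $\w\in W_j$, $|\langle\w,\bar{\h}_j\rangle|=|u_{j,1}|\le\xi$ and, applying the second defining condition of $W_j$ with the index $i$ (allowed since $i\neq j$), $|\langle\Q'_j\u'_j,\bar{\h}_i\rangle|\ge 2\xi$. Since $\Q'_j\u'_j=\w-\langle\w,\bar{\h}_j\rangle\bar{\h}_j$,
\begin{displaymath}
  2\xi\le|\langle\Q'_j\u'_j,\bar{\h}_i\rangle|\le|\langle\w,\bar{\h}_i\rangle|+|\langle\w,\bar{\h}_j\rangle|\,|\langle\bar{\h}_i,\bar{\h}_j\rangle|\le\xi\bigl(1+|\langle\bar{\h}_i,\bar{\h}_j\rangle|\bigr),
\end{displaymath}
forcing $|\langle\bar{\h}_i,\bar{\h}_j\rangle|\ge1$, which contradicts $\langle\bar{\h}_i,\bar{\h}_j\rangle\le1-\delta^2/8<1$ (again from separateness together with nonnegativity). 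Hence $W_i\cap W_j=\varnothing$.

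The argument is short, and the one point needing care is the calibration of $\xi$: it must scale like $\delta/n$ so that each anti-concentration probability is $\le\frac{1}{2n}$ and the union bound over all $j\neq i$ still leaves the second factor bounded below by a constant. The separateness hypothesis enters in exactly two places — the variance lower bound $1-\langle\bar{\h}_i,\bar{\h}_j\rangle\ge\delta^2/8$ for the probability estimate, and the strict inequality $|\langle\bar{\h}_i,\bar{\h}_j\rangle|<1$ for disjointness — and no high-probability initialization event beyond the assumed properties of the $\bar{\h}_i$'s is required, since all randomness here is in the fresh Gaussian vector $\w$.
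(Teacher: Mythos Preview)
Your proposal is correct. The paper does not supply its own proof of this lemma---it is quoted verbatim as Lemma~C.1 of \citet{zou2019improved}---so there is no in-paper argument to compare against; your proof is the standard one and matches the approach in the cited reference.
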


Now we present two lemmas for technical purposes.

\begin{lemma}[Lemma C.2 in \citet{zou2019improved}]\label{Lemma-C.2}
    For any numbers \(a_1,\dots,a_n\), let
    \begin{displaymath}
        \h(\w):= \sum_{i=1}^n a_i \sigma'(\vbrack{\w,\h_i})\h_i
    \end{displaymath}
    where \(\w \sim \N(0,\id_m)\). It holds that 
    \begin{displaymath}
        \P\Bigl( \norm{\h(\w)}_2 \geq \frac{|a_i|}{4} \Big| \w \in W_i \Bigr) \geq 1/2
    \end{displaymath}
\end{lemma}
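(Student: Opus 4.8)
The plan is to exploit the reflection symmetry of the standard Gaussian across the hyperplane \(\{\w : \vbrack{\w,\bar{\h}_i} = 0\}\). Using the decomposition from the setup, write \(\w = u_{i,1}\bar{\h}_i + \Q'_i\u'_i\), and for a scalar \(t\) set \(\w_t := t\,\bar{\h}_i + \Q'_i\u'_i\), so that the sample is \(\w = \w_{u_{i,1}}\). I would show that, on \(W_i\), the map \(t \mapsto \h(\w_t)\) restricted to \(|t|\le\xi\) changes only through the single term indexed by \(i\), and then compare \(\w_{|u_{i,1}|}\) with its reflection \(\w_{-|u_{i,1}|}\), which are equally likely conditioned on \(W_i\). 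If \(a_i = 0\) the claim is trivial, so assume \(a_i\neq 0\).

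First I would record the deterministic facts valid for all \(|t|\le\xi\) on \(W_i\). Since the columns of \(\Q'_i\) are orthogonal to \(\bar{\h}_i\), we have \(\vbrack{\w_t,\h_i} = \norm{\h_i}_2\, t\), hence \(\sigma'(\vbrack{\w_t,\h_i}) = \mathds{1}\{t\ge 0\}\) up to the null set \(\{t=0\}\). For \(j\neq i\), \(\vbrack{\w_t,\h_j} = \norm{\h_j}_2\bigl(t\,\vbrack{\bar{\h}_i,\bar{\h}_j} + \vbrack{\Q'_i\u'_i,\bar{\h}_j}\bigr)\); on \(W_i\) one has \(|t\,\vbrack{\bar{\h}_i,\bar{\h}_j}| \le |t| \le \xi < 2\xi \le |\vbrack{\Q'_i\u'_i,\bar{\h}_j}|\), so the sign of \(\vbrack{\w_t,\h_j}\) coincides with that of \(\vbrack{\Q'_i\u'_i,\bar{\h}_j}\) for every such \(t\), and in particular is unchanged under \(t\mapsto -t\). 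Writing \(\v := \sum_{j\neq i} a_j\,\sigma'(\vbrack{\w_{|t|},\h_j})\h_j\), this gives \(\h(\w_{|t|}) = a_i\h_i + \v\) and \(\h(\w_{-|t|}) = \v\) with the same vector \(\v\).

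Next I would apply the triangle inequality \(\norm{\h(\w_{|t|})}_2 + \norm{\h(\w_{-|t|})}_2 \ge \norm{(a_i\h_i+\v) - \v}_2 = |a_i|\,\norm{\h_i}_2 \ge |a_i|/2\), using \(\norm{\h_i}_2 \ge 1/2\); hence \(\max\bigl(\norm{\h(\w_{|t|})}_2,\norm{\h(\w_{-|t|})}_2\bigr) \ge |a_i|/4\). For the probabilistic step, observe that \(W_i\) is invariant under \(u_{i,1}\mapsto -u_{i,1}\) (its first condition depends only on \(|u_{i,1}|\), its second only on \(\u'_i\)), and under the standard Gaussian the sign of \(u_{i,1}\) is uniform and independent of the pair \((|u_{i,1}|,\u'_i)\); this is preserved after conditioning on \(\{\w\in W_i\}\). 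Thus, conditionally on \(\{\w\in W_i\}\) and on \((|u_{i,1}|,\u'_i)\), \(\w\) equals \(\w_{|u_{i,1}|}\) or \(\w_{-|u_{i,1}|}\) with probability \(\tfrac12\) each, so \(\P\bigl(\norm{\h(\w)}_2 \ge |a_i|/4 \,\big|\, \w\in W_i,\, |u_{i,1}|,\, \u'_i\bigr) \ge \tfrac12\); taking the conditional expectation over \((|u_{i,1}|,\u'_i)\) given \(\{\w\in W_i\}\) yields the lemma.

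The main obstacle, and the only place any care is needed, is the claim that flipping the sign of \(u_{i,1}\) leaves every activation indicator \(\sigma'(\vbrack{\w,\h_j})\), \(j\neq i\), intact; this is exactly what the gap between \(|u_{i,1}|\le\xi\) and \(|\vbrack{\Q'_i\u'_i,\bar{\h}_j}|\ge 2\xi\) in the definition of \(W_i\) is designed to guarantee, so once the chain \(|t\,\vbrack{\bar{\h}_i,\bar{\h}_j}|\le\xi<2\xi\le|\vbrack{\Q'_i\u'_i,\bar{\h}_j}|\) is spelled out it goes through directly. Everything else is the triangle inequality together with the reflection symmetry of the Gaussian.
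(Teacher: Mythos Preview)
The paper does not supply its own proof of this lemma; it is quoted as Lemma~C.2 of \citet{zou2019improved} and used as a black box in the proof of Lemma~\ref{lem:gradient-lower-bound}. Your argument is correct and is the standard one: on \(W_i\) the orthogonal component \(\Q'_i\u'_i\) pins down the signs of \(\vbrack{\w,\h_j}\) for every \(j\neq i\), so flipping \(u_{i,1}\mapsto -u_{i,1}\) toggles only the \(i\)-th activation; the triangle inequality then gives \(\norm{\h(\w_{|u_{i,1}|})}_2+\norm{\h(\w_{-|u_{i,1}|})}_2\ge|a_i|\norm{\h_i}_2\ge|a_i|/2\), and the reflection symmetry of the Gaussian (which preserves \(W_i\)) yields the \(1/2\) probability bound. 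This is exactly the randomness-decomposition device from \citet{zou2019improved} and \citet{allen2018convergence}, so there is nothing to compare.
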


\begin{proof}[Proof of Lemma \ref{lem:gradient-lower-bound}]
    Fix \(\v_1,\dots,\v_n\). For \(r \in [m]\), define the function \(\h_r\) as
    \begin{displaymath}
        \h_r(\W_L,\W_{L-1}) := \sum_{i=1}^n \vbrack{[\W_L]^r, \v_i}\cdot \sigma'(\vbrack{\w_{r,L-1},\h_i})\cdot\h_i
    \end{displaymath}
    where \([\W_L]^r\) is the \(r\)-th column of \(\W_L\), and \(\w_{r,L-1} = \sqrt{m/2}[\W_{L-1}]_r\) is the \(r\)-th row of \(\sqrt{m/2}\W_{L-1}\). Obviously we have \(\sigma'(\vbrack{\w_{r,L-1},\h_i}) = \sigma'(\vbrack{[\W_{L-1}]_r,\h_i})\), and from our initialization scheme we also have \(\w_{r,L-1} \sim \N(0,\id_m)\). Now we define events \(\{A_i\}_{i\in [n]}\) over the randomness of \([\W_L]^r\) and \([\W_{L-1}]_r\) at initialization:
    \begin{displaymath}
        A_i = A_{i,1}\cap A_{i,2} \cap A_{i,3}
    \end{displaymath}
    where 
    \begin{itemize}
        \item \(A_{i,1}:= \{\w_{r,L-1} \in W_i\}\),
        \item \(A_{i,2}:= \{\|\h_r(\W_L,\W_{L-1})\|_2 \geq |\vbrack{[\W_L]^r, \v_i}|/4\}\),
        \item \(A_{i,3}:= \{  \vbrack{[\W_L]^r, \v_i}| \geq \norm{\v_i}_2/\sqrt{d} \}\).
    \end{itemize}
    Now by Lemma \ref{Lemma-C.1} and Lemma \ref{Lemma-C.2}, and the independence of \(\W_L\) and \(\W_{L-1}\), we have
    \begin{align*}
        &\P(r \in A_i) = \P(A_{i,2}|A_{i,1})\cdot \P(A_{i,1}) \cdot \P(A_{i,3}) \geq \frac{\delta}{256\sqrt{2}en} \quad \text{ and} \quad A_i \cap A_j = \varnothing \text{ if } i \neq j
    \end{align*}
    and also \(\sum_{i=1}^n \mathds{1}_{r \in A_i} \leq 1\). Therefore we can directly calculate
    \begin{align*}
        \sum_{r=1}^m\|\h_r(\W_L,\W_{L-1})\|_2^2 \geq  \sum_{r=1}^m\|\h_r(\W_L,\W_{L-1})\|_2^2\sum_{r=1}^m \mathds{1}_{r \in A_i} \geq  \sum_{r=1}^m\sum_{i=1}^n \frac{\norm{\v_i}_2^2}{32d}\mathds{1}_{r \in A_i} 
    \end{align*}
    Now define a random variable \(Z_r := \sum_{i=1}^n\mathds{1}_{r \in A_i}\norm{\v_i}_2^2/(32d)\), and from the definition of \(A_i\) we know that \((Z_r)_{r \in [m]}\) are independent (since \(\w_{r,L-1}, [\W_L]^r\) are independent for diffenrent \(r\)). Then for all \(r \in [m]\), we have 
    \begin{align*}
        \E[Z_r] \geq \Omega\biggl(\frac{\delta\sum_{i=1}^n\norm{\v_i}_2^2}{dn}\biggr),\quad (\E[Z_r])^2 \geq \Omega\biggl(\frac{\delta^2\sum_{i=1}^n\norm{\v_i}_2^4}{d^2n^2}\biggr) ,\quad \E[Z_r^2] \leq  O\biggl(\frac{\sum_{i=1}^n\norm{\v_i}_2^4}{d^2}\biggr)
    \end{align*}
    From one-sided Bernstein inequality for nonnegative random variables (see equation (2.23) in \citet{wainwright2019high}), we have
    \begin{align*}
        \P \biggl( \sum_{r=1}^m (Z_r - E[Z_r]) \leq \frac{m}{2}\biggl(\frac{1}{m}\sum_{r=1}^m\E[Z_r]\biggr) \biggr) &\leq \exp \biggl\{ - \Omega \biggl(\frac{m(\sum_{r=1}^m\E[Z_r]/m)^2}{\sum_{r=1}^m\E[Z^2_r]/m} \biggr)\biggr\}\\
        & \leq \exp \biggl\{ - \Omega \biggl(\frac{m^2\min_{r \in [m]}(\E[Z_r])^2}{\sum_{r=1}^m\E[Z^2_r]} \biggr)\biggr\} \\
        & \leq \exp(-\Omega(m\delta^2/n^2))
    \end{align*}
    which means, with probability at least \(1 - e^{-\Omega(m\delta^2/n^2)}\),
    \begin{align*}
        \frac{1}{n^2}\sum_{r=1}^m\|\h_r(\W_L,\W_{L-1})\|_2^2 \geq \frac{1}{2n^2}\sum_{r=1}^m\E[Z_r] \geq \Omega\biggl(\frac{m\delta}{n^3d}\sum_{i=1}^n\norm{\v_i}_2^2\biggr)
    \end{align*}
    Therefore we have proved the case of fixed vectors \((\v_i)_{i \in [n]}\). Applying \(\varepsilon\)-net argument, we know that for \(m \geq \Omega(n^3d\delta^{-2})\), the probability bound \(1 - e^{-\Omega(m\delta^2/n^2)}\) still holds. This concludes the proof.
\end{proof}

\subsection{Gradient Bounds at Initialization}\label{subsec:gradient-bound-init}

We first derive the gradient bounds for updating both \(\W\) and \(\theta\) at their random initializations, the result is summarized in the following lemma.
\begin{lemma}[Gradient Bounds at Initialization]\label{lem:gradient-bounds-init}
    With probability at least \(1 - 2e^{-\Omega(m\delta^2/n^2)} \), the following holds
    \begin{itemize}
        \item For \(\norm{\nabla_\W L_S(\W^{(0)},\theta^{(0)})}_F\), we have 
        \begin{align*}
            \Omega\biggl(\frac{m\delta}{n^3d}\biggr)\sum_{i=1}^n\|\losstilde_i\|_2^2 \leq \| \nabla_{\W}L_S(\W^{(0)},\theta^{(0)})\|_F^2 \leq O\biggl(\frac{Lm}{nd}\biggr)\sum_{i=1}^n\|\losstilde_i\|_2^2
        \end{align*}
        \item For \(\norm{\nabla_{\theta}L_S(\W^{(0)},\theta^{(0)}}_F\), we have
        \begin{align*}
            \Omega\biggl(\frac{m\delta}{n^3d}\biggr)\sum_{i=1}^n\|\losshat_i\|_2^2 \leq \| \nabla_{\theta}L_S(\W^{(0)},\theta^{(0)})\|_F^2 \leq O\biggl(\frac{Lm}{nd}\biggr)\sum_{i=1}^n\|\losshat_i\|_2^2
        \end{align*}
    \end{itemize}
\end{lemma}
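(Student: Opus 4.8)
The plan is to reduce everything to four essentially parallel computations (upper/lower bound $\times$ query/key encoder) using the explicit gradient formulas from Subsection~\ref{subsec:key-calculations}. Since $\|\nabla_\W L_S\|_F^2=\sum_{l=0}^L\|\nabla_{\W_l}L_S\|_F^2$ is a sum of nonnegative terms, a single well-chosen layer already furnishes a lower bound, while the upper bound must control every layer.

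For the lower bounds I would single out the $(L-1)$-th layer, since there the backward-propagation factor at initialization collapses to the lone random matrix $\W_L^{(0)}$: differentiating directly gives
\begin{displaymath}
    \nabla_{[\W_{L-1}]_r}L_S(\W^{(0)},\theta^{(0)})=\frac1n\sum_{i=1}^n \langle [\W^{(0)}_L]^r,\losstilde_i\rangle\,\sigma'(\langle[\W^{(0)}_{L-1}]_r,\h^q_{i,L-2}\rangle)\,\h^q_{i,L-2},
\end{displaymath}
which is exactly the quantity handled by Lemma~\ref{lem:gradient-lower-bound}. Before invoking it I must check its two hypotheses on the vectors $\h^q_{i,L-2}$: that $\|\h^q_{i,L-2}\|_2\in[1/2,2]$, which follows from Lemma~\ref{Lemma-7.1}, and that their normalizations are pairwise $\delta/2$-separated, which is Lemma~\ref{lem:separateness} (applicable since $\delta\le O(1/L)$). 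Applying Lemma~\ref{lem:gradient-lower-bound} with $\v_i=\losstilde_i$ then yields $\|\nabla_{\W_{L-1}}L_S\|_F^2\ge\Omega(m\delta/(n^3d))\sum_i\|\losstilde_i\|_2^2$, and hence the same bound for $\|\nabla_\W L_S\|_F^2$. The key-encoder lower bound is obtained by the identical route after the rearrangement \eqref{eq:gradient-for-theta}, which puts $\nabla_{[\theta_{L-1}]_r}L_S$ in the same form with coefficient $\langle[\theta_L^{(0)}]^r,\losshat_i\rangle$ and vectors $\h^k_{i,L-2}$; one then applies Lemma~\ref{lem:gradient-lower-bound} with $\v_i=\losshat_i$.

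For the upper bounds I would proceed layer by layer on the rank-one decomposition $\nabla_{\W_l}L_S=\frac1n\sum_i\D^q_{i,l}\bigl((\back^q_{i,l+1})^\top\losstilde_i\bigr)(\h^q_{i,l-1})^\top$. The inner factor obeys $\|(\back^q_{i,l+1})^\top\losstilde_i\|_2\le O(\sqrt{m/d})\|\losstilde_i\|_2$ by the backward-propagation estimate Lemma~\ref{lem:backward-propagate}(b) at initialization (the sign matrices $\D^q_{i,l}$ have operator norm $\le1$), and the outer factor obeys $\|\h^q_{i,l-1}\|_2\le O(1)$ by Lemma~\ref{Lemma-7.1} (with $\h^q_{i,-1}=\x_i$, $\|\x_i\|_2=1$ by Assumption~\ref{assumption:normalized-data}, for $l=0$). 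Thus each summand has Frobenius norm $\le O(\sqrt{m/d})\|\losstilde_i\|_2$, and the triangle inequality followed by Cauchy--Schwarz gives $\|\nabla_{\W_l}L_S\|_F\le O(\sqrt{m/(nd)})(\sum_i\|\losstilde_i\|_2^2)^{1/2}$; squaring and summing over the $L+1$ layers produces the claimed $O(Lm/(nd))$ bound. The same computation with $\back^k_{i,l}$ and $\losshat_i$ handles $\theta$.

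To finish I would union-bound the failure events: the two applications of Lemma~\ref{lem:gradient-lower-bound} each cost $e^{-\Omega(m\delta^2/n^2)}$, while the auxiliary events (Lemmas~\ref{Lemma-7.1}, \ref{lem:separateness}, \ref{lem:backward-propagate}) fail with probability of strictly lower order under the standing over-parameterization and $\delta\le O(1/L)$ assumptions, giving the stated $1-2e^{-\Omega(m\delta^2/n^2)}$. I expect the only genuinely fiddly point to be confirming that, after the rearrangement in \eqref{eq:gradient-for-theta}, the back-propagation factor of $\nabla_{[\theta_{L-1}]_r}L_S$ really is presented in the single-matrix form that Lemma~\ref{lem:gradient-lower-bound} requires --- the asymmetric definition $\losshat_i=\sum_{j\ne i}(\losshat(\x_j,\x_i)-\losshat(\x_i,\x_j))$ makes this bookkeeping less transparent than on the query side; everything else is a direct application of the auxiliary lemmas.
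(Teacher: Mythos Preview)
Your proposal is correct and follows essentially the same route as the paper: upper bounds via the triangle inequality on the rank-one decomposition together with Lemma~\ref{Lemma-7.1} and Lemma~\ref{lem:backward-propagate}(b), and lower bounds by isolating layer $L-1$ and invoking Lemma~\ref{lem:gradient-lower-bound} with $\v_i=\losstilde_i$ (resp.\ $\v_i=\losshat_i$ after the rearrangement~\eqref{eq:gradient-for-theta}). Your explicit verification of the norm and separateness hypotheses of Lemma~\ref{lem:gradient-lower-bound} via Lemmas~\ref{Lemma-7.1} and~\ref{lem:separateness} is a bit more careful than the paper's presentation, but the argument is otherwise identical.
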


\begin{proof}
    In the proof below, we drop all the superscripts appeared in \(\W^{(0)}\) and \(\theta^{(0)}\) for simplicity.

    \noindent\textbf{1. Gradient Upper Bound for updating the query encoder \(f^q_{\W}(\x_i)\):} For each \(i \in [n]\) and \(l \in [L]\), we calculate
    \begin{align*}
        \Bigl\| \nabla_{\W_l}L_S(\W,\theta)\Bigr\|_F &= \biggl\| \frac{1}{n}\sum_{i=1}^n \D_{i,l}\cdot\losstilde_i^{\top}(\back_{i,l}^q)\cdot\h_{i,l-1}^{\top}\biggr\|_F\\
        & \leq \frac{1}{n}\sum_{i=1}^n \norm{\D_{i,l}}_2\cdot \bigl\|\losstilde_i^{\top}(\back_{i,l}^q)\bigr\|_2\cdot \norm{\h_{i,l-1}}_2\\
        & \stackrel{\text{\ding{172}}}\leq O(\sqrt{m/d})\cdot\frac{1}{n}\sum_{i=1}^n \|\losstilde_i\|_2 \leq O(\sqrt{m/nd})\cdot \biggl(\sum_{i=1}^n \norm{\losstilde_i}_2^2\biggr)^{1/2}
    \end{align*}
    where the inequality \ding{172} has employed Lemma \ref{Lemma-7.1} and Lemma \ref{lem:backward-propagate} with probability at least \(1 - \exp(-\Omega(m/L))\). Taking squares and summing over \(l \in [L]\) give the desired result.

    \noindent\textbf{2. Gradient Lower Bound for updating the query encoder \(f^q_{\W}(\x_i)\):} Applying Lemma \ref{lem:gradient-lower-bound}, we have, with probability at least \(1- \exp(-\Omega(m\delta^2/n^2))\), the following lower bound holds:
    \begin{align*}
        \norm{\nabla_{\W}L_S(\W,\theta)}_F^2 &\geq \norm{\nabla_{\W_{L-1}}L_S(\W,\theta)}_F^2 \\
        &= \sum_{r=1}^m \biggl\| \frac{1}{n}\sum_{i=1}^n \vbrack{[\W_L]_r,\losstilde_i}\sigma'(\vbrack{[\W_{L-1}]_{r},\h_{i,L-2}})\h_{i,L-2}\biggr\|_2^2\\
        &\geq \Omega\biggl(\frac{m\delta}{n^3d}\biggr)\sum_{i=1}^n \norm{\losstilde_i}_2^2
    \end{align*}

    \noindent\textbf{3. Gradient Upper Bound for updating the key encoder \(f^k_{\theta}(\x_i)\):} From previous calculations \eqref{eq:gradient-for-theta}, we have
    \begin{align*}
        \norm{\nabla_{\theta_l} L_S(\W,\theta)}_F &= \biggl\| \frac{1}{n}\sum_{i=1}^n  \D_{i,l}^k\Bigl((\back_{i,l+1}^k)^{\top} \losshat_i\Bigr)\h_{i,l-1}^{\top}\biggr\|_F
    \end{align*}
    Therefore
    \begin{align*}
        \norm{\nabla_{\theta_l} L_S(\W,\theta)}_F &\leq \frac{1}{n}\sum_{i=1}^n \norm{\D_{i,l}}_2\cdot \norm{\losshat_i^{\top}\back_{i,l+1}^k}_2\cdot \norm{\h_{i,l-1}}_2\\
        & \stackrel{\text{\ding{172}}}\leq O(\sqrt{m/d})\cdot \frac{1}{n}\sum_{i=1}^n \norm{\losshat_i}_2 \leq O(\sqrt{m/nd})\biggl(\sum_{i=1}^n\norm{\losshat_i}_2^2\biggr)^{1/2}
    \end{align*}
    where in \ding{172} we have used Lemma \ref{Lemma-7.1}, Lemma \ref{lem:backward-propagate} again, with probability at least \(1 - \exp(-\Omega(m/L))\). Summing over \(l \in {L}\) gives the desired result.

    \noindent\textbf{4. Gradient Lower Bound for updating the key encoder \(f^k_{\theta}(\x_i)\):} From \eqref{eq:gradient-for-theta}, we can rewrite the Frobenius norm of the gradient \(\nabla_{\theta_L}L_S(\W,\theta_L)\) to the following form:
    \begin{displaymath}
        \norm{\nabla_{\theta_{L-1}} L_S(\W,\theta)}_F^2 = \sum_{r=1}^{m}\biggl\| \frac{1}{n}\sum_{i=1}^n  \vbrack{[\theta_L]_r, \losshat_i}\cdot \sigma'(\vbrack{[\theta_{L-1}]_r,\h_{i,L-2}})\cdot\h_{i,L-2}\biggr\|_2^2
    \end{displaymath}
    Applying Lemma \ref{lem:gradient-lower-bound}, we have, with probability at least \(1- \exp(-\Omega(m\delta^2/n^2))\), the following lower bound holds:
    \begin{displaymath}
        \norm{\nabla_{\theta} L_S(\W,\theta)}_F^2 \geq \norm{\nabla_{\theta_{L-1}} L_S(\W,\theta)}_F^2 \geq \Omega\biggl(\frac{m\delta}{n^3d}\biggr)\sum_{i=1}^n \norm{\losshat_i}_2^2
    \end{displaymath}
    Thus all the claims are proven.
\end{proof}

\subsection{Gradient Bounds After Pertubations}

Since we require the trajectory of the updated parameters \(\W^{(t)}\) and \(\theta^{(t)}\) to stay within certain neighborhoods \(B(\W^{(0)},\omega)\) and \(B(\theta^{(0)},\tau)\) of the random initilization, we need to prove that the gradient bounds remain valid in the neighborhood, which concludes of proof of Lemma \ref{lem:gradient-bounds}

\begin{proof}[Proof of Lemma \ref{lem:gradient-bounds}]
    Denote \(\D^{(0)}_{i,l}:= \diag\Bigl(\mathds{1}\{[\W^{(0)}]_r^{\top} \h^{(0)}_{i,l-1} \geq 0 \}_{r=1}^m\Bigr)\) and \(\h_{i,l}^{(0)} = \D^{(0)}_{i,l}\W^{(0)}_l \h_{i,l-1}^{(0)}\) to be the activated relus and the hidden-states of \(l\)-th layer for input \(\x_i\) at initialization, with \(\D_{i,l}, \h_{i,l}\) their perturbed counterparts. Also,for simplicity we define 
    \begin{displaymath}
        \back_{i,l}^{q,(0)} = \W^{(0)}_L \D^{(0)}_{i,L-1}\W^{(0)}_{L-1} \cdots \D^{(0)}_{i,l}\W^{(0)}_l 
    \end{displaymath}
    and \(\v_i = \losstilde_i\). The case of \(\W_L\) is trivial, for \(l \leq L-1\), we can calculate
    \begin{align*}
        &\nabla_{\W_l} L_S(\W^{(0)},\theta) - \nabla_{\W_l} L_S(\W,\theta)\\
        = \ &\frac{1}{n}\sum_{i=1}^n \biggl(  \v_i^{\top}(\back_{i,l+1}^{q,(0)}\D^{(0)}_{i,l})\cdot (\h_{i,l-1}^{(0)})^{\top} - \v_i^{\top}(\back_{i,l+1}^{q}\D_{i,l}) \cdot (\h_{i,l-1})^{\top}\biggr)
    \end{align*}
    From Lemma \ref{lem:backward-perturbation}, we have 
    \begin{displaymath}
        \bigl\|\v_i^{\top}\back_{i,l+1}^{q,(0)}\D^{(0)}_{i,l}-\v_i^{\top}\back_{i,l+1}^{q}\D_{i,l}\bigr\|_2 \leq O(\omega^{1/3}L^2\sqrt{m\log m/d})\cdot\norm{\v_i}_2
    \end{displaymath}
    From Lemma \ref{lem:backward-propagate}, we have 
    \begin{displaymath}
        \bigl\|\v_i^{\top}\back_{i,l+1}^{q,(0)}\D^{(0)}_{i,l}\bigr\|_2 \leq O(\sqrt{m/d})\cdot\norm{\v_i}_2
    \end{displaymath}
    By Lemma \ref{Lemma-7.1} and Lemma \ref{lem:output-perturbation}, we have, for all \(i \in [n]\)
    \begin{displaymath}
        \norm{\h^{(0)}_{i,l-1}}_2 \leq O(1)\text{  and  } \norm{\h_{i,l-1} - \h_{i,l-1}^{(0)}}_2 \leq O(\omega L^{3/2}) \implies \norm{\h_{i,l-1}}_2 \leq O(1)
    \end{displaymath}
    Putting together we arrive at 
    \begin{align*}
        \Bigl\| \nabla_{\W_l} L_S(\W^{(0)},\theta) - \nabla_{\W_l} L_S(\W,\theta) \Bigr\|_F^2 &\leq  \frac{1}{n}\sum_{i=1}^n\biggl\|  \v_i^{\top}(\back_{i,l+1}^{q,(0)}\D^{(0)}_{i,l})\cdot (\h_{i,l-1}^{(0)} - \h_{i,l-1})^{\top}\biggr\|_F^2\\
        & \quad + \frac{1}{n}\sum_{i=1}^n \biggl\| \v_i^{\top}\Bigl(\back_{i,l+1}^{q,(0)}\D^{(0)}_{i,l} - \back_{i,l+1}^{q}\D_{i,l}\Bigr)\cdot \h_{i,l-1}^{\top} \biggr\|_F^2\\
        & \leq O\biggl( \frac{\omega^{2/3}L^4m\log m}{nd} \biggr)\cdot \sum_{i=1}^n\norm{\v_i}_2^2\\
        & \stackrel{\text{\ding{172}}}\leq O\biggl(\frac{m\delta}{n^3d}\biggr)\sum_{i=1}^n\norm{\v_i}_2^2
    \end{align*}
    where \ding{172} is from our choice of \(\omega\). By  summing over \(l \in [L]\), we arrive at the desired results. Note that any change of \(\theta\) only affect the vector \((\v_i)_{i \in [n]}\), thus our analysis is still valid. The case of \(\|\nabla_{\theta}L_S(\W,\theta)\|_F\) can be similarly proved. 
\end{proof}

\section{The Semi-smoothness Property}

In this section, we prove Lemma \ref{lem:semi-smoothness}. Firstly, we present the following two lemmas and their proofs.

\subsection{Technical Lemmas}

\begin{lemma}\label{lem:L-smoothness-1}
    Let \( y = (y_{i})_{i \in [k]}\). The function \(g(y) = \log(1+\sum_{i=1}^k \exp(y_i)) \) is \(1\)-Lipschitz smooth with respect to \((y_{i})_{i \in [k]}\) and satisfies 
    \begin{displaymath}
        g(y+y') \leq g(y) + \nabla_y g(y)^{\top}y' + \frac{1}{2}\norm{y'}_2^2
    \end{displaymath}
\end{lemma}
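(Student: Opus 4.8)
The plan is to compute the Hessian of $g$ in closed form, show it is positive semidefinite with spectral norm at most $1$, and then deduce the stated quadratic bound from a second-order Taylor expansion with integral remainder.

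First, write $p_i = p_i(y) := \exp(y_i)/(1 + \sum_{j=1}^k \exp(y_j))$ for $i \in [k]$, and set $p = (p_1,\dots,p_k)^\top$. A direct differentiation gives $\nabla_y g(y) = p$ and
\[
  \nabla_y^2 g(y) = \diag(p) - p p^\top .
\]
Observe that $p_i \ge 0$ and $\sum_{i=1}^k p_i = (\sum_{j=1}^k \exp(y_j))/(1 + \sum_{j=1}^k \exp(y_j)) \le 1$; the extra ``$1$'' in the denominator --- which in the contrastive loss comes from the positive-key term --- is exactly what keeps the total mass at most one.

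Second, I would bound the Hessian: for any $v \in \R^k$,
\[
  v^\top \bigl(\diag(p) - p p^\top\bigr) v = \sum_{i=1}^k p_i v_i^2 - \Bigl(\sum_{i=1}^k p_i v_i\Bigr)^2 .
\]
The right-hand side is nonnegative by Cauchy--Schwarz (equivalently Jensen's inequality for $t \mapsto t^2$ with the weights $p_i$ and leftover mass $1 - \sum_i p_i$ placed at $0$), so $\nabla_y^2 g \succeq 0$; and it is at most $\sum_{i=1}^k p_i v_i^2 \le \bigl(\sum_{i=1}^k p_i\bigr)\max_i v_i^2 \le \norm{v}_2^2$, so $\nabla_y^2 g \preceq \id_k$. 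Hence every eigenvalue of $\nabla_y^2 g(y)$ lies in $[0,1]$, which is precisely the statement that $g$ is $1$-Lipschitz smooth.

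Third, since $g$ is $C^\infty$ (the argument of the logarithm is always positive), Taylor's theorem with integral remainder along the segment $t \mapsto y + t y'$, $t \in [0,1]$, yields
\[
  g(y+y') = g(y) + \nabla_y g(y)^\top y' + \int_0^1 (1-t)\,(y')^\top \nabla_y^2 g(y + t y')\, y'\, dt \le g(y) + \nabla_y g(y)^\top y' + \tfrac12 \norm{y'}_2^2 ,
\]
where the inequality uses the Hessian upper bound from the previous step together with $\int_0^1 (1-t)\,dt = \tfrac12$. This is the claimed inequality. There is essentially no real obstacle here: the only content is the spectral-norm bound on $\diag(p) - p p^\top$, which is routine; the one point worth flagging is that in this setting one only has $\sum_i p_i \le 1$ rather than $=1$, but since we merely need $1$-smoothness (not the sharper $\tfrac12$ available for a genuine probability vector) this is harmless, and the weaker constant is all that Lemma \ref{lem:semi-smoothness} uses downstream.
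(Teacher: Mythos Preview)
Your proof is correct and follows essentially the same approach as the paper: compute the Hessian of $g$, show it is positive semidefinite with spectral norm at most $1$, and conclude via Taylor's theorem. The only cosmetic difference is that the paper bounds the spectral norm via the trace (for a PSD matrix, $\|\nabla^2 g\|_2 \le \tr(\nabla^2 g) = \sum_i p_i(1-p_i) \le 1$), whereas you bound the quadratic form $v^\top(\diag(p)-pp^\top)v$ directly.
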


\begin{proof}
    Trivially this function (cross-entropy loss) is convex with respect to \(y = (y_{i})_{i=1}^k\), which means the Hessian \(\nabla^2g(y)\) is positive-semidefinite. And we can calculate 
    \begin{displaymath}
        (\nabla^2g(y))_{i,i} = \frac{\exp(y_{i})}{(1+\sum_{s=1}^k\exp(y_{s}))^2}\Bigl(1 + \sum_{j \neq i}\exp(y_{j}) \Bigr)
    \end{displaymath}
    Summing over \(i \in [k]\), we have \(\sum_{i=1}^k (\nabla^2g(y))_{i,i} \leq 1\). And since \(g(y)\) is convex, the eigenvalues \((\lambda_i)_{i \in [k]}\) of \(\nabla^2g(y)\) satisfies \(\lambda_i \geq 0\) and \(\norm{\nabla^2g(y)}_2 \leq \sum_{i=1}^k\lambda_i = \sum_{i=1}^k (\nabla^2g(y))_{i,i} \leq 1\). Note that the bound for \(\norm{\nabla^2g(y)}_2\) is valid for all \(y = (y_{i}) \in \R^k\), which proves the claim by doing simple Taylor expansion.
\end{proof}

\begin{lemma}\label{lem:L-smoothness-2}
    For \(\W, \widetilde{\W} \in B(\W^{(0)},\omega)\) and \(\theta,\widetilde{\theta} \in B(\theta^{(0)},\tau)\), where 
    \begin{displaymath}
        \omega, \tau\in [\Omega(\sqrt{d/m}),\ O(1/ (L^{9/2} (\log m)^{3/2}) )]
    \end{displaymath}
    we have, with probability at least \(1-e^{-\Omega(m\omega^{3/2}L)}-e^{-\Omega(m\tau^{3/2}L)}\) over the initialization,
    \begin{align*}
        L_S(\widetilde{\W},\widetilde{\theta}) - L_S(\W,\theta) &\leq \frac{1}{n}\sum_{i=1}^n \vbrack{\losstilde_i, \qt_i - q_i} + \vbrack{\losshat_i, \kt_i - k_i} \\
        & \quad + O\biggl(\frac{kL^2m^2}{d^2}\biggr)\Bigl( \tau^2\norm{\widetilde{\W} - \W}_2^2 + \omega^2\norm{\widetilde{\theta} - \theta}_2^2 \Bigr)
    \end{align*}
\end{lemma}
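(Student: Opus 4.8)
The plan is to mirror the outline in Section~\ref{sec:proof-techniques}: reduce the per-example loss to the scalar convex function $g$ of Lemma~\ref{lem:L-smoothness-1}, separate a first-order term that reassembles into the two $\loss$-vector inner products, and absorb everything else into the quadratic error via the output-perturbation estimates.

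\textbf{Step 1 (reduction to $g$).} Fix $i\in[n]$ and a choice of negatives $\{\x_{i,j}\}_{j=1}^k\subset S^{\setminus i}$. Writing $q_i=f^q_\W(\x_i)$, $z_{i,j}=f^k_\theta(\x_{i,j})-f^k_\theta(\x_i)$, and $\qt_i,\zt_{i,j}$ for the same objects under $(\widetilde\W,\widetilde\theta)$, we have $\ell(f^q_\W,f^k_\theta,\x_i,\{\x_{i,j}\})=g\bigl((q_i^\top z_{i,j})_{j\in[k]}\bigr)$, so Lemma~\ref{lem:L-smoothness-1} gives
\[
\ell(f^q_{\widetilde\W},f^k_{\widetilde\theta},\x_i,\{\x_{i,j}\})-\ell(f^q_\W,f^k_\theta,\x_i,\{\x_{i,j}\})\ \le\ \sum_{j=1}^k w_{i,j}\bigl(\qt_i^\top\zt_{i,j}-q_i^\top z_{i,j}\bigr)+\tfrac12\sum_{j=1}^k\bigl(\qt_i^\top\zt_{i,j}-q_i^\top z_{i,j}\bigr)^2,
\]
where $w_{i,j}=\exp(q_i^\top z_{i,j})/\bigl(1+\sum_s\exp(q_i^\top z_{i,s})\bigr)\in[0,1]$ and $\sum_{j}w_{i,j}\le1$. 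I then average over $i$ and apply $\E^{Neg(i)}$.

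\textbf{Step 2 (first-order part $\Rightarrow$ $\loss$-vectors).} In the first-order term, split $\qt_i^\top\zt_{i,j}-q_i^\top z_{i,j}=(\qt_i-q_i)^\top z_{i,j}+q_i^\top(\zt_{i,j}-z_{i,j})+(\qt_i-q_i)^\top(\zt_{i,j}-z_{i,j})$. For the first summand, $\frac1n\sum_i\E^{Neg(i)}\bigl[\sum_jw_{i,j}(\qt_i-q_i)^\top z_{i,j}\bigr]=\frac1n\sum_i\vbrack{\losstilde_i,\qt_i-q_i}$ directly from the formula for $\losstilde_i$ in Fact~\ref{fact:loss-vec}. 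For the second summand, write $\zt_{i,j}-z_{i,j}=(\kt_{i,j}-k_{i,j})-(\kt_i-k_i)$ with $k_{i,j}=f^k_\theta(\x_{i,j})$, etc., expand $\E^{Neg(i)}$ as the uniform average over $k$-subsets of $S^{\setminus i}$, and regroup the resulting double sum exactly as in the derivation of \eqref{eq:gradient-for-theta}: the coefficient multiplying $\kt_j-k_j$ collapses (up to the factor $1/n$) to $\losshat_j$ of Fact~\ref{fact:losshat-vec}, so this summand contributes $\frac1n\sum_i\vbrack{\losshat_i,\kt_i-k_i}$. These are the two inner-product terms of the statement.

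\textbf{Step 3 (quadratic error).} What remains is the third summand of the split inside the first-order term, namely $\frac1n\sum_i\E^{Neg(i)}\bigl[\sum_jw_{i,j}(\qt_i-q_i)^\top(\zt_{i,j}-z_{i,j})\bigr]$, together with the full quadratic term $\frac1{2n}\sum_i\E^{Neg(i)}\bigl[\sum_j(\qt_i^\top\zt_{i,j}-q_i^\top z_{i,j})^2\bigr]$. By Lemma~\ref{lem:output-perturbation}, $\norm{\qt_i-q_i}_2\le O(L\sqrt{m/d})\norm{\widetilde\W-\W}_2$ and $\norm{\zt_{i,j}-z_{i,j}}_2\le O(L\sqrt{m/d})\norm{\widetilde\theta-\theta}_2$, while Corollary~\ref{lem:output-boundedness} and Lemma~\ref{Lemma-7.1} bound $\norm{q_i}_2,\norm{z_{i,j}}_2,\norm{\zt_{i,j}}_2$. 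For the quadratic term I use the two-term split $\qt_i^\top\zt_{i,j}-q_i^\top z_{i,j}=(\qt_i-q_i)^\top\zt_{i,j}+q_i^\top(\zt_{i,j}-z_{i,j})$ (both factors $\zt_{i,j},q_i$ bounded), Cauchy--Schwarz, and the bound $\sum_jw_{i,j}\le1$ on the cross term; this makes each contribution at most $O(kL^2m/d)$ times one of $\norm{\widetilde\W-\W}_2\norm{\widetilde\theta-\theta}_2$, $\norm{\widetilde\W-\W}_2^2$, $\norm{\widetilde\theta-\theta}_2^2$. Finally $\omega,\tau\ge\Omega(\sqrt{d/m})$ gives $m/d\le O(m^2/d^2)\tau^2$, $m/d\le O(m^2/d^2)\omega^2$, and $\tau\omega\ge\Omega(d/m)$, so AM--GM turns each such contribution into $O(kL^2m^2/d^2)\bigl(\tau^2\norm{\widetilde\W-\W}_2^2+\omega^2\norm{\widetilde\theta-\theta}_2^2\bigr)$. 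Collecting Steps~2--3 and taking a union bound over the failure events of the invoked lemmas proves the claim.

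\textbf{Main obstacle.} The bookkeeping in Step~3 is the crux: one must choose the decompositions so that every factor carrying a power of $m/d$ is paired with a genuinely small object ($\qt_i-q_i$ or $\zt_{i,j}-z_{i,j}$, each of order $L\sqrt{m/d}$ times a trajectory radius) rather than with a merely $O(1)$ object, and then verify that replacing a stray $m/d$ by $(m/d)^2\tau^2$ (respectively $(m/d)^2\omega^2$) is licensed precisely by the lower bound $\omega,\tau\gtrsim\sqrt{d/m}$ — this is the sole role of that lower bound — while the expectation over negative-sample sets and the sum over the $k$ negatives contribute only the single advertised factor $k$. The regrouping in Step~2 is also delicate, but it is identical to the $\nabla_\theta L_S$ computation already performed in Section~\ref{subsec:key-calculations}.
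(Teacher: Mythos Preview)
Your proposal is correct and follows essentially the same route as the paper's own proof: apply Lemma~\ref{lem:L-smoothness-1}, use the three-term split of $\qt_i^\top\zt_{i,j}-q_i^\top z_{i,j}$ in the first-order part to recover the $\losstilde$ and $\losshat$ inner products via Facts~\ref{fact:loss-vec}--\ref{fact:losshat-vec}, and then absorb the cross term and the quadratic remainder into $O(kL^2m^2/d^2)\bigl(\tau^2\norm{\widetilde\W-\W}_2^2+\omega^2\norm{\widetilde\theta-\theta}_2^2\bigr)$ using Lemma~\ref{lem:output-perturbation} and Corollary~\ref{lem:output-boundedness}. Your two-term split for $\Phi_2$ and your explicit invocation of $\omega,\tau\ge\Omega(\sqrt{d/m})$ to trade $m/d$ for $(m/d)^2\tau^2$ (resp.\ $(m/d)^2\omega^2$) are cosmetic rearrangements of the paper's three-term split together with its use of $\|q_i\|_2,\|z_{i,j}\|_2\le O(\omega L\sqrt{m/d}),O(\tau L\sqrt{m/d})$; one small caution is that in the stated range of $\omega,\tau$ the outputs are not $O(1)$ but $O(\omega L\sqrt{m/d})$ (resp.\ $O(\tau L\sqrt{m/d})$), so your intermediate ``$O(kL^2m/d)$'' should really carry the extra $\omega^2$ or $\tau^2$ factor already --- either bookkeeping leads to the same final bound.
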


\begin{proof}
    Recall from Definition \ref{def:loss-function} that our loss function is of the form:
    \begin{align*}
        L_S(\W,\theta) &= \frac{1}{n}\sum_{i=1}^n \E^{Neg(i)} [\ell(f^q_{\W},f^k_{\theta},\x_i,\{\x_{i,j}\}_{j=1}^k)]\\
        & = \frac{1}{n}\sum_{i=1}^n\frac{1}{\binom{n-1}{k}}\sum_{\{\x_{i,j}\}_{j=1}^k \subset  S^{\setminus i}} \log \biggl(1 + \sum_{j=1}^k \exp(q_i^{\top}z_{i,j}) \biggr)\\
        & = \frac{1}{n}\sum_{i=1}^n \sum_{j\neq i} \sum_{ \x_{j} \in \{\x_{i,j}\}_{j=1}^k \subset  S^{\setminus i}} \frac{1}{\binom{n-1}{k}}\log \biggl(1 + \sum_{j=1}^k \exp(q_i^{\top}z_{i,j}) \biggr) 
    \end{align*}
    where \(q_i := f^q_{\W}(\x_i)\), \(z_{i,j}:= f^k_{\theta}(\x_{i,j}) - f^k_{\theta}(\x_i) \) and \(z_{j} = f^k_{\theta}(\x_{j}) - f^k_{\theta}(\x_i) \). Now for a set of different parameters \(\widetilde{\W} \in B(\W^{(0)},\omega)\) and \(\widetilde{\theta} \in B(\theta^{(0)},\tau)\), we define new queries and keys as 
    \begin{displaymath}
        \tilde{q}_i :=  f^q_{\widetilde{\W}}(\x_i),\qquad \tilde{z}_{i,j}:= f^k_{\widetilde{\theta}}(\x_{i,j}) - f^k_{\widetilde{\theta}}(\x_i)
    \end{displaymath}
    Applying Lemma \ref{lem:L-smoothness-1}, we have 
    \begin{align*}
        \ell(f^q_{\widetilde{\W}},f^k_{\widetilde{\theta}},\x_i,\{\x_{i,j}\}_{j=1}^k) &=\log \biggl(1 + \sum_{j=1}^k \exp(\qt_i^{\top}\zt_{i,j}) \biggr) \\
        &\leq \log \biggl(1 + \sum_{j=1}^k \exp(q_i^{\top}z_{i,j}) \biggr) \\
        &\quad + \sum_{j = 1}^k \frac{\exp(q_i^{\top}z_{i,j} )}{1 + \sum_{s=1}^k \exp(q_i^{\top}z_{i,s})}\cdot \Bigl( \qt_i^{\top}\zt_{i,j} - q_i^{\top}z_{i,j} \Bigr) \\
        &\quad + \frac{1}{2} \sum_{j=1}^k(\qt_i^{\top}\zt_{i,j} - q_i^{\top}z_{i,j})^2\\
        & = \ell(f^q_{\W},f^k_{\theta},\x_i,\{\x_{i,j}\}_{j=1}^k) + \Phi_1 + \Phi_2
    \end{align*}
    We now decompose \(\qt_i^{\top}\zt_{i,j} - q_i^{\top}z_{i,j}\):
    \begin{equation}\label{eq:decomposition}
        \qt_i^{\top}\zt_{i,j} - q_i^{\top}z_{i,j} = (\qt_i - q_i)^{\top}z_{i,j} +  q_i^{\top}(\zt_{i,j} - z_{i,j})+ (\qt_i - q_i)^{\top}(\zt_{i,j} - z_{i,j})
    \end{equation}
    Therefore \(\Phi_1\) can be calculated as 
    \begin{align*}
        \Phi_1 &=  \sum_{j = 1}^k \frac{\exp(q_i^{\top}z_{i,j} )}{1 + \sum_{s=1}^k \exp(q_i^{\top}z_{i,s})}\cdot \Bigl( \qt_i^{\top}\zt_{i,j} - q_i^{\top}z_{i,j} \Bigr)\\
        & =  \sum_{j = 1}^k \frac{\exp(q_i^{\top}z_{i,j} )}{1 + \sum_{s=1}^k \exp(q_i^{\top}z_{i,s})}\cdot z_{i,j}^{\top} (\qt_i - q_i)\\
        &\quad +  \sum_{j = 1}^k \frac{\exp(q_i^{\top}z_{i,j} )}{1 + \sum_{s=1}^k \exp(q_i^{\top}z_{i,s})}\cdot  q_i^{\top}  (\zt_{i,j} - z_{i,j})\\
        & \quad + \sum_{j = 1}^k \frac{\exp(q_i^{\top}z_{i,j} )}{1 + \sum_{s=1}^k \exp(q_i^{\top}z_{i,s})}\cdot (\qt_i - q_i)^{\top}(\zt_{i,j} - z_{i,j})\\
        & = \Psi_1(\x_i,\{\x_{i,j}\}_{j=1}^k) + \Psi_2(\x_i,\{\x_{i,j}\}_{j=1}^k) + \Psi_3(\x_i,\{\x_{i,j}\}_{j=1}^k)
    \end{align*}
    For \(\Psi_1(\x_i,\{\x_{i,j}\}_{j=1}^k)\), its expectation with respect to negative sampling \(\{\x_{i,j}\}_{j=1}^k \subset S^{\setminus i}\) is 
    \begin{align*}
        \E^{Neg(i)}[\Psi_1(\x_i,\{\x_{i,j}\}_{j=1}^k)] & = \E^{Neg(i)} \biggl[\frac{\exp(q_i^{\top}z_{i,j} )}{1 + \sum_{s=1}^k \exp(q_i^{\top}z_{i,s})}\cdot z_{i,j}\biggr]^{\top} (\qt_i - q_i)\\
        & \stackrel{\text{\ding{172}}}= \langle \losstilde_i, (\qt_i - q_i)\rangle
    \end{align*}
    where \ding{172} is from Definition \ref{def:loss-vector}, which implies 
    \begin{equation}\label{eq:Phi-1}
        \frac{1}{n}\sum_{i=1}^n \E^{Neg(i)}[\Psi_1(\x_i,\{\x_{i,j}\}_{j=1}^k)] = \frac{1}{n} \sum_{i=1}^n \vbrack{\losstilde_i , \qt_i - q_i}
    \end{equation}
    Now for \(\Psi_2(\x_i,\{\x_{i,j}\}_{j=1}^k)\), we calculate
    \begin{align}\label{eq:Psi-2-expectation}
        & \frac{1}{n} \sum_{i=1}^n\E^{Neg(i)}[\Psi_2(\x_i,\{\x_{i,j}\}_{j=1}^k)] \nonumber\\
        = \ & \frac{1}{n}\sum_{i=1}^n \frac{1}{\binom{n-1}{k}}\sum_{  \{\x_{i,j}\}_{j=1}^k \subset  S^{\setminus i}} \sum_{j = 1}^k \frac{\exp(q_i^{\top}z_{i,j} )}{1 + \sum_{s=1}^k \exp(q_i^{\top}z_{i,s})}\cdot  q_i^{\top}  (\zt_{i,j} - z_{i,j}) \nonumber\\
        = \ & \frac{1}{n}\sum_{i=1}^n \frac{1}{\binom{n-1}{k}}\sum_{ \{\x_{i,j}\}_{j=1}^k \subset  S^{\setminus i}} \sum_{j = 1}^k \frac{\exp(q_i^{\top}z_{i,j} )}{1 + \sum_{s=1}^k \exp(q_i^{\top}z_{i,s})}\cdot  q_i^{\top}  (\zt_{i,j} - z_{i,j})
    \end{align}
    Now set notations
    \begin{displaymath}
        k_{i,j} = f^k_{\theta}(\x_{i,j}),\quad \tilde{k}_{i,j} = f^k_{\widetilde{\theta}}(\x_{i,j}),\quad k_{i} = f^k_{\theta}(\x_{i}),\quad \tilde{k}_{i} = f^k_{\widetilde{\theta}}(\x_{i})
    \end{displaymath}
    then we can rearrange \eqref{eq:Psi-2-expectation} to 
    \begin{align*}
        &\frac{1}{n} \sum_{i=1}^n\E^{Neg(i)}[\Psi_2(\x_i,\{\x_{i,j}\}_{j=1}^k)] \\
        = \ & \frac{1}{n}\sum_{i=1}^n \sum_{j \neq i} \frac{1}{\binom{n-1}{k}}\sum_{ \x_{j} \in \{\x_{i,j}\}_{j=1}^k \subset  S^{\setminus i}} \frac{\exp(q_i^{\top}(k_j - k_i) )}{1 + \sum_{s=1}^k \exp(q_i^{\top}z_{i,s})}\cdot  q_i^{\top} \Bigl((\kt_j - k_j) - (\kt_i - k_i) \Bigr)\\
        \stackrel{\text{\ding{172}}}= \ & \frac{1}{n}\sum_{i=1}^n \sum_{j \neq i}\losshat(\x_i,\x_j)^{\top} \Bigl((\kt_j - k_j) - (\kt_i - k_i) \Bigr)\\
        = \ & \frac{1}{n} \sum_{i=1}^n \sum_{j\neq i} (\losshat(\x_j,\x_i) - \losshat(\x_i,\x_j))^{\top} (\kt_i - k_i)\\
        \stackrel{\text{\ding{173}}}= \ & \frac{1}{n}\sum_{i=1}^n \vbrack{\losshat_i, \kt_i - k_i}
    \end{align*}
    where \ding{172} and \ding{173} are both from Definition \ref{def:loss-vector}. For \(\Psi_3(\x_i,\{\x_{i,j}\}_{j=1}^k)\), we can use Cauchy-Schwarz inequality to get 
    \begin{align*}
        \Psi_3(\x_i,\{\x_{i,j}\}_{j=1}^k) &\leq \sum_{j = 1}^k \frac{\exp(q_i^{\top}z_{i,j} )}{1 + \sum_{s=1}^k \exp(q_i^{\top}z_{i,s})}\cdot \|\qt_i - q_i\|_2\cdot\|\zt_{i,j} - z_{i,j}\|_2\\
        & \stackrel{\text{\ding{172}}}\leq O\biggl(\frac{L^2m}{d}\biggr)\cdot \norm{\widetilde{\W} - \W}_2\cdot\norm{\widetilde{\theta} - \theta}_2\\
        & \leq O\biggl(\frac{L^2m}{d}\biggr)\Bigl( \norm{\widetilde{\W} - \W}_2^2 + \norm{\widetilde{\theta} - \theta}_2^2\Bigr)
    \end{align*}
    where in \ding{172} we have employed Lemma \ref{lem:output-perturbation}, which requires \(\norm{\widetilde{\W} - \W}_2 \leq \omega\) and \(\norm{\widetilde{\theta} - \theta}_2 \leq \tau\). This implies
    \begin{displaymath}
        \frac{1}{n}\sum_{i=1}^n\E^{Neg(i)}[\Psi_3(\x_i,\{\x_{i,j}\}_{j=1}^k)] \leq O\biggl(\frac{L^2m}{d}\biggr)\Bigl( \norm{\widetilde{\W} - \W}_2^2 + \norm{\widetilde{\theta} - \theta}_2^2\Bigr)
    \end{displaymath}
    Now we come to deal with \(\Phi_2\). From the decomposition \eqref{eq:decomposition} we have 
    \begin{align*}
        \Phi_2 &= \frac{1}{2} \sum_{j=1}^k(\qt_i^{\top}\zt_{i,j} - q_i^{\top}z_{i,j})^2 \\
        & \leq \frac{3}{2} \sum_{j=1}^k \Bigl( \norm{\qt_i - q_i}_2^2\cdot \norm{z_{i,j}}_2^2 + \norm{q_i}_2^2\cdot\norm{\zt_{i,j} - z_{i,j}}_2^2 + \norm{\qt_i - q_i}_2^2\cdot\norm{\zt_{i,j} - z_{i,j}}_2^2 \Bigr)\\
        &\stackrel{\text{\ding{172}}}\leq O\biggl(\frac{kL^2m^2}{d^2}\biggr)\cdot \Bigl( \tau^2\norm{\widetilde{\W} - \W}_2^2 + \omega^2\norm{\widetilde{\theta} - \theta}_2^2 + \tau\norm{\widetilde{\W} - \W}_2 \cdot \omega\norm{\widetilde{\theta} - \theta}_2^2 \Bigr)\\
        & \stackrel{\text{\ding{173}}}\leq  O\biggl(\frac{kL^2m^2}{d^2}\biggr)\cdot \Bigl( \tau^2\norm{\widetilde{\W} - \W}_2^2 + \omega^2\norm{\widetilde{\theta} - \theta}_2^2 \Bigr)
    \end{align*}
    where \ding{172} have employed Lemma \ref{lem:output-boundedness} to obtain \(\norm{q_i}_2^2, \norm{z_{i,j}}_2^2 \leq O(1)\) (smaller than \(\omega L\sqrt{m/d}\)) at initialization and Lemma \ref{lem:output-perturbation} to obtain 
    \begin{displaymath}
        \norm{\qt_i - q_i}_2^2 \leq O\biggl(\frac{L^2m}{d}\biggr)\cdot\norm{\widetilde{\W} - \W}_2^2,\qquad \norm{\zt_{i,j} - z_{i,j}}_2^2 \leq O\biggl(\frac{L^2m}{d}\biggr)\cdot\norm{\widetilde{\theta} - \theta}_2^2
    \end{displaymath}
    and \ding{173} is due to Cauchy-Schwarz inequality. Thus we can prove the claim by taking expectations with respect to negative sampling of \(\{\x_{i,j}\}_{j=1}^k \subset S^{\setminus i}\) and sum over \(i \in [n]\).
\end{proof}

\subsection{Proof of Lemma \ref{lem:semi-smoothness}}

\begin{proof}[Proof of Lemma \ref{lem:semi-smoothness}]
    Similar to the proofs of previous lemmas, we set notations as follows. For parameters \(\W',\theta'\) with \(\norm{\W'}_2 \leq \omega,\ \norm{\theta'}_2 \leq \tau\), we denote
    \begin{displaymath}
        \qt_i = f^q_{\W+\W'}(\x_i), \qquad q_i = f^q_{\W}(\x_i), \qquad \kt_i = f^k_{\theta+\theta'}(\x_i), \qquad k_i = f^k_{\theta}(\x_i).
    \end{displaymath}
    Applying Lemma \ref{lem:L-smoothness-2}, we can calculate 
    \begin{align*}
        &L_S(\W + \W',\theta + \theta') - L_S(\W,\theta) - \vbrack{\nabla_{\W}L_S(\W,\theta),\W'} - \vbrack{\nabla_{\theta}L_S(\W,\theta),\theta'} \\
        \leq \ &- \vbrack{\nabla_{\W}L_S(\W,\theta),\W'} - \vbrack{\nabla_{\theta}L_S(\W,\theta),\theta'} + \frac{1}{n}\sum_{i=1}^n \vbrack{\losstilde_i, \qt_i - q_i} + \vbrack{\losshat_i, \kt_i - k_i}  \\
        & + O\biggl(\frac{kL^2m^2}{d^2}\biggr)\Bigl( \tau^2\norm{\W'}_2^2 + \omega^2\norm{\theta'}_2^2 \Bigr)\\
        = \ & F_1 + F_2 + F_3
    \end{align*}
    where 
    \begin{align*}
        F_1 & = - \vbrack{\nabla_{\W}L_S(\W,\theta),\W'} + \frac{1}{n}\sum_{i=1}^n \vbrack{\losstilde_i, \qt_i - q_i}\\
        F_2 & = - \vbrack{\nabla_{\theta}L_S(\W,\theta),\theta'} + \frac{1}{n}\sum_{i=1}^n\vbrack{\losshat_i, \kt_i - k_i}\\
        F_3 & = O\biggl(\frac{kL^2m^2}{d^2}\biggr)\Bigl( \tau^2\norm{\W'}_2^2 + \omega^2\norm{\theta'}_2^2 \Bigr)
    \end{align*}
    The goal here is to obtain bounds for \(F_1\) and \(F_2\), so we divide our proof into two steps:

    \noindent \textbf{Step 1. The case of \(F_1\):}\\
    For \(F_1\) we have \(F_1 = \frac{1}{n}\sum_{i=1}^n F_1^i\), where \(F_1^i\) can be calculated as
    \begin{align*}
        F_1^i = \ & \losstilde_i^{\top} \Biggl(f^q_{\W + \W'}(\x_i) - f^q_{\W}(\x_i) - \sum_{l=0}^L \W_L\biggl(\prod_{a=l+1}^{L-1} \D_{i,a}\W_{a}\biggr)\D_{i,l}\W'_{l}\h_{i,l-1} \Biggr)\\
        = \ & \losstilde_i^{\top} \Biggl((\W_L+\W'_L)\h'_{i,L-1} - \W_L\h_{i,L-1} - \sum_{l=0}^L \W_L\biggl(\prod_{a=l+1}^{L-1} \D_{i,a}\W_{a}\biggr)\D_{i,l}\W'_{l}\h_{i,l-1} \Biggr)
    \end{align*}
    Now recall our notations:
    \begin{displaymath}
        \h'_{i,-1} = \x_i,\qquad \h'_{i,l} = \sigma((\W_l + \W'_l)\h'_{i,l-1})\quad \text{for } 0 \leq l \leq L-1
    \end{displaymath}
    By applying Lemma \ref{lem:output-perturbation}, for all \(i\in [n]\) and \(k \in [m]\), there exist diagonal matrices \(\D''_{i,l}\) such that \(|(\D_{i,l}+\D''_{i,l})_{k,k}| \leq 1\), and 
    \begin{align*}
        &(\W_L+\W'_L)\h'_{i,L-1} - \W_L\h_{i,L-1} \\
        = \ & \W'_L\h'_{i,L-1} + \sum_{l=0}^{L-1} \W_L\biggl(\prod_{a=l+1}^{L-1} (\D_{i,a} + \D''_{i,a})\W_{a}\biggr)(\D_{i,l}+\D''_{i,l})\W'_{l}\h'_{i,l-1}
    \end{align*}
    So we can further calculte 
    \begin{align}
        F_1^i &= \losstilde_i^{\top}   \biggl[ \W'_L\h'_{i,L-1} + \sum_{l=0}^{L-1}\W_L\biggl(\prod_{a=l+1}^{L-1} (\D_{i,a} + \D''_{i,a})\W_{a}\biggr)(\D_{i,l}+\D''_{i,l})\W'_{l}\h'_{i,l-1} \nonumber\\
        &\qquad \qquad \qquad - \W'_L\h_{i,L-1} +\sum_{l=0}^{L-1}\W_L\biggl(\prod_{a=l+1}^{L-1} \D_{i,a}\W_{a}\biggr)\D_{i,l}\W'_{l}\h_{i,l-1}\biggr] \nonumber\\
        & = \losstilde_i^\top \sum_{l=0}^{L-1}\biggl[ \W_L\prod_{a=l+1}^{L-1} \bigl((\D_{i,a} + \D''_{i,a})\W_{a}\bigr)(\D_{i,l}+\D''_{i,l}) - \W_L\prod_{a=l+1}^{L-1} \bigl(\D_{i,a}\W_{a}\bigr)\D_{i,l}\biggr]\W'_{l}\h'_{i,l-1}\nonumber \\
        & \qquad + \losstilde_i^{\top} \biggl[\W'_L + \sum_{l=0}^{L-1} \W_L\biggl(\prod_{a=l+1}^L \D_{i,a}\W_{a}\biggr)\D_{i,l}\W'_{l} \biggr](\h'_{i,l-1} -\h_{i,l-1})\nonumber \\
        & =  \sum_{l=0}^{L-1}Q_1^l + \sum_{l=0}^LQ_2^l\label{eq:semi-1-decom-2}
    \end{align}
    Therefore, we can bound the two terms \(Q_1^l\) and \(Q_2^l\) separately. For \(Q_1^l\), we apply Lemma \ref{lem:backward-perturbation} with \(s = O(m\omega^{2/3}L)\) (where the choice of \(s\) is from Lemma \ref{Lemma-8.2}(b)), the Cauchy-Schwarz theorem, and the boundedness of \(\h'_{i,l-1}\) with respect to perturbations to get:
    \begin{align}
        Q_1^l & = \losstilde_i^\top \biggl[ \W_L\biggl(\prod_{a=l+1}^L (\D_{i,a} + \D''_{i,a})\W_{a}\biggr)(\D_{i,l}+\D''_{i,l})- \W_L\biggl(\prod_{a=l+1}^L \D_{i,a}\W_{a}\biggr)\D_{i,l}\biggr]\W'_{l}\h'_{i,l-1} &\nonumber\\
        & \leq \norm{\losstilde_i}\Biggl\| \W_L\prod_{a=l+1}^L \bigl((\D_{i,a} + \D''_{i,a})\W_{a}\bigr)(\D_{i,l}+\D''_{i,l}) - \W_L\prod_{a=l+1}^L (\D_{i,a}\W_{a})\D_{i,l}\Biggr\| \norm{\W'_{l}\h'_{i,l-1}} \nonumber \\
        & \leq \norm{\losstilde_i}_2 \cdot O\biggl(\frac{\omega^{1/3}L^2 \sqrt{m\log m}}{\sqrt{d}}\biggr) \cdot \norm{\W'_l}_F \label{eq:Q-1-bound}
    \end{align}
    and for the second term \(Q_2^l\), when \(l = L\), we have 
    \begin{displaymath}
        Q_2^l \leq \norm{\losstilde_i}_2\cdot\norm{\W'_L}_2\cdot\norm{\h'_{i,L-1} -\h_{i,L-1}}_2 \leq \norm{\losstilde_i}_2\cdot O\biggl(\frac{\omega^{1/3}L^2\sqrt{\log m}}{\sqrt{d}}\biggr)\cdot\norm{\W'}_F
    \end{displaymath}
    for \(l\leq L-1\), we calculate
    \begin{align*}
        Q_2^l & = \losstilde_i^{\top} \biggl[\W_L\biggl(\prod_{a=l+1}^L \D_{i,a}\W_{a}\biggr)\D_{i,l}\W'_{l} \biggr](\h'_{i,l-1} -\h_{i,l-1})\\
        & =  \losstilde_i^{\top}\biggl[\W_L\biggl(\prod_{a=l+1}^{L-1} \D_{i,a}\W_{a}\biggr)\D_{i,l}-\B\biggl(\prod_{a=l+1}^L \D^{0}_{i,a}\W^{(0)}_{a}\biggr)\D^{(0)}_{i,l} \biggr]\W'_{l} (\h'_{i,l-1} -\h_{i,l-1})\\
        &\qquad +  \losstilde_i^{\top} \W_L\biggl(\prod_{a=l+1}^{L-1} \D^{(0)}_{i,a}\W^{(0)}_{a}\biggr)\D^{(0)}_{i,l}\W'_{l} (\h'_{i,l-1} -\h_{i,l-1})\\
        & \leq \norm{\losstilde_i}_2 \cdot O\biggl(\sqrt{\frac{m}{d}}  + \frac{\omega^{1/3}L^2 \sqrt{m\log m}}{\sqrt{d}}\biggr)\cdot\norm{\W'_l}_F\cdot\norm{\h'_{i,l-1} -\h_{i,l-1}}_2 
    \end{align*}
    Again from Lemma \ref{lem:output-perturbation} that \(\norm{\h'_{i,l} - \h_{i,l}} \leq O(L^{3/2})\norm{\W}_2\) and our choice of \(\omega\) we have 
    \begin{equation}
        Q_2^l \leq \norm{\losstilde_i}_2 \cdot O\biggl(\frac{\omega^{1/3}L^2 \sqrt{m\log m}}{\sqrt{d}}\biggr)\cdot \norm{\W'_l}_F \label{eq:Q-2-bound}
    \end{equation}
    Combining \eqref{eq:Q-1-bound} and \eqref{eq:Q-2-bound}, we have 
    \begin{equation}\label{eq:F-1-bound}
        \begin{split}
        F_1 = \frac{1}{n}\sum_{i=1}^n F_1^i &\leq \frac{1}{n}\sum_{i=1}^n \sum_{l =0}^L\norm{\losstilde_i}_2\cdot O\biggl(\frac{\omega^{1/3}L^2 \sqrt{m\log m}}{\sqrt{d}}\biggr)\cdot \norm{\W'_l}_F \\
        & \leq \norm{\losstilde}_2\cdot O\biggl(\frac{\omega^{1/3}L^2 \sqrt{m\log m}}{\sqrt{nd}}\biggr)\cdot \norm{\W'}_F
        \end{split}
    \end{equation}
    which proves the case of \(F_1\).

    \noindent \textbf{Step 2. The case of \(F_2\):}\\
    we rearrange the first order term \(\vbrack{\nabla_{\theta} L_S(\W,\theta),\theta'}\) to a more operable form: first we calculate, as in \hyperref[subsec:key-calculations]{Section \ref*{subsec:key-calculations}},
    \begin{align} \label{eq:first-order-term}
        &\vbrack{\nabla_{\theta} L_S(\W,\theta),\theta'} \nonumber\\
        = \ & \biggl\langle \frac{1}{n}\sum_{i=1}^n \E^{Neg} \biggl[\nabla_{\theta}\ell(\W,\theta,\x_i,\{\x_{i,j}\}_{j=1}^k)\biggr] , \theta'\biggr\rangle \nonumber \\
        = \ & \frac{1}{n}\sum_{i=1}^n \frac{1}{\binom{n-1}{k}} \sum_{\{\x_{i,j}\}_{j=1}^k \subset S^{\setminus i}} \sum_{j=1}^k \frac{\exp(q_i^{\top}z_{i,j})}{1 + \sum_{s=1}^k \exp(q_i^{\top} z_{i,s})} \bigl\langle\nabla_{\theta}\, q_i^{\top}(f^k_{\theta}(\x_{i,j}) - f^k_{\theta}(\x_{i})), \theta' \bigr\rangle \nonumber\\
        = \ & \frac{1}{n}\sum_{i=1}^n \sum_{j\neq i}  \losshat(\x_i,\x_j)^{\top}  \Bigl(\nabla_{\theta} f^k_{\theta}(\x_{j})(\theta') - \nabla_{\theta}f^k_{\theta}(\x_{i}) (\theta')\Bigr) \nonumber\\
        = \ & \frac{1}{n}\sum_{i=1}^n \sum_{j \neq i} \bigl(\losshat(\x_j,\x_i)- \losshat(\x_i,\x_j)\bigr)^{\top}  \nabla_{\theta} f^k_{\theta}(\x_{i}) (\theta')  \nonumber\\
        = \ & \frac{1}{n}\sum_{i=1}^n \losshat_i^{\top} \nabla_{\theta} f^k_{\theta}(\x_{i}) (\theta') 
    \end{align}
    where we have denote 
    \begin{displaymath}
        \nabla_{\theta}f^k_{\theta}(\x_i) (\theta') := \sum_{l=0}^L \theta_L\D_{i,L-1}\theta_{L-1}\cdots\theta_{l+1}\D_{i,l}\,\theta'_{l}\,\h_{i,l-1}
    \end{displaymath}
    Now we let \(F_2 = \frac{1}{n}\sum_{i=1}^n F_2^i\), where
    \begin{equation}\label{eq:F-2-i}
        F_2^i = \losshat_i^{\top} \biggl[ (\theta_L + \theta'_L) \h'_{i,L} - \theta_L\h_{i,L} - \sum_{l=0}^L\theta_L\biggl( \prod_{a=l+1}^{L-1} \D_{i,L}\theta_a\biggr)\D_{i,l}\,\theta'_{l}\,\h_{i,l-1} \biggr]
    \end{equation}
    by Lemma \ref{lem:output-perturbation}, we have
    \begin{align*}
        (\theta_L + \theta'_L) \h'_{i,L-1} - \theta_L\h_{i,L-1} &= \theta'_L\h'_{i,L-1} + \sum_{l=0}^{L-1} \biggl(\prod_{a=l+1}^{L-1} (\D_{i,a} + \D''_{i,a})\theta_{a}\biggr)(\D_{i,l}+\D''_{i,l})\,\theta'_{l}\,\h'_{i,l-1}
    \end{align*}
    Substitute this into equation \eqref{eq:F-2-i}, we can further calculate 
    \begin{align*}
        F_2^i &=\losshat_i^{\top}\biggl[ \theta'_L\h'_{i,L-1} + \sum_{l=1}^{L-1}\theta_L\biggl(\prod_{a=l+1}^{L-1} (\D_{i,a} + \D''_{i,a})\theta_{a}\biggr)(\D_{i,l}+\D''_{i,l})\theta'_{l}\,\h'_{i,l-1}\biggr] \\
        & \qquad - \losshat_i^{\top}\sum_{l=0}^L\theta_L\biggl( \prod_{a=l+1}^{L-1} \D_{i,a}\theta_a\biggr)\D_{i,l}\,\theta'_{l}\,\h_{i,l-1}\\
        & = \sum_{l=0}^{L-1}\losshat_i^{\top} \Biggl[ \theta_L\biggl(\prod_{a=l+1}^{L-1} (\D_{i,a} + \D''_{i,a})\theta_{a}\biggr)(\D_{i,l}+\D''_{i,l})\, \theta'_{l}\, - \theta_L\biggl(\prod_{a=l+1}^{L-1} \D_{i,a}\theta_{a}\biggr)\D_{i,l}\, \theta'_{l}\, \Biggr]\h'_{i,l-1} \\
        & \quad + \sum_{l=0}^L \losshat_i^{\top}\theta_L\biggl(\prod_{a=l+1}^L \D_{i,a}\theta_{a}\biggr)\D_{i,l}\, \theta'_{l}\, \bigl(\h'_{i,l-1} -\h_{i,l-1}\bigr)\\
        & = F^i_{2,1} + F^i_{2,2}
    \end{align*}
    Now we apply Cauchy-Schwarz inequality to \(F^i_{2,1}\) and get
    \begin{align*}
        F^i_{2,1} \stackrel{\text{\ding{172}}}\leq \ &  \sum_{l=0}^{L-1} \|\losshat_i\|_2 \biggl\|\theta_L\biggl(\prod_{a=l+1}^L (\D_{i,a} + \D''_{i,a})\theta_{a}\biggr)(\D_{i,l}+\D''_{i,l}) - \theta_L\biggl(\prod_{a=l+1}^L \D_{i,a}\theta_{a}\biggr)\D_{i,l} \biggr\|_2\norm{\theta'_{l}}_F\\
        \stackrel{\text{\ding{173}}}\leq \ & \norm{\losshat_i}\cdot O\biggl(\frac{\omega^{1/3}L^2\sqrt{m\log m}}{\sqrt{d}}\biggr)\cdot \norm{\theta'}_F
    \end{align*}
    where in \ding{172} we have used Lemma \ref{Lemma-7.1} and Lemma \ref{lem:output-boundedness} to obtain the boundedness of \(\norm{\h'_{i,l-1}}_2\), and in \ding{173} we have used Lemma \ref{lem:backward-perturbation}. On the other hand, we have
    \begin{align*}
        F^i_{2,2} &\leq \sum_{l=0}^{L-1} \|\losshat_i\|_2\cdot O\biggl(\sqrt{\frac{m}{d}}+ \frac{\tau^{1/3}L^2\sqrt{m\log m}}{\sqrt{d}}\biggr)\cdot \norm{\theta'_l}_F\cdot\norm{\h'_{i,l-1} -\h_{i,l-1}}_2\\
        & \leq \|\losshat_i\|_2\cdot O\biggl(\frac{\tau^{1/3}L^2\sqrt{m\log m}}{\sqrt{d}}\biggr)\cdot \norm{\theta'}_F
    \end{align*}
    via applying Lemma \ref{Lemma-7.1}, Lemma \ref{lem:backward-perturbation} and Lemma \ref{lem:output-perturbation}, and also by our choice of \(\tau\). This implies 
    \begin{align*}
        F_2 = \frac{1}{n}\sum_{i=1}^n F_2^i =\frac{1}{n}\sum_{i=1}^n (F^i_{2,1}+F^i_{2,2}) \leq  \|\losshat\|_2\cdot O\biggl(\frac{\tau^{1/3}L^2\sqrt{m\log m}}{\sqrt{nd}}\biggr)\cdot \norm{\theta'}_F
    \end{align*}
    And thus we conclude the proof.
\end{proof}

\end{document}